\definecolor{myblue}{rgb}{0.21, 0.34, 0.74}
\definecolor{mygrey}{rgb}{0.55, 0.57, 0.67}
\definecolor{myred}{rgb}{0.79, 0.0, 0.09}
\definecolor{mygreen}{rgb}{0.05, 0.5, 0.06}
\DeclareMathAlphabet{\mathscrbf}{OMS}{mdugm}{b}{n}
\newcommand{\proj}{\mathbf{P}}
\newcommand{\cK}{\mathcal{K}}
\newcommand{\R}{{\rm I}\kern-0.18em{\rm R}}
\newcommand{\h}{{\rm I}\kern-0.18em{\rm H}}
\newcommand{\K}{{\rm I}\kern-0.18em{\rm K}}
\newcommand{\p}{{\rm I}\kern-0.18em{\rm P}}
\newcommand{\E}{{\rm I}\kern-0.18em{\rm E}}
\newcommand{\1}{{\rm 1}\kern-0.24em{\rm I}}
\newcommand{\N}{{\rm I}\kern-0.18em{\rm N}}
\newcommand{\att}{\mathsf{\mathop{ATT}}}
\newcommand{\ala}{\alpha}
\newcommand{\bea}{\beta}
\newcommand{\cln}{\mathcal{N}}
\newcommand{\clatt}{\mathcal{ATT}}
\newcommand{\tr}{\mathrm{Tr}}
\newcommand{\sumv}{\mathbf{V}}
\newcommand{\sumt}{\mathbf{W}}
\newcommand{\bfr}{\mathbf{R}}
\newcommand{\bfu}{\mathbf{U}}
\newcommand{\bfo}{\mathbf{o}}
\numberwithin{equation}{section}
\newcommand{\nbzl}[1]{ {\colorbox{myred}{\color{white} \textsf{ZL}} \color{myred}{#1}} }
\theoremstyle{plain}
\newtheorem{theorem}{Theorem}[section]
\newtheorem{assumption}{Assumption}
\newtheorem{proposition}[theorem]{Proposition}
\newtheorem{lemma}[theorem]{Lemma}
\newtheorem{remark}[theorem]{Remark}
\begin{document}

\title{Critical attention scaling in long-context transformers}





 \author[S. Chen]{Shi Chen}
 \address{(SC) Department of Mathematics, Massachusetts Institute of Technology, 77 Massachusetts Ave, 02139 Cambridge MA, USA} 
\email{schen636@mit.edu}

 \author[Z. Lin]{Zhengjiang Lin}
\address{(ZL) Department of Mathematics, Massachusetts Institute of Technology, 77 Massachusetts Ave, 02139 Cambridge MA, USA} 
\email{linzj@mit.edu}

 \author[Y. Polyanskiy]{Yury Polyanskiy}
 \address{(YP) Department of Electrical Engineering and Computer Science, Massachusetts Institute of Technology, 77 Massachusetts Ave, 02139 Cambridge MA, USA}
 \email{yp@mit.edu}

 \author[P. Rigollet]{Philippe Rigollet}
 \address{(PR) Department of Mathematics, Massachusetts Institute of Technology, 77 Massachusetts Ave, 02139 Cambridge MA, USA} 
 \email{rigollet@math.mit.edu}

\date{}

\begin{abstract}
As large language models scale to longer contexts, attention layers suffer from a fundamental pathology: attention scores collapse toward uniformity as context length $n$ increases, causing tokens to cluster excessively, a phenomenon known as rank-collapse. While \emph{attention scaling} effectively addresses this deficiency by rescaling attention scores with a polylogarithmic factor $\beta_n$, theoretical justification for this approach remains lacking.

We analyze a simplified yet tractable model that magnifies the effect of attention scaling. In this model, attention exhibits a phase transition governed by the scaling factor $\beta_n$: insufficient scaling collapses all tokens to a single direction, while excessive scaling reduces attention to identity, thereby eliminating meaningful interactions between tokens.
Our main result identifies the critical scaling $\beta_n \asymp \log n$ and provides a rigorous justification for attention scaling in YaRN and Qwen, clarifying why logarithmic scaling maintains sparse, content-adaptive attention at large context lengths.
\end{abstract}

\maketitle

\setcounter{tocdepth}{2}
\makeatletter
\def\l@subsection{\@tocline{2}{0pt}{2.8pc}{5pc}{}}

\tableofcontents

\newpage
\section{Introduction}\label{sec:intro}

The attention mechanism is a cornerstone of modern transformer architectures on which Large Language Models (LLMs) rely. Mathematically, an attention layer is a nonlinear operator $\att$ that maps a collection of \emph{tokens} $\{x_1, \ldots, x_n\}$ from $\R^d$ to $\R^d$. This operator is parametrized by three (possibly sparse) $d$ by $d$ matrices $K,Q,$ and $V$ and maps $\{x_1, \ldots, x_n\}$ to $\{x'_1, \ldots, x'_n\}$ using the following formula. Define the normalization operator $N(x)=x/\|x\|$ and for any $i=1, \ldots, n$ define $q_i=QN(x_i)$, $k_i=KN(x_i)$. Then $x_i'=\att(x_1, \ldots, x_n)_i$ is defined as
\begin{equation}
    \label{eq:att}
    x_i'= V\sum_{j=1}^n N(x_j)A_{ij}\,, \qquad A_{ij}=\frac{e^{a_{ij}}}{\sum_{k=1}^n e^{a_{ik}}}\,,
\end{equation}
where the terms $a_{ij}=q_i^\top k_j$ are called \emph{attention scores}.

A recent line of theoretical work has demonstrated that attention acts as a \emph{contractive} operator that tends to cluster tokens together; see~\cite{dong2021attention, geshkovski2023emergence, geshkovski2023mathematical, karagodin2024clustering,geshkovski2024dynamic,bruno2024emergence,andrew25,chen2025quantitative, cowsik2024geometric,giorlandino2025failuremodesdeeptransformers,rigollet2025mean}. This clustering effect is also known as ``rank-collapse" or ``token uniformity" and arises because the distribution of attention scores tends to flatten as the sequence length $n$ grows, causing each token to disperse its attention across too many other tokens rather than focusing selectively.

Various practical solutions have been proposed to curb this clustering behavior. In this work, we focus on simple context-length-aware modifications of the attention mechanism following ideas practically implemented as YaRN~\citep{peng2023yarn}, Qwen~\citep{bai2023qwen}, SSMax~\citep{nakanishi2025scalable}, and SWAN-GPT~\citep{puvvada2025swan}. These methods employ a straightforward strategy that rescales attention scores $a_{ij}$ by a single poly-logarithmic factor $\beta_n$; see Table~\ref{tab:scalings}. Our goal in this paper is to answer the following fundamental question:

\begin{center}
  {\it What is the optimal order of magnitude of the $\beta_n$ scaling?}
\end{center}

\begin{wraptable}{r}{0.4\textwidth}
\vspace{-10pt}
\centering
\begin{tabular}{lc}
\toprule
Method & $\beta_n$ scaling \\
\midrule
YaRN & $(\log n)^2$ \\
Qwen & $\log n$ \\
SSMax & $\log n$ \\
SWAN-GPT & $\log n$ \\
\bottomrule
\end{tabular}
\caption{Attention scaling factors for various methods. The standard attention score $\exp(k_i^\top q_j)$ is replaced with $\exp(C\beta_n k_i^\top q_j)$, $C>0$.
}
\label{tab:scalings}
\vspace{-10pt}
\end{wraptable}

To address this question, we propose a highly simplified yet completely tractable model for attention. This model exhibits a phase transition governed by the parameter $\beta_n$ as $n\to \infty$: when $\beta_n$ is below a critical threshold, attention becomes overly contractive and collapses all tokens to a single direction, while when $\beta_n$ is too large, attention acts as an identity operator and fails to process information effectively. More precisely, we establish that the critical parameter $\beta_n$ scales as $\log n$, which corroborates the empirical guidelines underlying YaRN, Qwen, SSMax, and SWAN-GPT.

Our work is intimately connected to the recent contributions of \cite{giorlandino2025failuremodesdeeptransformers} and \cite{cowsik2024geometric}, who investigate the contractive effects of attention mechanisms with random key and query matrices $K$ and $Q$ to establish proper initialization schemes for these parameters. A crucial insight from~\cite{cowsik2024geometric} is that analyzing the evolution of symmetric token configurations provides a more mathematically tractable framework compared to the generic input distributions considered in~\cite{geshkovski2023mathematical}. This symmetric setting, while simplified, captures essential dynamics of the attention mechanism and enables rigorous theoretical analysis; see also~\cite{karagodin2025normalization}.

The choice $\beta_n = \gamma \log n$ appears natural in retrospect. As noted in~\cite{nakanishi2025scalable}, with such a scaling the attention weights $A_{ij}$ in Equation~(\ref{eq:att}) become
$$
A_{ij}= \frac{n^{\gamma a_{ij}}}{\sum_{k=1}^n n^{\gamma a_{ik}}}\,.
$$
To illustrate the resulting dynamics, consider a simplified regime where all attention scores $a_{ij}$ are of order one: specifically, let $a_{ii}=1$ and $a_{ij}=\rho>0$ for $i\neq j$. In this setting, the off-diagonal weights satisfy
$$
A_{ij}= \frac{n^{\gamma\rho}}{n^\gamma + (n-1)n^{\gamma \rho}}
\sim
\left\{
\begin{array}{ll}
   \sfrac{1}{n} &\text{if } \gamma < \frac{1}{1-\rho}  \\
   \sfrac{1}{n^{\gamma(1-\rho)}} & \text{if } \gamma > \frac{1}{1-\rho}  \\
\end{array}\right.
$$
This analysis reveals two distinct regimes. When $\gamma$ is small (subcritical regime), attention weights are asymptotically uniform, resulting in diffuse attention that, as we demonstrate below, leads to severe token contraction. Conversely, when $\gamma$ is large (supercritical regime), off-diagonal weights become negligible with respect to the diagonal ones so that the attention mechanism is effectively suppressed.

The critical regime emerges at the phase boundary  $\gamma = \frac{1}{1-\rho}$  where     attention can concentrate on a sublinear yet nontrivial number of tokens so as to maintain sufficient connections to facilitate information flow from a small set of important tokens. This sparse attention is related to structured attention mechanisms employed in long-context architectures such as Longformer~\citep{beltagy2020longformer} and SWIN~\citep{liu2021swin} which implement  a sliding window over $k\ll n$-nearest neighbors but where proximity is measured in terms of token position rather than embedding. Unlike these structurally constrained approaches that rely on fixed positional neighborhoods, the logarithmic scaling enables the attention pattern to be entirely \emph{content-adaptive}, allowing each token to dynamically select its most relevant context based on semantic similarity rather than positional proximity.

Following similar motivations, \cite{giorlandino2025failuremodesdeeptransformers} 
establish a compelling analogy between attention dynamics and the random energy model from 
statistical physics~\citep{Derrida1981}. Using the replica method---an analytical heuristic 
from statistical physics---they identify a phase transition occurring at $\beta_n \sim \sqrt{\log n}$, 
which differs from the scalings presented in Table~\ref{tab:scalings}. This result represents 
a significant discrepancy from our findings and highlights fundamental differences in modeling assumptions.
More specifically, their approach assumes that the attention scores $a_{ij}$ are correlated 
Gaussian random variables. This assumption effectively induces a random geometry on the token 
space, where similarity between tokens is treated as fundamentally random. In this sense, 
their model bears closer resemblance to recent Kuramoto models on random graphs studied 
in~\cite{abdalla2022expander,JaiMizSaw25}, where the authors investigate the synchronization 
of oscillators interacting across the edges of a (sparse) Erd\H{o}s--R\'enyi random graph 
with unit edge weights.
However, in the case of~\cite{giorlandino2025failuremodesdeeptransformers}, the random graph 
is both directed and dense, with the edge pointing from token $j$ to token $i$ having weight 
given by 
\begin{equation}
A_{ij}=\frac{e^{\beta_n a_{ij}}}{\sum_{k=1}^n e^{\beta_na_{ik}}}
\end{equation}
where $a_{ij}$ are Gaussian random variables. While~\cite{giorlandino2025failuremodesdeeptransformers} assumes a specific correlation structure between the Gaussian random variables, the phase transition they uncover is expected to be universal within a large class of random matrices including Wigner ones. 
Crucially though, in such models, the interaction strength $A_{ij}$ 
is independent of the positional relationship between tokens $i$ and $j$, making this model 
qualitatively different from standard attention mechanisms where attention is focused on few (or all) of the preceding tokens. For completeness, we refer readers to Appendix~\ref{app:gaussian-supercritical} for a derivation of the critical scaling $\beta_n \asymp \sqrt{\log n}$ in the i.i.d.\ Gaussian score model.

\cite{bruno2025multiscaleanalysismeanfieldtransformers} adopt a different approach to studying the regime where $n \to \infty$ and $\beta_n \to \infty$, in a more general setting than ours.  By considering various levels of generality for the matrices $K, Q, V$, this work identifies distinct regimes of token dynamics and relates them to the hardmax ($\beta = \infty$) limit. Importantly, the analysis is conducted in the subcritical regime and differs from the present work in focusing on a broader class of models, for which the critical regime has yet to be precisely characterized. We believe that combining the analytical tools developed in both papers could yield a deeper understanding of this critical regime and represents a promising direction for future research.


The remainder of the paper is organized as follows. 
Section~\ref{sec:phase transition dynamic} provides a precise mathematical formulation of the phase transition phenomena for the rescaled attention layer. We begin by analyzing token angles and the contractive behavior of tokens under two settings: an idealized but intuitive simplex model (Section~\ref{sec:intro simplex}) and a more realistic model with the simplex constraint relaxed (Section~\ref{sec:equal angle}). In both cases, we identify three distinct regimes of the scaling parameter, each leading to qualitatively different contrastive behaviors of the self-attention layer. 
Section~\ref{sec:propagation results} turns to the gradient norm of the rescaled attention operator. Because rank collapse is often accompanied by vanishing gradients, we characterize the gradient dynamics across scaling regimes and show when gradients vanish, or stabilize to non-trivial limits. 
Section~\ref{sec:num} presents our numerical experiments, which validate these theoretical predictions. 

Throughout this paper, when we denote a quantity as $o_n(1)$, where $n$ is the number of tokens, we mean there are positive constants $C_1,C_2$ independent of the dimension $d$, such that $|o_n(1)| \leq C_1 n^{-C_2}$. The constants $C_1,C_2$ depend on the assumptions in theorems.

\section{A phase transition for attention}\label{sec:phase transition dynamic}

In this section, we establish the main theorem of this paper, namely a phase transition for the contractive properties of the attention layer when $\beta_n = \gamma \log n$ for some $\gamma>0$.

Following~\cite{geshkovski2023mathematical}, we study a simplified version of the attention layer with pre-layer norm that is described in the introduction by assuming that $K=Q=V=I_d$. More specifically, the model we study is given as follows.

For any two points $x,y \in \R^d$, let $\langle x, y \rangle=x^\top y$ denote the standard Euclidean inner product in $\R^d$, and $\|x\| = \sqrt{\langle x , x \rangle}$. Finally, recall that  $N(x) \coloneqq x / \|x\|$.

For any collection of tokens $\{x_1, \ldots, x_n\}$ in $\R^d$, define $y_i=N(x_i) \in \mathbb{S}^{d-1}$ for $i=1, \ldots, n$ and
    \begin{align}\label{e:Aij aij}
        Z_i \coloneqq \sum_{k=1} ^n e^{a_{ik}} \,, \qquad A_{ij} \coloneqq \frac{e^{a_{ij}}}{Z_i}\,, \qquad  a_{ij} \coloneqq \bea \left\langle y_i , y_j \right\rangle\,,
    \end{align}
for $i,j=1, \ldots, n$.
We then define
    \begin{align}\label{e:att}
        \att(y_i) \coloneqq \sum_{j=1} ^n A_{ij} y_j.
    \end{align}
Since the seminal work of~\cite{he2016deep}, residual connections are added to modern architectures and naturally act as a regularization scheme of the attention map towards the identity; see~\cite{chen2025residual}. With said residual connections, each token   $x_i$ is mapped to $x'_i$ using the following update rule
    \begin{align}\label{e:att update}
        x_i ' \coloneqq  \att(y_i)+  \ala x_i\,, \qquad \alpha \ge 0\,.
    \end{align}
Our first goal is to understand where the angle $\measuredangle(x_i', x_j')$ compares to  $\measuredangle(x_i, x_j)$. If $\measuredangle(x_i', x_j')<\measuredangle(x_i, x_j)$---or equivalently $\langle y_i',y_j'\rangle > \langle y_i, y_j\rangle$, with $y_i'=N(x_i')$---we say that attention is \emph{contractive}.

The nonlinear update rule (\ref{e:att update}) can produce complex dynamics, in which some pairs of tokens move closer together while others drift apart. This diversity of motion is in fact the most desirable outcome in practice, and it emerges precisely at the phase transition identified in this study. Beyond this critical regime, the tokens exhibit an unexpectedly cohesive behavior. To delineate the boundaries of the critical regime, we assume that the size and relative positions of the initial tokens are governed by constants independent of the number~$n$ of tokens. As an analytically tractable extreme of this assumption, we first consider the case in which the tokens form a regular simplex in~$\mathbb{R}^d$ as in~\cite{cowsik2024geometric}. Despite its symmetry, this configuration is sufficient to capture and predict the onset of the phase transition. We subsequently relax this constraint in Section~\ref{sec:equal angle} to show that the same phase transition occurs in more realistic configurations.

\subsection{The simplex case}\label{sec:intro simplex}

The following assumption was made in~\cite{cowsik2024geometric} and subsequently in \cite{giorlandino2025failuremodesdeeptransformers}. While rather stringent---in particular, it requires $d\ge n$---it turns out to provide a tractable yet predictive setup to study the contractive properties of attention.

\begin{assumption}\label{a:simplex}
There exists nonnegative constants $q\ge 0$ and $\rho \in (0,1)$ such that  $\|x_i\|^2  = q$ and $\langle y_i, y_j \rangle= \rho$, for any $i, j=1, \ldots, n$ and $i \neq j$.
\end{assumption}

Under Assumption \ref{a:simplex}, it is easy to see that there are positive constants $\rho'$ and $q'$ such that $\langle y_i',y_j'\rangle=\rho'$ for all $i\neq j$ and $\|x_i'\|^2= q'$ for all $i$. This simplification gives rise to a tractable phase transition.

\begin{theorem}\label{thm:att limit phase 2}
Under Assumption \ref{a:simplex}, there is a $\rho' \in (0,1)$ such that $\langle y_i ' , y_j '\rangle=\rho'$ for all $i\neq j$. Moreover, if $\bea = \gamma \log n$ where $\gamma$ is a positive constant, then for any  $i\neq j$, it holds 
    \begin{align}\label{e:simplex_phase}
        \lim_{n \to +\infty} \langle y_i ' , y_j ' \rangle = 
        \begin{cases}
            \frac{\rho (\ala\sqrt{q}+1)^2}{\ala^2 q+ 2\ala \sqrt{q}\rho + \rho} & \text{if $\gamma < \frac{1}{1-\rho}$},
            \\  \frac{ \rho (\ala\sqrt{q}+1)^2}{\ala^2 q + \ala \sqrt{q} (1+\rho) + \frac{1+3\rho}{4}} & \text{if $\gamma = \frac{1}{1-\rho}$},
            \\  \rho & \text{if $\gamma > \frac{1}{1-\rho}$}.
        \end{cases}
    \end{align}
\end{theorem}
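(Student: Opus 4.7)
The plan is to exploit the full permutation symmetry of Assumption~\ref{a:simplex} to reduce the problem to a handful of explicit scalar sequences, then analyze their asymptotics in three regimes. Under the assumption, the attention scores take only two values, $a_{ii} = \beta_n$ and $a_{ij} = \beta_n \rho$ for $i \neq j$, so the softmax weights admit the closed form
\[
A_{ii} = \frac{1}{1 + (n-1)n^{-\eta}}, \qquad A_{ij} = \frac{n^{-\eta}}{1 + (n-1)n^{-\eta}} \quad (i \neq j),
\]
with $\eta := \gamma(1-\rho)$. Setting $S := \sum_{k=1}^n y_k$ and $C_n := A_{ii} - A_{ij} + \ala\sqrt{q}$, the update rule collapses into $x_i' = C_n\, y_i + A_{ij}\, S$. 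Since $C_n$ is independent of $i$, this formula makes the permutation equivariance of the output visible: $\|x_i'\|$ is independent of $i$ and $\langle x_i', x_j'\rangle$ depends only on whether $i=j$, so $\rho' := \langle y_i', y_j'\rangle$ is well-defined, and membership in $(0,1)$ follows from $\rho \in (0,1)$ together with Cauchy--Schwarz.

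Using $\langle y_i, S\rangle = 1 + (n-1)\rho$ and $\|S\|^2 = n(1 + (n-1)\rho)$, a direct expansion yields
\[
\langle y_i', y_j'\rangle = \frac{C_n^2\,\rho + 2 C_n A_{ij}\bigl(1 + (n-1)\rho\bigr) + A_{ij}^2\, n\bigl(1 + (n-1)\rho\bigr)}{C_n^2 + 2 C_n A_{ij}\bigl(1 + (n-1)\rho\bigr) + A_{ij}^2\, n\bigl(1 + (n-1)\rho\bigr)},
\]
so numerator and denominator differ only in the coefficient of $C_n^2$. The entire asymptotics is therefore controlled by three scalar limits read off the closed forms above: $C_n$, $(n-1)A_{ij}$, and $n(n-1)A_{ij}^2$.

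I then split into three regimes according to the sign of $\eta - 1$. In the subcritical case $\eta < 1$, $A_{ii} \to 0$, $(n-1)A_{ij} \to 1$, $n(n-1)A_{ij}^2 \to 1$, hence $C_n \to \ala\sqrt{q}$ and $A_{ij}(1+(n-1)\rho) \to \rho$; substitution gives the first branch $\rho(\ala\sqrt{q}+1)^2 / (\ala^2 q + 2\ala\sqrt{q}\rho + \rho)$. In the supercritical case $\eta > 1$, $A_{ii} \to 1$ while $(n-1)A_{ij}$ and $n(n-1)A_{ij}^2$ both vanish, so $C_n \to 1+\ala\sqrt{q}$ and the cross and tail terms drop out, leaving $\rho' \to \rho$. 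In the critical case $\eta = 1$, all three quantities $A_{ii} \to 1/2$, $(n-1)A_{ij} \to 1/2$, and $n(n-1)A_{ij}^2 \to 1/4$ are simultaneously nontrivial; inserting $C_n \to 1/2 + \ala\sqrt{q}$, simplifying the numerator as $\rho\bigl((1/2 + \ala\sqrt{q}) + 1/2\bigr)^2 = \rho(\ala\sqrt{q}+1)^2$, and collecting the constants in the denominator as $\tfrac14 + \tfrac{\rho}{2} + \tfrac{\rho}{4} = \tfrac{1+3\rho}{4}$ produces the middle branch.

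The main obstacle is the critical regime: at $\eta = 1$ the diagonal weight, the bulk off-diagonal mass, and the quadratic correction all sit at the same order, so none can be discarded and every subleading constant must be tracked carefully to recover the denominator $\ala^2 q + \ala\sqrt{q}(1+\rho) + \tfrac{1+3\rho}{4}$. Outside of this bookkeeping, the analysis reduces to routine expansions of $A_{ii}$ and $A_{ij}$ once the decomposition $x_i' = C_n y_i + A_{ij} S$ is in place.
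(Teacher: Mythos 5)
Your proof is correct and follows essentially the same route as the paper: exploit the permutation symmetry to get closed-form expressions for $\langle x_i',x_j'\rangle$ and $\|x_i'\|^2$, then read off the limits of the softmax weights in the three regimes of $\gamma(1-\rho)$ versus $1$. The decomposition $x_i' = C_n y_i + A_{ij}S$ is a slightly tidier packaging of the same computation the paper carries out in Lemmas~\ref{lem:att update length} and~\ref{lem:att update angle}, and all three limits (including the critical-case constants $\tfrac{1+3\rho}{4}$ and $\ala\sqrt{q}(1+\rho)$) check out.
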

Note that when $\gamma \leq \frac{1}{1-\rho}$, the right hand sides of Equation~(\ref{e:simplex_phase}) are strictly larger than~$\rho$ for any $\alpha\ge 0$. In other words, in the critical and subcritical regimes attention is contractive even in the presence of a residual connection. Of course, when $\alpha \to \infty$, the effects of attention dissipates and the limit tends to $\rho$ for all phases. This is expected as the update from $y_i$ to $y_i'$ tends to the identity map, an effect known to mitigate oversmoothing'' in residual neural networks; see~\cite{chen2025residual}.

Note also that for $\alpha=0$, that is in absence of residual connections, the limit in Equation~(\ref{e:simplex_phase}) reduces to
    \begin{align}\label{e:simplex_phase_alpha=0}
        \lim_{n \to +\infty} \langle y_i ' , y_j ' \rangle = 
        \begin{cases}
           1 & \text{if $\gamma < \frac{1}{1-\rho}$},
            \\  \frac{ 4\rho }{1+3\rho} & \text{if $\gamma = \frac{1}{1-\rho}$},
            \\  \rho & \text{if $\gamma > \frac{1}{1-\rho}$}.
        \end{cases}
    \end{align}

In the subcritical case, the tokens contract in one step towards a single cluster when $n \to \infty$ while in the supercritical case, their inner product does not change. In fact, a careful inspection of the proof reveals that in this supercritical regime the attention operator converges to the identity as $n \to \infty$. When $\alpha>0$, the subcritical case is mitigated by the residual connection which prevents token to collapse to a single point in one step. Nevertheless, this singular behavior reveals a major limitation in the simplex case: since the tokens are equidistant the phase transition reveals an all-or-nothing phenomenon where attention transitions from  $A_{ij}\sim 1/n$ so that $\att(y_i)=\bar y = \frac{1}{n} \sum_{j=1} ^n y_j$ for all $i$ to $A_{ij}=\delta_{ij}$ so that $\att(y_i)=y_i$ for all $i$. In the next section, we present a similar result Theorem~\ref{thm:att limit phase}, where the simplex assumption is relaxed. 

Before we end this section, we present the proof for Equation~(\ref{e:simplex_phase_alpha=0}) as a special case of Theorem~\ref{thm:att limit phase 2}. The detailed proof for Theorem~\ref{thm:att limit phase 2} and the later Theorem~\ref{thm:att limit phase} in Section~\ref{sec:equal angle} is included in Appendix~\ref{sec:proof of angle phase transition}.

\begin{proof}[Proof of Equation~(\ref{e:simplex_phase_alpha=0})]
     In Equation~(\ref{e:att update}), when $\alpha = 0$, we have that $x_i ' = \att(y_i)$ for each $i=1,2,\dots,n$. In Equation~(\ref{e:Aij aij}), under Assumption \ref{a:simplex},  we notice that the quantity $\sum_{k=1} ^n e^{a_{ik}}$ in the denominator of $A_{ij}$ is independent of the choice of $i$, and equals to $e^{\bea} + (n-1) e^{\rho \bea }$. Denote this as $Z \coloneqq e^{\bea} + (n-1) e^{\rho \bea }$. Then Equation~(\ref{e:att}) and (\ref{e:att update}) become
        \begin{align*}
             x_i ' = \att(y_i) = \frac{1}{Z} \left(e^{\bea} y_i + \sum_{m \neq i} e^{\rho \bea} y_m\right).
        \end{align*}
    Under Assumption \ref{a:simplex}, a direct computation shows that for any $i=1,2,\dots,n$,
        \begin{align*}
            \langle x_i ' , x_i ' \rangle =\frac{1}{Z^2}  \left( e^{2\bea} + 2 (n-1)\rho e^{ (1+\rho)\bea} + (n-1)(1+(n-2)\rho) e^{2 \rho \bea} \right),
        \end{align*}
    and for any two different $i,j=1,2,\dots,n$,
        \begin{align*}
            \langle x_i ' , x_j ' \rangle = \frac{1}{Z^2} \left(\rho e^{2\bea}+ 2(1+(n-2)\rho)e^{\bea(1+\rho)} + \left((n-2)+(n^2-3n+3)\rho \right) e^{2\bea \rho} \right).
        \end{align*}
    See also Lemma~\ref{lem:att update length} and Lemma~\ref{lem:att update angle} for more detailed computations for $\langle x_i ' , x_i ' \rangle$ and $\langle x_i ' , x_j ' \rangle$.

    For $Z = e^{\bea} + (n-1) e^{\rho \bea }$, when we let $\beta = \gamma \log n$, we see that $e^{\bea} = n^{\gamma}$ and $n e^{\rho \bea} = n^{1+  \rho \gamma}$ in $Z$. The largest term in $Z$ then depends on the relation between $\gamma$ and $1+  \rho \gamma$: when $\gamma < \frac{1}{1-\rho}$, $n^{1+  \rho \gamma}$ is the largest term; when $\gamma > \frac{1}{1-\rho}$, $n^{\gamma}$ is the largest term. We then directly get the following three phases for $Z$ from the above arguments:
    \begin{align}\label{e:Z three phase}
        Z = 
        \begin{cases}
           (1+o_n(1)) \cdot n e^{\rho \bea }   & \text{if $\gamma < \frac{1}{1-\rho}$},
           \\ (2+o_n(1)) \cdot e^{\bea}  & \text{if $\gamma = \frac{1}{1-\rho}$},
            \\ (1+o_n(1)) \cdot e^{\bea}  & \text{if $\gamma > \frac{1}{1-\rho}$},
        \end{cases}
    \end{align}
    where the terms $o_n(1)$ go to $0$ as $n \to +\infty$.
    Similarly, we can get the following three phases for $\langle x_i ' , x_i ' \rangle$:
        \begin{align}
        \lim_{n \to +\infty} \langle x_i ' , x_i ' \rangle = 
        \begin{cases}
             \rho  & \text{if $\gamma < \frac{1}{1-\rho}$},
            \\  \frac{1+3\rho}{4} & \text{if $\gamma = \frac{1}{1-\rho}$},
            \\ 1 & \text{if $\gamma > \frac{1}{1-\rho}$}.
        \end{cases}
    \end{align}
    For $\langle x_i ' , x_j ' \rangle$, we always have that $\lim_{n \to +\infty} \langle x_i ' , x_j ' \rangle = \rho$ for $\gamma$ in these three different regimes. Then Equation~(\ref{e:simplex_phase_alpha=0}) follows from these two limits because $\langle y_i ' , y_j ' \rangle = \langle x_i ' / \| x_i ' \| , x_j ' / \| x_j ' \| \rangle$.
\end{proof}

\subsection{The almost-simplex case}\label{sec:equal angle}

In this section, we relax Assumption~\ref{a:simplex} to allow pairwise angles and lengths to vary slightly. This relaxation  makes it possible for tokens to lie in a dimension $d \ll n$. Although the resulting bounds are not as sharp as those obtained under Assumption~\ref{a:simplex}, they demonstrate that the critical scaling $\beta_n = \Theta(\log n)$ is intrinsic and not merely an artifact of a particular geometric construction.

\begin{assumption}\label{a:equal angle}
There exist constants $q_1, q_2 \in (0,\infty), \rho_1, \rho_2 \in (0,1)$ such that  $q_1 \le \|x_i\|^2 \le q_2$ and $\rho_1 \leq  \langle y_i, y_j \rangle\leq \rho_2$, for any $i, j=1, \ldots, n$ and $i \neq j$. Moreover, $\rho_1 =  \langle y_i, y_j \rangle$ for some $i,j$. 
\end{assumption}

\begin{remark}
    Assumption~\ref{a:equal angle} already allows near-ties when $\rho_2$ is close to $1$, and it can be further generalized to allow multiple exact ties in the top scores. Specifically, one may assume that there exists a fixed $k \in \mathbb{Z}_{+}$ such that, for each $i \in \{1,\dots,n\}$, there are at most $k$ indices $j$ with $\langle y_i, y_j \rangle = 1$. Under this setting, all of our main results continue to hold with the same critical scaling order $\log n$ but with different constants depending on $k$.
    In fact, this setting is a special case of the more general setting discussed in Appendix~\ref{sec:more transition phases}, where Assumption~\ref{a:three phase} partitions the inner products into three ranges, $[\rho_1,\rho_2]$, $[\rho_3,\rho_4]$, and $\{1\}$, with $0 \leq \rho_1 < \rho_2 < \rho_3 < \rho_4 <1$. The phase transition behavior remains of order $\log n$ in that general setting as well. For clarity and readability of the main exposition, we keep Assumption~\ref{a:equal angle} in the main text.
\end{remark}

It is easy to see using standard probabilistic tools that Assumption~\ref{a:equal angle} holds with high probability when the $y_i$'s are independent random vectors  uniformly distributed on a half-sphere for example.

\begin{theorem}\label{thm:att limit phase}
Under Assumption \ref{a:equal angle}, we have the following phase transition when $\bea = \gamma \log n$ for some fixed $\gamma>0$.

If $\gamma < \frac{1}{1-\rho_1}$, then there is a constant $\varepsilon>0$ depending on $\ala,\rho_2,q_1,q_2$, such that
    \begin{align}\label{e:update phase 1}
        \varliminf_{n \to +\infty} \min_{i \neq j}\langle y_i ' , y_j ' \rangle \geq \rho_1+\varepsilon >\rho_1,
    \end{align}
which implies that the angle between tokens becomes strictly smaller after an attention layer Equation~(\ref{e:att update}).

If $\gamma > \frac{1}{1-\rho_2}$, then for any $i \in \llbracket 1 , n \rrbracket$,
        \begin{align}\label{e:update phase 2 identity}
        \att(y_i) = y_i + o_n(1), \text{ and hence } x_i '  = y_i + \ala x_i + o_n(1),
        \end{align}
    where the term $o_n(1)$ goes to $0$ as $n \to +\infty$ with a speed uniform in $i$. Hence, when $\gamma > \frac{1}{1-\rho_2}$, for any two different $i,j \in \llbracket 1 , n \rrbracket$,
    \begin{align}\label{e:update phase 2}
        \lim_{n \to +\infty} \langle y_i ' , y_j ' \rangle = \langle y_i , y_j \rangle.
    \end{align}
which implies that the angle between tokens does not change after an attention layer Equation~(\ref{e:att update}).
\end{theorem}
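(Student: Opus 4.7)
My plan is to handle the two regimes separately, in each case reducing the statement to a uniform concentration or dispersion estimate on the attention weights $A_{ij}$, obtained by comparing the diagonal term $e^{\beta_n}=n^\gamma$ in $Z_i$ with the $n-1$ off-diagonal exponentials $e^{\beta_n\langle y_i,y_k\rangle}$, whose exponents lie between $\gamma\rho_1\log n$ and $\gamma\rho_2\log n$.

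In the supercritical regime $\gamma>1/(1-\rho_2)$, I first show $A_{ii}\to 1$ uniformly. The bound $Z_i\leq n^\gamma+(n-1)n^{\gamma\rho_2}$ together with $\gamma(1-\rho_2)>1$ forces the diagonal term to dominate, giving $A_{ii}=1-o_n(1)$. Combining $\sum_{k\neq i}A_{ik}=1-A_{ii}$ with $\|y_k\|=1$ and the triangle inequality yields $\|\att(y_i)-y_i\|\leq 2(1-A_{ii})=o_n(1)$ uniformly in $i$, which is exactly~\eqref{e:update phase 2 identity}. Plugging $\att(y_i)=y_i+o_n(1)$ into $x_i'=\att(y_i)+\alpha x_i$, expanding $\|x_i'\|^2$ and $\langle x_i',x_j'\rangle$, and observing that both factor as $(1+\alpha\sqrt{q_i})(1+\alpha\sqrt{q_j})$ times $1$ and $\langle y_i,y_j\rangle$ respectively (up to $o_n(1)$) then yields~\eqref{e:update phase 2}.

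In the subcritical regime $\gamma<1/(1-\rho_1)$ the opposite dispersion occurs. From $Z_i\geq(n-1)e^{\beta_n\rho_1}$ and $\max_k e^{\beta_n\langle y_i,y_k\rangle}\leq e^{\beta_n\rho_2}$ I extract $\max_{i,j}A_{ij}\leq C\,n^{\gamma(\rho_2-\rho_1)-1}=o_n(1)$, where the decay uses $\rho_2<1$ so that $\gamma(\rho_2-\rho_1)\leq\gamma(1-\rho_1)<1$. This forces $\sum_k A_{ik}^2$ and $\sum_k A_{ik}A_{jk}$ to be $o_n(1)$ uniformly. Feeding these into the bilinear expressions for $\|\att(y_i)\|^2$, $\langle\att(y_i),\att(y_j)\rangle$, and $\langle\att(y_i),y_j\rangle$, and using $\langle y_k,y_l\rangle\in[\rho_1,\rho_2]$ for $k\neq l$, each of these quantities is seen to lie in $[\rho_1+o_n(1),\rho_2+o_n(1)]$ uniformly. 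Lower-bounding $\langle x_i',x_j'\rangle$ by its $\rho_1$-type contributions and upper-bounding $\|x_i'\|^2$ by its $\rho_2$-type ones through the expansion $x_i'=\att(y_i)+\alpha x_i$ produces
\[
\langle y_i',y_j'\rangle \;\geq\; \frac{\rho_1(1+\alpha\sqrt{q_i})(1+\alpha\sqrt{q_j})}{\sqrt{[\rho_2(1+2\alpha\sqrt{q_i})+\alpha^2 q_i]\,[\rho_2(1+2\alpha\sqrt{q_j})+\alpha^2 q_j]}}+o_n(1).
\]
The algebraic identity $(1+\alpha\sqrt{q})^2-[\rho_2(1+2\alpha\sqrt{q})+\alpha^2 q]=(1-\rho_2)(1+2\alpha\sqrt{q})>0$ shows that each factor in the denominator is strictly less than $(1+\alpha\sqrt{q})^2$, and the compactness of $q_i\in[q_1,q_2]$ converts this pointwise strict inequality into the required uniform $\varepsilon>0$.

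The main technical obstacle is turning the strict inequality $\rho_2<1$ into a \emph{quantitative} uniform gap $\varepsilon>0$ in the subcritical regime. The naive bounds $\|\att(y_i)\|\leq 1$ and $\|x_i'\|\leq 1+\alpha\sqrt{q_i}$ only yield $\liminf\geq\rho_1$, so I must carefully preserve the sharper estimate $\|\att(y_i)\|^2\leq\rho_2+o_n(1)$ coming from the uniform smallness of $\max_{i,j}A_{ij}$, and track how it interacts with the residual strength $\alpha$ and the radii $q_1,q_2$ through the normalization $y_i'=x_i'/\|x_i'\|$.
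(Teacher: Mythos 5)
Your proposal is correct and follows essentially the same route as the paper's proof: the same dominance analysis of $Z_i$ in the two regimes, the same reduction of the supercritical case to $\att(y_i)=y_i+o_n(1)$, and in the subcritical case the same $\rho_1$-lower/$\rho_2$-upper sandwich on $\langle x_i',x_j'\rangle$ and $\|x_i'\|^2$ combined with the identity $(1+\alpha\sqrt{q})^2-[\rho_2(1+2\alpha\sqrt{q})+\alpha^2 q]=(1-\rho_2)(1+2\alpha\sqrt{q})$ to extract a uniform gap. One tiny inaccuracy: your bound $\max_{i,j}A_{ij}\le C n^{\gamma(\rho_2-\rho_1)-1}$ does not cover the diagonal weight $A_{ii}\le C n^{\gamma(1-\rho_1)-1}$ (whose exponent is larger but still negative since $\gamma<\tfrac{1}{1-\rho_1}$), so the conclusion $\max_{i,j}A_{ij}=o_n(1)$ stands.
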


The proof for Theorem~\ref{thm:att limit phase} is included in Appendix~\ref{sec:proof of angle phase transition}, but the general intuition is similar to the proof for Equation~(\ref{e:simplex_phase_alpha=0}) in Section~\ref{sec:intro simplex}. As we have seen in that proof, the first step to build up phase transition regimes for $\langle y_i ' , y_j ' \rangle$ is to study the phase transition regimes for $Z_i$ in Equation~(\ref{e:Aij aij}). Adjusting the logarithmic scaling factor $\gamma$ causes different phase transition regimes for $Z_i$ first. When $\gamma$ is small enough, the weights $e^{a_{ik}}$ consisting of $Z_i$ are asymptotically uniform, and each token almost equally interacts with the other tokens. When $\gamma$ is large enough, each token mostly focuses on itself. 

Building on this observation, Theorem~\ref{thm:att limit phase 2} and Theorem~\ref{thm:att limit phase} together demonstrate that $\gamma$ controls the effective interaction range of each token. In particular, we have seen in Theorem~\ref{thm:att limit phase 2} the existence of the critical regime when $\gamma = \frac{1}{1-\rho}$. In this case, although the tokens continue to contract, their rate of shrinkage is evidently slower than in the subcritical regime, as shown in Equation~(\ref{e:simplex_phase}) and Equation~(\ref{e:simplex_phase_alpha=0}). 

It is hence natural to ask whether further regimes emerge when $\gamma$ is varied between the supercritical and subcritical threshold. Indeed, in Appendix~\ref{sec:more transition phases}, we prove the existence of a nontrivial middle phase when $\gamma$ is between the two extrema $\frac{1}{1-\rho_1}$ and $\frac{1}{1-\rho_2}$, under a refined assumption on the distribution of tokens, which allows for a sharper characterization of the transition. Under this refined assumption, Theorem~\ref{thm:att three phase} show the existence of $\gamma_1, \gamma_2$ such that Equation~(\ref{e:att update}) presents three different phases: $\gamma < \gamma_1$, $\gamma_1 <\gamma < \gamma_2$, and $\gamma > \gamma_2$. In the extreme regimes, when $\gamma < \gamma_1$, each token interacts with almost all the remaining tokens, while when $\gamma > \gamma_2$, each token only focuses on itself, consistent with Theorem~\ref{thm:att limit phase}. In the intermediate regime $\gamma_1 < \gamma < \gamma_2$, however, the weights $e^{a_{ik}}$ concentrate on only a small subset of tokens, so that each $Z_i$ and hence the update in Equation~(\ref{e:att update}) is dominated by a few highly relevant interactions. This shows that the logarithmic scaling enables each token to dynamically select its most relevant context.

We conclude by noting that those $o_n(1)$ terms in our theorems satisfy the bound $|o_n(1)| \leq C_1 n^{-C_2}$ for some positive constants $C_1,C_2$ that are independent of $d$ (though varying across theorems). As a result, the simplex configuration (Assumption~\ref{a:simplex}) and the almost simplex configuration (Assumption~\ref{a:equal angle}) remains valid under repeated application of the $\att$ operator up to $\mathrm{poly}(n)$ iterations. In particular, the accumulated error remains negligible at this scale, so our theorems and arguments extend to transformers with many layers.

\subsection{Propagation of Gradients under Attention Layer}\label{sec:propagation results}

In the previous section, we established how attention scaling influences the propagation of token representations, corresponding to running the Transformer in the forward (inference) direction. During training, however, the Transformer is also executed in the \emph{backward} direction to compute gradients via backpropagation~\citep{rumelhart1986learning}. In this section, we show that a similar phase transition arises in the backward pass: in the subcritical regime---where token representations rapidly collapse in the forward pass---the gradients also collapse, whereas in the supercritical regime they retain their scale. The stability of gradients is a crucial computational consideration that strongly affects a model's ability to be trained effectively. For this reason, several theoretical analyses of gradient dynamics in Transformers have been conducted, albeit without attention scaling; see, for example,~\cite{cowsik2024geometric,dong2021attention,noci2022signal}.


Let the input token configuration be denoted by $X(0)$, and let $X(t)$ represent the positions of all tokens at the output of Transformer layer $t$. To compute gradients, one needs to evaluate the end-to-end input–output Jacobian across $L$ layers of the Transformer. By the chain rule, this Jacobian can be expressed as
\begin{align*}
    \frac{\partial X(L)}{\partial X(0)} 
    = 
    \frac{\partial X(L)}{\partial X(L-1)}
    \frac{\partial X(L-1)}{\partial X(L-2)} 
    \cdots 
    \frac{\partial X(1)}{\partial X(0)}.
\end{align*}
Thus, the end-to-end Jacobian can be obtained by recursively computing and multiplying the layer-wise Jacobians. This procedure is known as the \emph{adjoint method} in dynamical systems theory~\citep{lions1971optimal}, and as \emph{backpropagation} in the machine learning community.

Our main result shows that when $\beta_n = \gamma \log n$ with subcritical $\gamma$, the typical singular values of $\frac{\partial X(t+1)}{\partial X(t)}$ are close to zero (apart from the contribution of the residual connection). In contrast, for supercritical values of $\gamma$, the contribution of the attention component to the Jacobian is non-trivial and behaves as a normalization map.

We now proceed with formal definitions. For $x \in \R^d$, let $(x)_u$ denote its $u$-th coordinate for $u = 1, 2, \dots, d$. The concatenation $X = (x_1, x_2, \dots, x_n) \in \R^{nd}$ represents the configuration of all tokens. The normalization map is defined by
\begin{align}\label{e:def norm map}
    \cln(X)
    = \cln(x_1, x_2, \dots, x_n)
    \coloneqq \big(N(x_1), N(x_2), \dots, N(x_n)\big),
\end{align}
and the attention map by
\begin{align}\label{e:def att map}
    \clatt(Y)
    = \clatt(y_1, y_2, \dots, y_n)
    \coloneqq \big(\att(y_1), \att(y_2), \dots, \att(y_n)\big),
\end{align}
where $\att(y_i)$ is defined in~\eqref{e:att} and $Y = (y_1, \ldots, y_n)$.  
Under these definitions, the update~\eqref{e:att update} can be written compactly as
\begin{align}\label{e:att update matrix form}
    X' = \clatt(\cln(X)) + \alpha X,
\end{align}
where $X' = (x_1', x_2', \dots, x_n')$.  

\medskip
We define the $nd \times nd$ Jacobian matrix as
\begin{align}
    \nabla_X X' \coloneqq 
    \left( 
        \frac{\partial (x_j')_v}{\partial (x_i)_u}
    \right)_{(j,v), (i,u)},
\end{align}
for $i, j = 1, \ldots, n$ and $u, v = 1, \ldots, d$.  
The matrix norm of $\nabla_X X'$ is given by
\begin{align}\label{e:def jacobian norm}
    \|\nabla_X X'\|^2 
    \coloneqq  
    \mathrm{tr}\!\left[(\nabla_X X')^\top \nabla_X X'\right]
    = \sum_{i,j=1}^n \sum_{u,v=1}^d 
      \left(
        \frac{\partial (x_j')_v}{\partial (x_i)_u}
      \right)^2.
\end{align} 

Let $\sigma_1, \sigma_2, \dots, \sigma_{nd}$ denote the singular values of $\nabla_X X'$. Then the normalized Jacobian norm satisfies
\begin{align}\label{e:mean jacobian norm}
    \frac{1}{nd}\|\nabla_X X'\|^2 
    = \frac{1}{nd}\sum_{i=1}^{nd} \sigma_i^2,
\end{align}
which represents the mean squared singular value of the Jacobian.  

\medskip
Before stating our results on $\frac{1}{nd}\|\nabla_X X'\|^2$, we note that the Jacobian $\nabla_X X'$ can be decomposed into the residual part $\alpha I_{nd}$ and the attention part $\nabla_X\!\big(\clatt(\cln(X))\big)$.  
As shown in Theorems~\ref{thm:att limit phase 2} and~\ref{thm:att limit phase}, the residual component $\alpha I_{nd}$ does not affect the phase transition behavior.  
Therefore, to streamline the analysis, we focus exclusively on the attention term $\nabla_X\!\big(\clatt(\cln(X))\big)$ by setting $\alpha = 0$ in~\eqref{e:att update matrix form}.  
The following theorems characterize $\frac{1}{nd}\|\nabla_X X'\|^2$ under this setting.

\begin{theorem}\label{thm:att propagation gradient norm 2}
Adopt Assumption~\ref{a:simplex} and Equation~(\ref{e:att update matrix form}) with $\alpha = 0$. Then, we have the following phase transition phenomenon: let $\bea = \gamma \log n$ where $\gamma$ is a positive constant.
    
    If $\gamma < \frac{1}{1-\rho}$,
        \begin{align}\label{e:propagation gradient phase 1 simplex}
            \frac{1}{nd}\|\nabla_X X'\|^2=  0+o_n(1).
        \end{align}

    If $\gamma = \frac{1}{1-\rho}$
        \begin{align}\label{e:propagation gradient phase middle simplex}
            \frac{1}{nd}\|\nabla_X X'\|^2=  \frac{1}{4q} \left(1-\frac{1}{d} \right)+o_n(1).
        \end{align}

    If $\gamma > \frac{1}{1-\rho}$
        \begin{align}\label{e:propagation gradient phase 2 simplex}
            \frac{1}{nd}\|\nabla_X X'\|^2 = \frac{1}{q} \left(1-\frac{1}{d} \right) + o_n(1).
        \end{align} 

    In both cases, the terms $o_n(1)$ go to $0$ as $n \to +\infty$, with speeds depending on $\gamma,\rho,q$.
\end{theorem}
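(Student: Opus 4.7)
The plan is to use the chain rule to decompose the end-to-end Jacobian as $\nabla_X X' = D\clatt(Y)\cdot D\cln(X)$, exploit the full symmetry of the simplex configuration to reduce everything to two scalar quantities, and then plug in the three-regime asymptotics of $A_{ij}$ already established in the proof of \eqref{e:simplex_phase_alpha=0}. The normalization Jacobian $D\cln(X)$ is block-diagonal with $d\times d$ blocks $\frac{1}{\|x_i\|}(I_d - y_iy_i^\top)$, which under Assumption~\ref{a:simplex} reduce to $\frac{1}{\sqrt{q}}P_i$ with $P_i\coloneqq I_d-y_iy_i^\top$ a rank-$(d-1)$ orthogonal projection satisfying $\|P_i\|_F^2 = d-1$.

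Differentiating the softmax weights in \eqref{e:Aij aij} yields
\begin{equation*}
\frac{\partial \att(y_j)}{\partial y_i} = A_{ji}I_d + \beta A_{ji}\bigl(y_i - \att(y_j)\bigr)y_j^\top \quad (i\neq j), \qquad \frac{\partial \att(y_i)}{\partial y_i} = A_{ii}I_d + \beta\Sigma_i,
\end{equation*}
where $\Sigma_i\coloneqq \sum_k A_{ik}y_ky_k^\top - \att(y_i)\att(y_i)^\top$ is the covariance matrix of $\{y_k\}$ under the discrete distribution $\{A_{ik}\}_k$. Composing with $\frac{1}{\sqrt{q}}P_i$ on the right gives the blocks $J_{ji}=\partial x_j'/\partial x_i$, and simplex symmetry forces $\|J_{ji}\|_F$ to depend only on whether $i=j$, so
\begin{equation*}
\|\nabla_X X'\|_F^2 = n\|J_{11}\|_F^2 + n(n-1)\|J_{21}\|_F^2.
\end{equation*}

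For the off-diagonal block I use that $y_2-\rho y_1\perp y_1$, hence $(y_2-\rho y_1)^\top P_1 = (y_2-\rho y_1)^\top$, to obtain
\begin{equation*}
\|J_{21}\|_F^2 = \frac{A_{21}^2}{q}\bigl[(d-1) + 2\beta (y_2-\rho y_1)^\top(y_1 - \att(y_2)) + \beta^2(1-\rho^2)\|y_1 - \att(y_2)\|^2\bigr],
\end{equation*}
whose bracket is $d + O(\beta^2)$. Since $A_{21}=O(1/n)$ in all three regimes and $d\ge n$ is forced by Assumption~\ref{a:simplex} (otherwise no regular simplex exists), the off-diagonal contribution to $\frac{1}{nd}\|\nabla_X X'\|_F^2$ is $O\bigl((1+\beta^2/d)/n\bigr)=o_n(1)$. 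For the diagonal block,
\begin{equation*}
\|J_{11}\|_F^2 = \frac{1}{q}\bigl[A_{11}^2(d-1) + 2\beta A_{11}\,\mathrm{tr}(\Sigma_1 P_1) + \beta^2\|\Sigma_1 P_1\|_F^2\bigr],
\end{equation*}
and I evaluate $\mathrm{tr}(\Sigma_1 P_1) = \mathrm{tr}(\Sigma_1) - y_1^\top\Sigma_1 y_1$ and $\|\Sigma_1\|_F^2$ in closed form using the moments $\sum_k A_{1k}^p$ and the inner products $\langle y_k,y_l\rangle\in\{1,\rho\}$.

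Inserting the asymptotics from \eqref{e:Z three phase} then delivers the three limits. In the subcritical regime $A_{11}\to 0$ polynomially while $\|\Sigma_1\|_F^2=O(1)$, so every term in $\|J_{11}\|_F^2/q$ divided by $d$ vanishes. In the critical regime $A_{11}\to 1/2$, so the leading contribution is $A_{11}^2(d-1)/(qd)\to (1-1/d)/(4q)$; the cross and $\beta^2$ terms are suppressed by factors $\beta/d$ and $\beta^2/d$ respectively. In the supercritical regime $A_{11}\to 1$ and both $\mathrm{tr}(\Sigma_1)=1-\|\att(y_1)\|^2$ and $\|\Sigma_1\|_F$ are polynomially small, leaving $\frac{1}{q}(1-1/d)$. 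The most delicate step is the critical case: $A_{11}^2$ and $\beta^2$ are both nontrivial, and one must verify that $\|\Sigma_1\|_F^2$ does not grow with $n$. A short direct computation using $\langle y_k,y_l\rangle^2\in\{1,\rho^2\}$ shows $\|\Sigma_1\|_F^2\to (1-\rho)^2/16$, so the $\beta^2$ piece contributes only $O((\log n)^2/d)=o_n(1)$ after normalization by $nd$. All error terms decay polynomially in $n$, inherited from the polynomial corrections in \eqref{e:Z three phase}.
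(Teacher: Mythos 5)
Your proof is correct and reaches the same three limits, but it organizes the computation quite differently from the paper, and the comparison is instructive. The paper (Lemmas~\ref{lem:att norm chain 2}--\ref{lem:att grad norm 3}) expands each Jacobian block into six tensor pieces $\bfr_1,\bfr_2,\bfu_1,\dots,\bfu_4$ and evaluates every cross term of $\sum_{i,j}\|\cdot\|_F^2$ in closed form; the subcritical conclusion then rests on an exact cancellation of three identical leading terms $\beta^2\rho^2(1-\rho)$ in the expansion $T_1-2T_2+T_3$. Your covariance parametrization ($D_{y_i}\att(y_i)=A_{ii}I+\beta\Sigma_i+\cdots$, composed with $\tfrac{1}{\sqrt q}P_i$) builds that cancellation in from the start: centering by $\att(y_i)\att(y_i)^\top$ and projecting by $P_i$ reduces everything to the four scalars $A_{11}$, $A_{21}$, $\mathrm{tr}(\Sigma_1P_1)$, $\|\Sigma_1P_1\|_F^2$, so the only genuinely quantitative inputs are the regime asymptotics of $A_{11},A_{21}$ and the bound on $\|\Sigma_1\|_F^2$ (your limit $(1-\rho)^2/16$ at criticality is correct, and matches the combination the paper obtains after its cancellations). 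Both arguments ultimately invoke $d\ge n$ (forced by Assumption~\ref{a:simplex}) to suppress the residual $\beta^2/d=\gamma^2(\log n)^2/d$ terms, exactly as the paper remarks at the end of its proof; note that in your subcritical paragraph this is also what kills the $\beta^2\|\Sigma_1P_1\|_F^2/(qd)$ term, since that term does not carry a factor of $A_{11}$ --- you should say so explicitly (or use the sharper fact that $\|\Sigma_1\|_F^2=O(1/n)$ there).

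One small correction: your displayed formula for the self-derivative is incomplete. Since $a_{ii}=\beta\|y_i\|^2$ depends quadratically on $y_i$, the correct expression is
\begin{equation*}
\frac{\partial \att(y_i)}{\partial y_i} \;=\; A_{ii}I_d + \beta\Sigma_i + \beta A_{ii}\bigl(y_i-\att(y_i)\bigr)y_i^\top ,
\end{equation*}
i.e.\ there is an extra rank-one term with $y_i^\top$ on the right. Because $y_i^\top P_i=0$, this term is annihilated when you compose with the normalization Jacobian, so your formula for $J_{11}=\tfrac{1}{\sqrt q}(A_{11}P_1+\beta\Sigma_1P_1)$ and all subsequent conclusions are unaffected; but the intermediate identity as written is not the derivative of $\att(y_i)$ with respect to an unconstrained $y_i$.
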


The results of the previous theorem show that under the simplex assumption, the phase transition in the backward dynamics (for gradients) is as sharp as for the forward pass: for small $\gamma$, gradients do not flow through the attention block.

We can also extend the analysis for Theorem~\ref{thm:att propagation gradient norm 2} to the relaxed Assumption~\ref{a:equal angle}.

\begin{theorem}\label{thm:att propagation gradient norm}
Adopt Assumption \ref{a:equal angle} and Equation~(\ref{e:att update matrix form}) with $\alpha = 0$. Then, we have the following phase transition phenomenon: let $\beta = \gamma \log n$ where $\gamma$ is a positive constant.
    
    If $\gamma < \frac{1}{1-\rho_1}$,
        \begin{align}\label{e:propagation gradient phase 1}
             \frac{1}{nd} \|\nabla_X X'\|^2 \leq 4\frac{\gamma^2 (\log(n))^2}{q_1d} + o_n(1),
        \end{align}

    If $\gamma > \frac{1}{1-\rho_2}$, 
        \begin{align}
            \frac{1}{nd} \|\nabla_X X'\|^2 \geq \frac{1}{q_2} \left(1- \frac{1}{d}\right) + o_n(1),
        \end{align}
    which is away from $0$ even when $d,n$ is very large. Indeed, when $\gamma > \frac{1}{1-\rho_2}$, for any fixed $i,j \in \llbracket 1, n \rrbracket$,
        \begin{align}\label{e:propagation gradient phase 2}
             \left(\frac{\partial (\att(N(x_j)))_v}{\partial (x_i)_u} \right)_{d \times d} = \frac{\delta_{ij}}{\|x_i\|} \left( I_d - y_i  y_i ^T \right) + \bfo_n(1) + o_n(1) \cdot  I_d,
        \end{align} 
    where the leading order term is exactly $ \frac{\partial (N(x_j))_v}{\partial (x_i)_u}$ as shown in Proposition \ref{lem:norm gradient}. Here, $I_d$ is the $d \times d$ identity matrix, the term $\bfo_n(1)$ ($o_n(1)$, respectively) is a $d \times d $ matrix (constant, respectively) with matrix norm as defined in Equation~(\ref{e:def jacobian norm}) (value, respectively) going to $0$ as $n \to +\infty$, with a speed independent of $i,j$ but only depending on $\gamma,\rho_2,q_1$.
\end{theorem}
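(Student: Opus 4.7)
The plan is to derive a closed-form expression for the Jacobian $\nabla_X X'$ via the chain rule and then estimate it separately in each regime. Since $X' = \clatt(\cln(X))$ with $\alpha = 0$, the chain rule gives $\nabla_X X' = J_{\clatt}(Y)\, J_{\cln}(X)$, where $J_{\cln}(X)$ is block-diagonal with $d\times d$ blocks $\|x_i\|^{-1} P_i^\perp$, with $P_i^\perp \coloneqq I_d - y_i y_i^\top$, as given by Proposition~\ref{lem:norm gradient}. Direct differentiation of the softmax produces the $d\times d$ block $M^{ji} \coloneqq \partial \att(y_j)/\partial y_i$ of $J_{\clatt}$:
\begin{align*}
M^{ji} = \beta\,\delta_{ji}\, C^j + \beta A_{ji}\,(y_i - \att(y_j))\, y_j^\top + A_{ji}\, I_d,
\end{align*}
where $C^j \coloneqq \sum_k A_{jk} y_k y_k^\top - \att(y_j)\att(y_j)^\top$ is the attention covariance at index $j$, which enjoys the universal bound $\|C^j\|_F \leq 2$. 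The block of $\nabla_X X'$ is then $B^{ji} = \|x_i\|^{-1} M^{ji} P_i^\perp$, and two algebraic cancellations streamline the analysis: the identity $y_j^\top P_j^\perp = 0$ annihilates the middle term on the diagonal ($j=i$), while the Kronecker delta kills the first term off the diagonal.

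For the subcritical upper bound, I would combine $\|C^j\|_F \leq 2$ with $\|y_i - \att(y_j)\| \leq 2$, $\|P_i^\perp\|_{\op} = 1$ and $\|x_i\|^{-2} \leq q_1^{-1}$ to obtain
\begin{align*}
\|B^{jj}\|_F \leq \tfrac{1}{\sqrt{q_1}}(2\beta + A_{jj}\sqrt{d-1}), \qquad \|B^{ji}\|_F \leq \tfrac{A_{ji}}{\sqrt{q_1}}(2\beta + \sqrt{d-1}) \text{ for } j \neq i.
\end{align*}
In the subcritical regime, the same analysis of $Z_j$ used in the proof of Theorem~\ref{thm:att limit phase} yields $A_{ji} \leq C/n$ uniformly, and thus $\sum_i A_{ji}^2 \leq C/n$ via the row-sum constraint $\sum_i A_{ji} = 1$. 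Expanding $(2\beta + A_{jj}\sqrt{d-1})^2$ and summing over $j$, the dominant contribution is $\sum_j 4\beta^2/q_1 = 4n\beta^2/q_1$; the cross term $4\beta A_{jj}\sqrt{d-1}$, the term $A_{jj}^2(d-1)$, and the entire off-diagonal contribution of order $\beta^2 + d$, are all $o(nd)$. Dividing by $nd$ yields \eqref{e:propagation gradient phase 1}.

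For the supercritical identification, the key input is a quantitative sharpening of Theorem~\ref{thm:att limit phase}. Lower-bounding $Z_j \geq e^\beta$ and upper-bounding $\sum_{k\neq j} e^{\beta \langle y_j, y_k\rangle} \leq (n-1) e^{\beta \rho_2}$ produces $A_{ji} \leq n^{-\eta}$ for $j \neq i$, with $\eta \coloneqq \gamma(1-\rho_2) - 1 > 0$, and hence $1 - A_{jj} = O(n^{-\eta})$ uniformly in $j$. Rewriting the covariance in PSD form $C^j = \sum_k A_{jk}(y_k - \att(y_j))(y_k - \att(y_j))^\top$ and bounding its trace by $4(1 - A_{jj}) = O(n^{-\eta})$ gives $\|C^j\|_F = O(n^{-\eta})$, a polynomial rate that dominates the logarithmic prefactor so that $\beta\|C^j\|_F = o_n(1)$. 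Substituting into the closed form of $B^{ji}$: the diagonal block collapses to $\|x_j\|^{-1} P_j^\perp + \bfo_n(1) + o_n(1)\cdot I_d$ (the scaled identity error coming from $A_{jj} = 1 - o_n(1)$), while each off-diagonal block is $\bfo_n(1) + o_n(1)\cdot I_d$, which is exactly \eqref{e:propagation gradient phase 2}. The lower bound then follows by summing Frobenius norms: $\frac{1}{nd}\sum_j \|x_j\|^{-2}\|P_j^\perp\|_F^2 = \frac{1}{nd}\sum_j \frac{d-1}{\|x_j\|^2} \geq \frac{1}{q_2}(1-1/d)$.

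The principal obstacle is securing the polynomial decay $\|C^j\|_F = O(n^{-\eta})$ uniformly in $j$ under the weaker Assumption~\ref{a:equal angle}. Unlike the simplex case, where exact symmetry permits sharp cancellations, the geometry is only controlled by the two-sided bound $\rho_1 \leq \langle y_j, y_k\rangle \leq \rho_2$, so the exponent $\eta$ must be extracted from the worst-case inner product $\rho_2$. This is precisely why the supercritical threshold in the present theorem is $1/(1-\rho_2)$, and the strict positivity $\eta > 0$ is essential to overcome the factor $\beta = \gamma\log n$ multiplying $C^j$ in $M^{jj}$.
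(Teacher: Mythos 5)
Your proposal follows essentially the same route as the paper's: compute the $d\times d$ blocks of the Jacobian by composing the softmax derivative with the normalization Jacobian, then estimate block by block using the asymptotics of $Z_j$ from Lemma~\ref{lem:Z_i asymptotics}. Your covariance form $M^{ji}=\beta\delta_{ji}C^j+\beta A_{ji}(y_i-\att(y_j))y_j^\top+A_{ji}I_d$ is a cleaner repackaging of the $\bfr_1,\bfr_2,\bfu_1,\bfu_2$ decomposition in Lemma~\ref{lem:att norm chain}, and your resulting bounds ($\|B^{jj}\|_F\lesssim q_1^{-1/2}(2\beta+A_{jj}\sqrt{d-1})$, $\|B^{ji}\|_F\lesssim q_1^{-1/2}A_{ji}(2\beta+\sqrt{d-1})$) coincide with those of Lemma~\ref{lem:gamma small all point gradient}. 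The supercritical identification, including the dimension-free splitting of each off-diagonal block into an $o_n(1)\cdot I_d$ piece and a Frobenius-small rank-one piece, matches Lemma~\ref{lem:gamma large single point gradient}.

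One intermediate claim in your subcritical step is wrong as stated: under Assumption~\ref{a:equal angle} you only get $A_{ji}\le e^{\beta\rho_2}/Z_j\le C\,n^{\gamma(\rho_2-\rho_1)-1}$ for $i\ne j$, not $A_{ji}\le C/n$ (the latter holds only in the exact simplex case $\rho_1=\rho_2$). The slip is harmless: since $\gamma<1/(1-\rho_1)$ and $\rho_2<1$ force $\gamma(\rho_2-\rho_1)-1<-(1-\rho_2)/(1-\rho_1)<0$, one still has $\max_{i\ne j}A_{ji}\le C n^{-c}$ with $c>0$ independent of $d$, and your row-sum trick $\sum_{i\ne j}A_{ji}^2\le\max_{i\ne j}A_{ji}$ then makes the off-diagonal contribution to $\frac1{nd}\|\nabla_XX'\|^2$ of order $(2\beta+\sqrt{d})^2n^{-c}/(dq_1)$, which is $o_n(1)$ in the paper's sense. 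The same caveat applies on the diagonal, where the correct rate is $1-A_{jj}\le Cn^{-(1+\gamma\rho_1-\gamma)}$ with $1+\gamma\rho_1-\gamma>0$; that is all your cross and quadratic terms require. With this correction the argument is complete.
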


We present the proofs for Theorem~\ref{thm:att propagation gradient norm 2} and Theorem~\ref{thm:att propagation gradient norm} in Appendix~\ref{sec:proof of gradient phase transition}. Note that the $\frac{\log^2 n}{d}$ term in \eqref{e:propagation gradient phase 1} is small for typical values of $n$ and $d$ used in Transformers. Theorem~\ref{thm:att propagation gradient norm 2} and Theorem~\ref{thm:att propagation gradient norm} also corroborate the fact that tokens collapse fast when $\gamma$ is in the subcritical regime, while each token only focuses on itself when $\gamma$ is in the supercritical regime.

\section{Numerical Experiments}\label{sec:num}

This section reports numerical experiments designed to support our theoretical predictions. 
In the following numerical experiments, we test the phase transition in the almost-simplex case as Section~\ref{sec:equal angle}. We generate samples $\{x_1,\ldots,x_n\}\subset\mathbb{R}^d$ such that the expectations $\mathbb{E}\|x_i\|^2=1$ and $\mathbb{E}\langle x_i, x_j \rangle = \rho\in[0,1]$ for $i\neq j$.  More precisely, we generate $x_i$ according to
\begin{align}
x_i = \sqrt{\rho}\,z_0 + \sqrt{1-\rho}\,z_i \,,
\end{align}
where $z_0, z_1, \ldots, z_n$ are i.i.d. standard Gaussian vectors in $\mathbb{R}^d$. The generated samples satisfy the Assumption \ref{a:equal angle} with high probability.

In Figure~\ref{fig:lambda_a_single_attn}, we plot the input-to-output angle ratio $\lambda$, defined as
\begin{align}\label{eqn:input_output_ratio}
    \lambda = \frac{2}{n(n-1)} \sum_{1\leq i<j\leq n} \frac{1-\langle y_i', y_j' \rangle}{1-\langle y_i , y_j \rangle} \,,
\end{align}
for samples processed through a single self-attention layer with different $\gamma$ and of different dimensions $d$. Consistent with our theoretical predictions, the layer acts as a contraction mapping when $\gamma$ is small, reducing pairwise output angles, whereas for large $\gamma$ the output angles remain nearly unchanged from the input. Moreover, in the large $d$ regime the angle between input tokens $\langle y_i, y_j \rangle$ ($i\neq j$) concentrate near $\rho$, so that the simplex Assumption~\ref{a:simplex} is effectively satisfied. In this setting, we observe a sharp phase transition in agreement with Theorem~\ref{thm:att limit phase 2}. In the small $d$ regime, however, the input tokens $\langle y_i, y_j \rangle$ randomly distributed in an interval $(\rho_1,\rho_2)$, and an intermediate phase emerges in which the contraction is only partial: some angles shrink significantly while others remain close to their original values, which smooths out the transition.

\begin{figure}[h!]
\centering \hspace*{-0.9cm}
\begin{subfigure}{0.32\textwidth}
  \centering
  \includegraphics[scale=0.345]{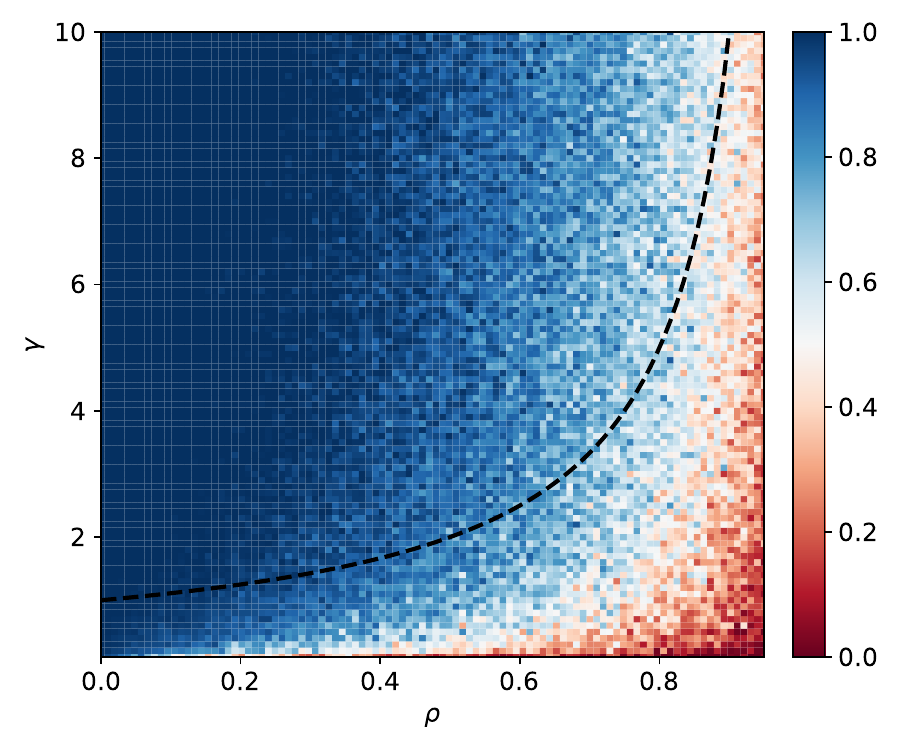}
  \caption{$d = 2$}
\end{subfigure}%
\begin{subfigure}{0.32\textwidth}
  \centering
  \includegraphics[scale=0.345]{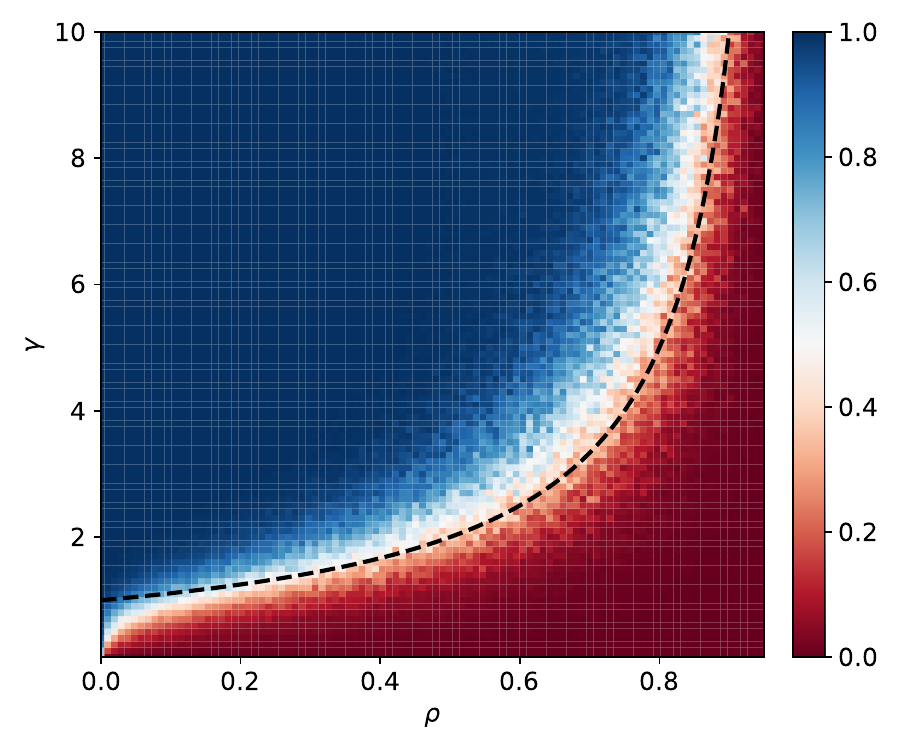}
  \caption{$d = 32$}
\end{subfigure}%
\begin{subfigure}{0.32\textwidth}
  \centering
  \includegraphics[scale=0.345]{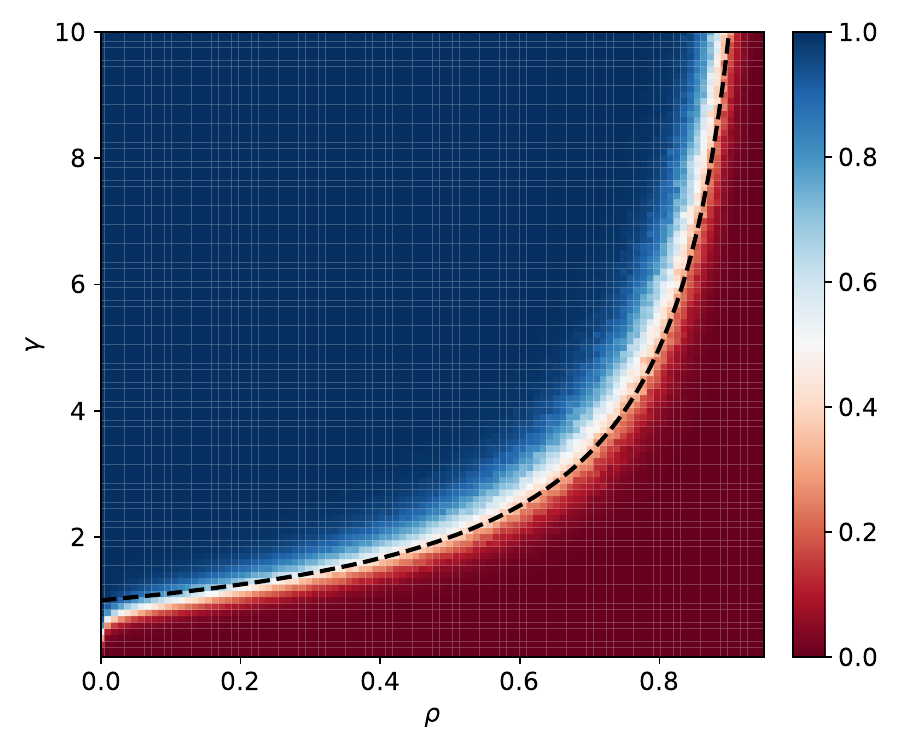}
  \caption{$d = 512$}
\end{subfigure}
\caption{
Plots of the input-to-output angle ratio $\lambda$, defined in Equation~(\ref{eqn:input_output_ratio}), as a function of $\rho$ and $\gamma$. The tokens are first normalized by a pre-layer normalization and then passed through a single self-attention layer (\ref{e:att}), with residual connections and MLP layers omitted. The dashed curve corresponds to $\gamma=\tfrac{1}{1-\rho}$, which approximates the actual phase transition with increasing accuracy as $d$ grows, as implied by Theorem \ref{thm:att limit phase 2}.
}\label{fig:lambda_a_single_attn}
\end{figure}

In Figure \ref{fig:lambda_g_single_attn}, we plot the normalized matrix norm for the $nd \times nd $ matrix $\nabla_X X'$, defined as
\begin{align}\label{eqn:normalized_norm}
    \eta = \frac{1}{nd} \|\nabla_X X'\|^2 \,,
\end{align}
for samples passed through a single self-attention layer with varying $\gamma$ and dimension $d$. Across all three plots, the normalized gradient norm remains close to $0$ when $\gamma$ is small, while for large $\gamma$ it approaches $1-1/d$, consistent with Theorem~\ref{thm:att propagation gradient norm}. Similar to the token-angle behavior, a sharp phase transition emerges near $\gamma=\tfrac{1}{1-\rho}$ in the large-$d$ regime, in agreement with the predictions under the simplex assumption. In lower dimensions, fluctuations in the pairwise angle prevent perfect concentration, and the transition is smoothed into an intermediate regime where the gradient norm only partially stabilizes.

\begin{figure}[h!]
\centering \hspace*{-0.9cm}
\begin{subfigure}{.32\textwidth}
  \centering
  \includegraphics[scale=0.345]{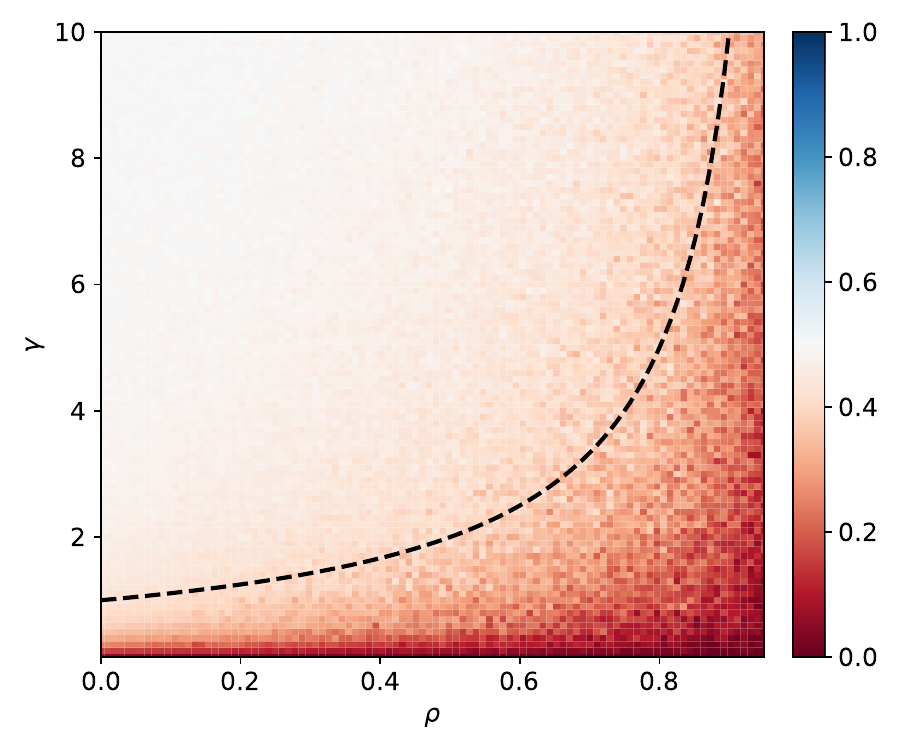}
  \caption{$d = 2$}
\end{subfigure}%
\begin{subfigure}{.32\textwidth}
  \centering
  \includegraphics[scale=0.345]{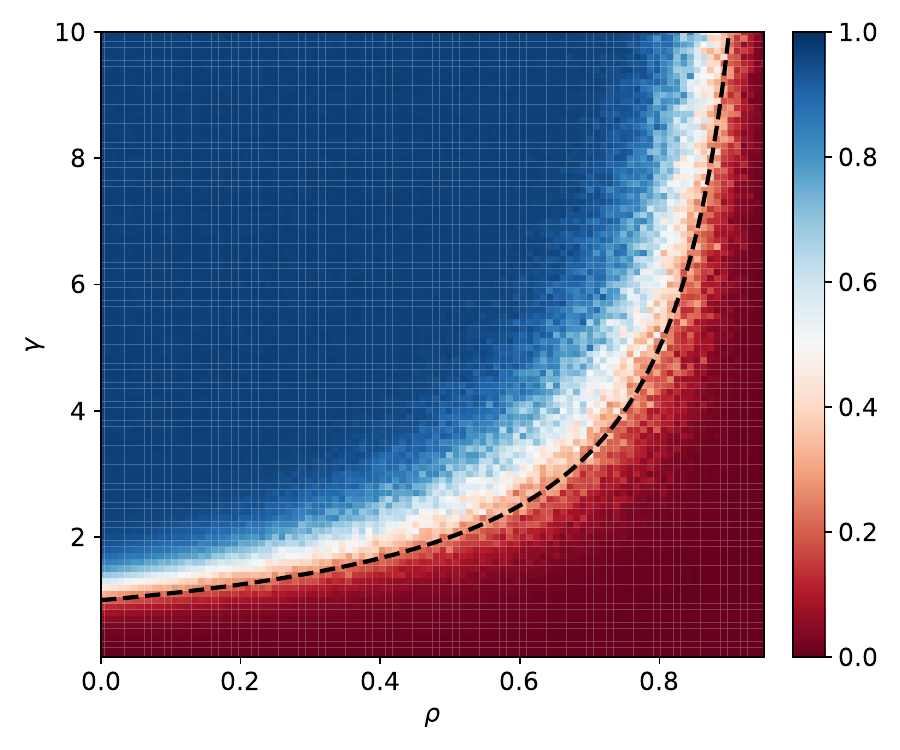}
  \caption{$d = 32$}
\end{subfigure}%
\begin{subfigure}{.32\textwidth}
  \centering
  \includegraphics[scale=0.345]{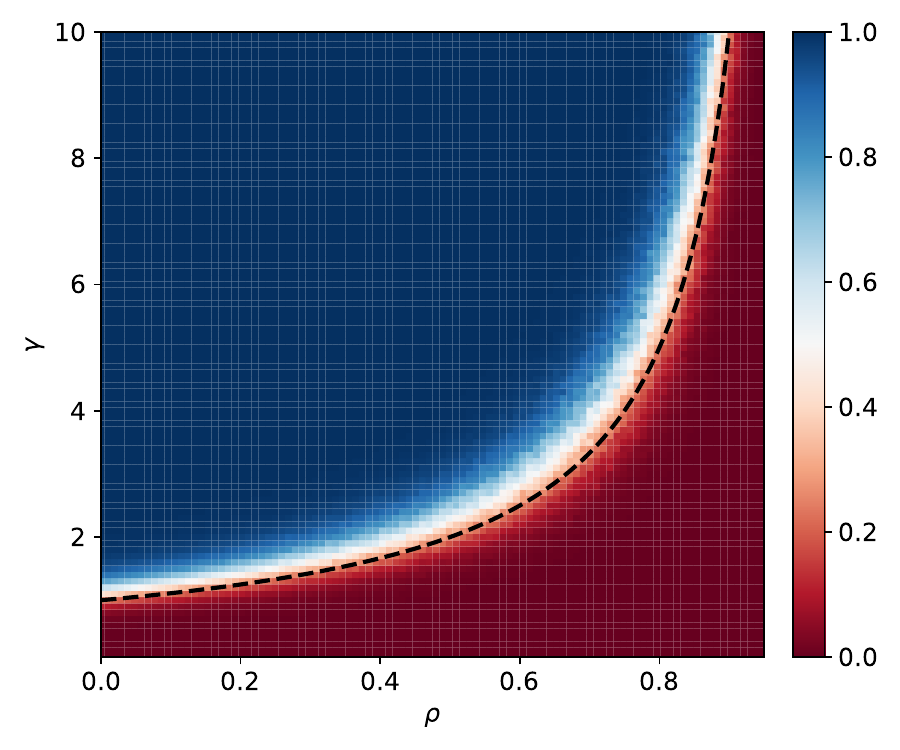}
  \caption{$d = 512$}
\end{subfigure}
\caption{
Plots of the normalized norm $\eta$ of the gradient, defined by Equation~(\ref{eqn:normalized_norm}), as a function of $\rho$ and $\gamma$. The tokens are first normalized by a pre-layer normalization and then passed through a single self-attention layer (\ref{e:att}), with residual connections and MLP layers omitted. The dash curve shows $\frac{1}{1-\rho}$, which approximate the actual phase transition with increasing accuracy as $d$ grows, as implied by Theorem \ref{thm:att propagation gradient norm 2}. The matrix norm $\eta$ is computed by the Hutchinson trace estimator~\citep{hutchinson1989stochastic}, based on the definition in Equation~(\ref{e:def jacobian norm}).
}\label{fig:lambda_g_single_attn}
\end{figure}

\section{Conclusion}

This paper develops a framework for understanding phase transitions in self-attention as the context length $n$ grows, identifying a critical scaling $\beta_n \asymp \log n$ that separates a subcritical contractive regime from a supercritical unchanged regime. A central message is that this transition is rooted in the geometry of the score landscape: in our model, the gaps between the top ordered scores remain $O(1)$, which leads to the $\log n$ scaling. We show that this scaling is robust under various perturbations and in settings permitting multiple near-ties, demonstrating that the $\log n$ law is a structural consequence of content-adaptive interactions.

\subsubsection*{Acknowledgments}
Philippe Rigollet is supported by NSF grants DMS-2022448.

\subsubsection*{LLM Usage}

Large Language Models (LLMs) were used during peer review for grammar and syntax refinement only. All ideas, technical content, analyses and conclusions remain the authors' work.



\appendix

\section{Proof of Theorem~\ref{thm:att limit phase 2} and Theorem~\ref{thm:att limit phase}}\label{sec:proof of angle phase transition}
In this section, we adopt Assumption~\ref{a:equal angle} and prove Theorem~\ref{thm:att limit phase} first. Then, we prove Theorem~\ref{thm:att limit phase 2}. To simplify notations, we define $\llbracket 1 , n \rrbracket \coloneqq \{1,2,\dots,n\}$ for any $n \in \mathbb{Z}_+$.

We study the asymptotics of the quantity $\langle x_i ' , x_j ' \rangle$ as $n \to +\infty$. We use the notation
    \begin{align}\label{e:def Zi}
        Z_i \coloneqq \sum_{k=1} ^n e^{a_{ik}} = e^{\bea} + \sum_{k \neq i} e^{a_{ik}}.
    \end{align}

\begin{lemma}\label{lem:Z_i asymptotics}
    Let $\bea = \gamma \log n$ where $\gamma$ is a positive constant. Under Assumption \ref{a:equal angle} and Equation~(\ref{e:att update}), for any $i \in \llbracket 1 , n \rrbracket$,
    \begin{align}\label{e:Z_i phase}
        Z_i = 
        \begin{cases}
           (1+o_n(1)) \cdot \left(\sum_{k \neq i} e^{a_{ik}}\right)   & \text{if $\gamma < \frac{1}{1-\rho_1}$},
            \\ (1+o_n(1)) \cdot e^{\bea}  & \text{if $\gamma > \frac{1}{1-\rho_2}$},
        \end{cases}
    \end{align}
where the terms $o_n(1)$ go to $0$ as $n \to +\infty$ with speeds independent of $i$ but only depending on $\gamma,\rho_1, \rho_2$.
\end{lemma}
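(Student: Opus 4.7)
The plan is to exploit Assumption~\ref{a:equal angle}, which pins every off-diagonal exponent $a_{ik}$ ($k\neq i$) to the compact interval $[\beta\rho_1,\beta\rho_2]$, while the diagonal contribution is exactly $e^{\beta}$. Writing $\beta=\gamma\log n$, each off-diagonal exponential becomes a power of $n$ sandwiched between $n^{\gamma\rho_1}$ and $n^{\gamma\rho_2}$, so the problem reduces to comparing the single term $n^{\gamma}$ with a sum of $n-1$ such powers. The whole argument is then a two-line ratio estimate, one in each direction.

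In the subcritical regime $\gamma < \frac{1}{1-\rho_1}$, I would lower-bound the off-diagonal mass by $(n-1)\,n^{\gamma\rho_1}$ and form the ratio $e^{\beta}/\sum_{k\neq i}e^{a_{ik}}$. This ratio is bounded above by $n^{\gamma(1-\rho_1)}/(n-1)$, whose exponent $\gamma(1-\rho_1)-1$ is strictly negative by the assumption on $\gamma$, so the ratio vanishes polynomially in $n$ and the first line of~\eqref{e:Z_i phase} follows. In the supercritical regime $\gamma > \frac{1}{1-\rho_2}$, I would instead upper-bound the off-diagonal mass by $(n-1)\,n^{\gamma\rho_2}$ and form the reverse ratio $\sum_{k\neq i}e^{a_{ik}}/e^{\beta}$; it is bounded by $(n-1)\,n^{-\gamma(1-\rho_2)}$, whose exponent $1-\gamma(1-\rho_2)$ is now strictly negative, yielding the second line of~\eqref{e:Z_i phase}.

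The only point that warrants attention---and it is not really an obstacle---is to confirm the uniformities stated in the lemma: the implicit constants $C_1,C_2$ governing $|o_n(1)|\le C_1 n^{-C_2}$ must depend only on $\gamma,\rho_1,\rho_2$ and must be independent of both the index $i$ and the ambient dimension $d$. Both requirements are immediate, since the bounds above involve only the scalar range $[\rho_1,\rho_2]$ supplied by Assumption~\ref{a:equal angle}; the particular token $i$ and the dimension $d$ never enter the estimate. I expect the same sandwiching principle to recur as the workhorse in the subsequent asymptotic analyses of $\langle x_i',x_j'\rangle$ and of the Jacobian norms.
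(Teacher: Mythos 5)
Your proposal is correct and follows essentially the same argument as the paper: both write $Z_i=e^{\beta}+\sum_{k\neq i}e^{a_{ik}}$, sandwich the off-diagonal sum between $(n-1)n^{\gamma\rho_1}$ and $(n-1)n^{\gamma\rho_2}$, and show the appropriate ratio decays polynomially because the exponent $\gamma(1-\rho_1)-1$ (resp. $1-\gamma(1-\rho_2)$) is negative. The uniformity in $i$ and $d$ is handled the same way in both.
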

\begin{proof}[Proof of Lemma~\ref{lem:Z_i asymptotics}]
    We notice that 
        \begin{align}
            Z_i = e^{\bea} + \sum_{k \neq i} e^{a_{ik}}.
        \end{align}
    We also notice that $e^{\bea t} = n^{\gamma t}$ for any $t$. It then holds that $e^{\bea} = n^{\gamma}$ and 
        \begin{align}
n^{\gamma \rho_1} (n-1) \leq   \sum_{k \neq i} e^{a_{ik}} \leq n^{\gamma \rho_2} (n-1)\,.
        \end{align}
Hence, when $\gamma < \frac{1}{1-\rho_1}$, $n^{\gamma} < n^{1+\gamma\rho_1}$, the leading order term in $Z_i$ is $\sum_{k \neq i} e^{a_{ik}}$. We also see that
    \begin{align}
        Z_i = \left(\frac{e^{\bea}}{\left(\sum_{k \neq i} e^{a_{ik}}\right)} + 1\right) \cdot \left(\sum_{k \neq i} e^{a_{ik}}\right),
    \end{align}
with
    \begin{align}
        \frac{e^{\bea}}{\left(\sum_{k \neq i} e^{a_{ik}}\right)} \leq \frac{n^\gamma}{n^{\gamma \rho_1} (n-1)},
    \end{align}
which goes to $0$ as $n \to +\infty$, and is independent of $i$ but only depending on $\gamma,\rho_1$. Similarly, when $\gamma > \frac{1}{1-\rho_2}$, $n^{\gamma} > n^{1+\gamma\rho_2}$, the leading order term in $Z_i$ is $e^{\bea}$, and similar arguments hold true.
\end{proof}

\begin{lemma}\label{lem:gamma large single point}
    Let $\bea = \gamma \log n$ where $\gamma$ is a positive constant. Under Assumption \ref{a:equal angle} and Equation~(\ref{e:att update}), if $\gamma > \frac{1}{1-\rho_2}$, then for any $i \in \llbracket 1 , n \rrbracket$,
        \begin{align}
            \att(y_i) = y_i + o_n(1), \text{ and hence } x_i '  = y_i + \ala x_i + o_n(1),
        \end{align}
    where the term $o_n(1)$ goes to $0$ as $n \to +\infty$ with a speed independent of $i$ but only depending on $\gamma,\rho_2$.
\end{lemma}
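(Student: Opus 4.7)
The plan is to extract the conclusion directly from the $Z_i$-asymptotics of Lemma~\ref{lem:Z_i asymptotics}, by showing that in the supercritical regime the self-weight $A_{ii}$ dominates and the off-diagonal weights $A_{ij}$, $j\neq i$, contribute only a vanishing correction. All estimates will be uniform in $i$ because Assumption~\ref{a:equal angle} gives uniform upper and lower bounds $\rho_1\le a_{ij}/\beta\le\rho_2$ and Lemma~\ref{lem:Z_i asymptotics} already guarantees a uniform-in-$i$ rate in the asymptotic $Z_i=(1+o_n(1))\,e^\beta$.

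First I would rewrite
\begin{equation*}
\att(y_i)=A_{ii}\,y_i+\sum_{j\neq i}A_{ij}\,y_j,
\qquad A_{ii}=\frac{e^\beta}{Z_i},\qquad A_{ij}=\frac{e^{a_{ij}}}{Z_i}.
\end{equation*}
With $\beta=\gamma\log n$ and $\gamma>\frac{1}{1-\rho_2}$, Lemma~\ref{lem:Z_i asymptotics} yields $Z_i=(1+o_n(1))\,e^\beta=(1+o_n(1))\,n^\gamma$, so $A_{ii}=1-o_n(1)$, and consequently $A_{ii}y_i=y_i+o_n(1)$ since $\|y_i\|=1$.

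Next I would bound the off-diagonal contribution. Using $a_{ij}\le\beta\rho_2$ from Assumption~\ref{a:equal angle} together with the lower bound $Z_i\ge\tfrac12 e^\beta$ (valid for $n$ large enough, uniformly in $i$, by Lemma~\ref{lem:Z_i asymptotics}), each off-diagonal weight satisfies
\begin{equation*}
A_{ij}\le \frac{e^{\beta\rho_2}}{\tfrac12 e^\beta}=2\,n^{-\gamma(1-\rho_2)}.
\end{equation*}
Summing over the $n-1$ indices $j\neq i$ and applying the triangle inequality with $\|y_j\|=1$,
\begin{equation*}
\Bigl\|\sum_{j\neq i}A_{ij}\,y_j\Bigr\|\le \sum_{j\neq i}A_{ij}\le 2(n-1)\,n^{-\gamma(1-\rho_2)}=2\,n^{1-\gamma(1-\rho_2)}+o_n(1),
\end{equation*}
and the exponent $1-\gamma(1-\rho_2)$ is strictly negative by the assumption $\gamma>\frac{1}{1-\rho_2}$. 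Therefore the off-diagonal sum is $o_n(1)$, with the polynomial decay rate depending only on $\gamma$ and $\rho_2$, and entirely uniform in $i$.

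Combining the two contributions gives $\att(y_i)=y_i+o_n(1)$ uniformly in $i$, and plugging into the update rule~\eqref{e:att update} yields $x_i'=y_i+\alpha x_i+o_n(1)$, as claimed. There is no serious obstacle here; the only point worth care is tracking that both the Lemma~\ref{lem:Z_i asymptotics} rate and the off-diagonal bound $n^{1-\gamma(1-\rho_2)}$ are of the form $C_1 n^{-C_2}$ with constants depending only on $\gamma,\rho_1,\rho_2$, so the resulting $o_n(1)$ conforms to the convention fixed in Section~\ref{sec:intro}.
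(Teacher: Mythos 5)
Your proposal is correct and follows essentially the same route as the paper's proof: both invoke Lemma~\ref{lem:Z_i asymptotics} to get $Z_i=(1+o_n(1))e^{\beta}$, isolate the diagonal term, and kill the off-diagonal sum via $\sum_{j\neq i}e^{a_{ij}}\le (n-1)n^{\gamma\rho_2}$ together with the negative exponent $1-\gamma(1-\rho_2)$. The only difference is cosmetic (you bound each $A_{ij}$ after dividing by $Z_i$, the paper factors out $Z_i^{-1}e^{\beta}$ first), so there is nothing further to add.
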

\begin{proof}[Proof of Lemma~\ref{lem:gamma large single point}]
    According to Lemma \ref{lem:Z_i asymptotics}, we see that when $\gamma > \frac{1}{1-\rho_2}$, $n^{\gamma} > n^{1+\gamma\rho_2}$, and hence,
        \begin{align}
            \att(y_i) = Z_i ^{-1} \left(e^{\bea} y_i + \sum_{j \neq i} e^{a_{ij}} y_j\right) = (1+o_n(1))\left(y_i+e^{-\bea} \sum_{j \neq i} e^{a_{ij}} y_j \right).
        \end{align}
    Because $\|y_j\|=1$, 
        \begin{align}
            \left\| e^{-\bea} \sum_{j \neq i} e^{a_{ij}} y_j \right\| \leq e^{-\bea} \sum_{j \neq i} e^{a_{ij}} \leq n^{-\gamma} \cdot n^{\gamma \rho_2} (n-1),
        \end{align}
which goes to $0$ as $n \to +\infty$, and is independent of $i$ but only depending on $\gamma,\rho_2$.  
This shows that when $\gamma > \frac{1}{1-\rho_2}$,
    \begin{align}
         \att(y_i) = (1+o_n(1))(y_i +o_n(1)) = y_i + o_n(1).
    \end{align}
\end{proof}


\begin{lemma}\label{lem:att update length}
    Under Assumption \ref{a:equal angle} and Equation~(\ref{e:att update}), for any $i \in \llbracket 1 , n \rrbracket$,
    \begin{align}\label{e:att update length}
        \begin{split}
            \langle x_i ' , x_i ' \rangle &= \ala^2 \|x_i\|^2 + \frac{2\ala \|x_i\|}{Z_i}  \left( e^{\bea} + \sum_{j \neq i}e^{a_{ij}} \langle y_i , y_j \rangle \right)
            \\  & \quad + \frac{1}{Z_i ^2}  \left( e^{2\bea} + 2 e^{\bea} \sum_{j \neq i}e^{a_{ij}} \langle y_i , y_j \rangle + \sum_{j \neq i} \sum_{k \neq i}e^{a_{ij}+a_{ik}} \langle y_k , y_j \rangle \right).
        \end{split}
    \end{align}
Let $\bea = \gamma \log n$ where $\gamma$ is a positive constant. When $\gamma < \frac{1}{1-\rho_1}$,
    \begin{align}\label{e:att length phase 1}
        \langle x_i ' , x_i ' \rangle = \ala^2 \|x_i\|^2 + 2\ala \|x_i\| \frac{\sum_{k \neq i}e^{a_{ik}} \langle y_i , y_k \rangle }{\sum_{k \neq i}e^{a_{ik}}} + \frac{\sum_{k \neq i} \sum_{l \neq i}e^{a_{ik}+a_{il}} \langle y_k , y_l \rangle}{\left( \sum_{k \neq i}e^{a_{ik}} \right)^2} + o_n(1).
    \end{align}
When $\gamma > \frac{1}{1-\rho_2}$, 
    \begin{align}\label{e:att length phase 2}
        \langle x_i ' , x_i ' \rangle = (\ala\|x_i\|+1)^2 + o_n(1).
    \end{align}
In both cases, the terms $o_n(1)$ go to $0$ as $n \to +\infty$ with speeds independent of $i$ but only depending on $\gamma,\rho_1, \rho_2,\ala$.
\end{lemma}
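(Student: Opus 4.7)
The plan is to first derive the exact identity \eqref{e:att update length} by a direct expansion, and then specialize it to each regime by invoking Lemma~\ref{lem:Z_i asymptotics} and Lemma~\ref{lem:gamma large single point}.

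For \eqref{e:att update length}, I would write $x_i' = \alpha x_i + \att(y_i)$ with $\att(y_i) = Z_i^{-1}\sum_{j=1}^n e^{a_{ij}} y_j$ and expand
\[
\langle x_i', x_i' \rangle = \alpha^2 \|x_i\|^2 + 2\alpha\,\langle x_i, \att(y_i)\rangle + \|\att(y_i)\|^2.
\]
Using $x_i = \|x_i\| y_i$, the bilinearity of the inner product, and the identity $a_{ii} = \beta$ that follows from $\|y_i\| = 1$, I would split each of the two sums into a $j = i$ diagonal contribution and a $j \neq i$ off-diagonal contribution. Collecting the diagonal terms produces the isolated factors of $e^{\beta}$ and $e^{2\beta}$ and yields exactly \eqref{e:att update length}.

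For the subcritical regime $\gamma < 1/(1-\rho_1)$, Lemma~\ref{lem:Z_i asymptotics} gives $Z_i = (1+o_n(1))\sum_{k \neq i} e^{a_{ik}}$ with an error independent of $i$. The key quantitative input is the uniform bound
\[
\frac{e^{\beta}}{Z_i} \;\le\; \frac{n^{\gamma}}{n^{\gamma\rho_1}(n-1)} \;=\; O\!\bigl(n^{\gamma(1-\rho_1)-1}\bigr),
\]
which tends to zero at a rate depending only on $\gamma$ and $\rho_1$. This allows me to discard the three summands in \eqref{e:att update length} that carry an isolated $e^{\beta}$ or $e^{2\beta}$ factor, after first controlling their trigonometric coefficients by the trivial bound $|\langle y_j, y_k \rangle| \le 1$. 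Substituting $Z_i^{-2} = (1+o_n(1))\bigl(\sum_{k \neq i} e^{a_{ik}}\bigr)^{-2}$ into the surviving quadratic term then produces \eqref{e:att length phase 1}. For the supercritical regime $\gamma > 1/(1-\rho_2)$, Lemma~\ref{lem:gamma large single point} already delivers $\att(y_i) = y_i + o_n(1)$ uniformly in $i$, so that $x_i' = (\alpha\|x_i\|+1) y_i + o_n(1)$; expanding $\|x_i'\|^2$ and bounding the cross term by $2(\alpha\|x_i\|+1)\cdot o_n(1)$ gives \eqref{e:att length phase 2}.

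The only real obstacle is bookkeeping the $o_n(1)$ terms uniformly in $i$. Since Lemmas~\ref{lem:Z_i asymptotics} and~\ref{lem:gamma large single point} already deliver this uniformity, the remaining task is to ensure that the quantities multiplying these errors are uniformly bounded: inner products by $1$ and $\|x_i\|$ by $\sqrt{q_2}$ under Assumption~\ref{a:equal angle}. As a result, the resulting $o_n(1)$ depends only on $\gamma$, $\rho_1$, $\rho_2$, $\alpha$ (and implicitly on $q_2$), but not on $i$ or on any finer geometric detail of the token configuration.
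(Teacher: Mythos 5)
Your proposal is correct and follows essentially the same route as the paper: expand $\langle x_i',x_i'\rangle = \alpha^2\|x_i\|^2 + 2\alpha\langle x_i,\att(y_i)\rangle + \|\att(y_i)\|^2$, split off the $j=i$ diagonal terms using $a_{ii}=\beta$, and then dispose of the isolated $e^{\beta}$, $e^{2\beta}$ contributions (respectively, the off-diagonal sums) via the asymptotics of $Z_i$ from Lemma~\ref{lem:Z_i asymptotics} and Lemma~\ref{lem:gamma large single point}. Your remark that the uniform error constant implicitly involves $q_2$ through the bound $\|x_i\|\le\sqrt{q_2}$ is accurate and slightly more careful than the paper's stated dependence.
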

\begin{proof}[Proof of Lemma~\ref{lem:att update length}]
    According to Equation~(\ref{e:att update}), we see that
        \begin{align}
             \langle x_i ' , x_i ' \rangle = \ala^2 \|x_i\|^2 + 2\ala \langle x_i ,\att(y_i) \rangle +  \langle \att(y_i) ,\att(y_i) \rangle.
        \end{align}
    Equation~(\ref{e:att update length}) follows from direct computations. Two phase transitions Equation~(\ref{e:att length phase 1}) and Equation~(\ref{e:att length phase 2}) follow from similar arguments as in Lemma \ref{lem:Z_i asymptotics}.
\end{proof}

\begin{lemma}\label{lem:att update angle}
    Under Assumption \ref{a:equal angle} and Equation~(\ref{e:att update}), for any two different $i, j \in \llbracket 1 , n \rrbracket$,
        \begin{align}\label{e:att update angle}
            \begin{split}
                \langle x_i ' , x_j ' \rangle &=  \ala^2 \langle x_i , x_j \rangle + \frac{\ala \|x_j\| }{ Z_i}  \left( e^{\bea} \langle y_j , y_i \rangle + \sum_{k \neq i} e^{a_{ik}} \langle y_j,y_k \rangle \right) + \frac{\ala \|x_i\| }{ Z_j}  \left( e^{\bea} \langle y_i , y_j \rangle + \sum_{l \neq j} e^{a_{jl}} \langle y_i,y_l \rangle \right)
                \\  &\quad + \frac{1}{Z_i Z_j} \left( e^{2\bea} \langle y_i , y_j \rangle + e^{\bea} \sum_{k \neq i} e^{a_{ik}}\langle y_j , y_k \rangle + e^{\bea} \sum_{l \neq j} e^{a_{jl}}\langle y_i , y_l \rangle + \sum_{k\neq i} \sum_{l \neq j}e^{a_{ik}+a_{jl}} \langle y_k, y_l \rangle \right) .
            \end{split}
        \end{align}
Let $\bea = \gamma \log n$ where $\gamma$ is a positive constant. When $\gamma < \frac{1}{1-\rho_1}$,
    \begin{align}\label{e:att angle phase 1}
        \begin{split}
            \langle x_i ' , x_j ' \rangle &= \ala^2 \langle x_i, x_j \rangle + \ala \|x_j\|\frac{ \sum_{k \neq i} e^{a_{ik}} \langle y_j,y_k \rangle }{ \sum_{k \neq i} e^{a_{ik}}}   + \ala \|x_i\|\frac{ \sum_{l \neq j} e^{a_{jl}} \langle y_i,y_l \rangle }{ \sum_{l \neq j} e^{a_{jl}} } 
            \\  & \quad + \frac{\sum_{k\neq i} \sum_{l \neq j}e^{a_{ik}+a_{jl}} \langle y_k, y_l \rangle}{\left( \sum_{k \neq i} e^{a_{ik}} \right) \cdot  \left( \sum_{l \neq j} e^{a_{jl}} \right)} + o_n(1).
        \end{split}
    \end{align}
When $\gamma > \frac{1}{1-\rho_2}$, 
    \begin{align}\label{e:att angle phase 2}
        \langle x_i ' , x_j ' \rangle = (\ala \|x_i\|+1) (\ala \|x_j\|+1)\langle y_i , y_j \rangle + o_n(1).
    \end{align}
\end{lemma}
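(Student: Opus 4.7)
\textbf{Proof plan for Lemma~\ref{lem:att update angle}.}
The starting point is to expand $\langle x_i', x_j'\rangle$ using the update rule \eqref{e:att update}, which gives
$$
\langle x_i', x_j'\rangle = \alpha^2 \langle x_i, x_j\rangle + \alpha \langle x_i, \att(y_j)\rangle + \alpha \langle x_j, \att(y_i)\rangle + \langle \att(y_i), \att(y_j)\rangle.
$$
Using $x_i=\|x_i\|\,y_i$, the two cross terms reduce to $\alpha\|x_i\|\,\langle y_i,\att(y_j)\rangle$ and $\alpha\|x_j\|\,\langle y_j,\att(y_i)\rangle$. Substituting the definition \eqref{e:att} of $\att(y_i)$ and separating the diagonal index ($k=i$ in the sum inside $\att(y_i)$, and $l=j$ in the sum inside $\att(y_j)$) from the off-diagonal indices produces exactly the four grouped terms in \eqref{e:att update angle}. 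This first part is a purely algebraic identity and requires no asymptotic input.

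For the subcritical phase \eqref{e:att angle phase 1}, I would apply Lemma~\ref{lem:Z_i asymptotics}: when $\gamma<1/(1-\rho_1)$, both $Z_i$ and $Z_j$ equal $(1+o_n(1))\sum_{k\neq i}e^{a_{ik}}$ and $(1+o_n(1))\sum_{l\neq j}e^{a_{jl}}$ respectively, with bounds uniform in $i,j$. I would then show that every term in \eqref{e:att update angle} involving an unpaired $e^{\beta}$ factor is negligible; for example, $e^{\beta}/Z_i \le n^{\gamma}/(n^{1+\gamma\rho_1})= o_n(1)$ and $e^{2\beta}/(Z_iZ_j)=o_n(1)$ likewise. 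Since $|\langle y_k,y_l\rangle|\le 1$ and $\|x_j\|\le\sqrt{q_2}$, each such error is multiplied by a uniformly bounded factor, so these terms collapse into a single $o_n(1)$. Dividing the surviving double sum by the asymptotic denominators $\sum e^{a_{ik}}\sum e^{a_{jl}}$ yields \eqref{e:att angle phase 1}.

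For the supercritical phase \eqref{e:att angle phase 2}, the cleanest route is to invoke Lemma~\ref{lem:gamma large single point}, which gives $\att(y_i)=y_i+o_n(1)$ and $\att(y_j)=y_j+o_n(1)$ with uniform speed. Then $\langle\att(y_i),\att(y_j)\rangle = \langle y_i,y_j\rangle+o_n(1)$ because $\|y_i\|=\|y_j\|=1$, and the two cross terms reduce to $\alpha\|x_j\|\langle y_j,y_i\rangle+o_n(1)$ and $\alpha\|x_i\|\langle y_i,y_j\rangle+o_n(1)$. Combined with $\alpha^2\langle x_i,x_j\rangle=\alpha^2\|x_i\|\|x_j\|\langle y_i,y_j\rangle$, these three pieces factor as
$$
\bigl(\alpha\|x_i\|+1\bigr)\bigl(\alpha\|x_j\|+1\bigr)\,\langle y_i,y_j\rangle + o_n(1),
$$
establishing \eqref{e:att angle phase 2}. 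The error terms remain $o_n(1)$ because $\|x_i\|\le\sqrt{q_2}$ uniformly.

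There is no essential obstacle here: the argument is a systematic bookkeeping exercise on top of the two preparatory lemmas. The main point of care is to track that all $o_n(1)$ bounds stay uniform over the pair $(i,j)$ with constants depending only on $\gamma,\rho_1,\rho_2,q_1,q_2,\alpha$, so that the lemma can be iterated safely across many attention layers as remarked after Theorem~\ref{thm:att limit phase}.
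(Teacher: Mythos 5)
Your proposal is correct and follows essentially the same route as the paper: expand $\langle x_i',x_j'\rangle$ via the update rule, separate the diagonal index in each $\att$ sum to obtain the exact identity \eqref{e:att update angle}, and then deduce both asymptotic regimes from the dominance of $\sum_{k\neq i}e^{a_{ik}}$ (resp.\ $e^{\bea}$) in $Z_i$ as in Lemma~\ref{lem:Z_i asymptotics}. Your explicit appeal to Lemma~\ref{lem:gamma large single point} in the supercritical case and your uniformity bookkeeping are just slightly more detailed versions of what the paper leaves as ``direct computations.''
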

\begin{proof}[Proof of Lemma~\ref{lem:att update angle}]
    According to Equation~(\ref{e:att update}), we see that for two different $i, j \in \llbracket 1 , n \rrbracket$,
        \begin{align}
             \langle x_i ' , x_j ' \rangle = \ala^2 p + \ala \langle x_i ,\att(y_j) \rangle + \ala \langle x_j ,\att(y_i) \rangle+ \langle \att(y_i) ,\att(y_j) \rangle.
        \end{align}
    Equation~(\ref{e:att update angle}) follows from direct computations. Two phase transitions Equation~(\ref{e:att angle phase 1}) and Equation~(\ref{e:att angle phase 2}) follow from similar arguments as in Lemma \ref{lem:Z_i asymptotics}.
\end{proof}

Next, we prove Theorem~\ref{thm:att limit phase}.
\begin{proof}[Proof of Theorem~\ref{thm:att limit phase}]
    We first discuss the case when $\gamma < \frac{1}{1-\rho_1}$. According to Equation~(\ref{e:att angle phase 1}) and Assumption \ref{a:equal angle}, we see that
        \begin{align}
            \begin{split}
                \langle x_i ' , x_j ' \rangle &\geq \ala ^2 \|x_i\| \|x_j\| \rho_1 + \ala \|x_j\|\rho_1+ \ala \|x_i\|\rho_1 + \rho_1 +o_n(1)
                \\  &= \rho_1 (\ala \|x_i\|+1) (\ala \|x_j\|+1) + o_n(1).
            \end{split}
        \end{align}
By Equation~(\ref{e:att length phase 1}), we see that
    \begin{align}
        \begin{split}
            \langle x_i ' , x_i ' \rangle &\leq \ala^2 \|x_i\|^2 + 2\ala \|x_i\| \rho_2 + \rho_2 + o_n(1)
            \\  &= \ala^2 \|x_i\|^2 + 2\ala \|x_i\| +1 - (1-\rho_2)(1+2\ala \|x_i\|) + o_n(1)
            \\  &\leq (\ala \|x_i\|+1)^2 - (1-\rho_2)(1+2\ala q_1)+o_n(1).
        \end{split}
    \end{align}
We have a similar inequality for $\langle x_j ' , x_j ' \rangle $. So, there is a constant $\delta>0$ depending on $\rho_2,\ala,q_1,q_2$ and independent of $n$, such that 
    \begin{align}
        \frac{1}{\|x_i'\|} \geq \frac{1+\delta}{\ala \|x_i\|+1} +o_n(1), \text{ and } \frac{1}{\|x_j'\|} \geq \frac{1+\delta}{\ala \|x_j\|+1} +o_n(1).
    \end{align}
Hence,
    \begin{align}
        \langle y_i ' , y_j ' \rangle \geq \rho_1(1+\delta)^2 + o_n(1)\geq \rho_1 + \varepsilon + o_n(1),
    \end{align}
for $\varepsilon = \rho_1(1+2\delta)\delta>0$ independent of $n$.

For the case when $\gamma < \frac{1}{1-\rho_2}$, Equation~(\ref{e:update phase 2 identity}) and Equation~(\ref{e:update phase 2}) follow directly from Lemma~\ref{lem:gamma large single point}, Lemma~\ref{lem:att update length}, and Lemma~\ref{lem:att update angle}.
\end{proof}

\begin{proof}[Proof of Theorem~\ref{thm:att limit phase 2}]
    We notice that Assumption~\ref{a:simplex} corresponds to the special case when $q_1=q_2=q$ and $\rho_1 = \rho_2=\rho$ in Assumption \ref{a:equal angle}. Clearly, $Z_i$ is independent of the choice of $i \in \llbracket 1 , n \rrbracket$ by its definition Equation~(\ref{e:def Zi}). According to the explicit forms Equation~(\ref{e:att update length}) in Lemma \ref{lem:att update length} and Equation~(\ref{e:att update angle}) in Lemma \ref{lem:att update angle}, one directly sees that both $\langle x_i , x_i \rangle$ and $\langle x_i , x_j \rangle$ are independent of the choices of $i,j \in \llbracket 1 , n \rrbracket$. We can further compute that 
    for any $i \in \llbracket 1 , n \rrbracket$,
    \begin{align}\label{e:length update phase}
        \lim_{n \to +\infty} \langle x_i ' , x_i ' \rangle = 
        \begin{cases}
            \ala^2 q+ 2\ala \sqrt{q}\rho + \rho  & \text{if $\gamma < \frac{1}{1-\rho}$},
            \\  \ala^2 q + \ala \sqrt{q} (1+\rho) + \frac{1+3\rho}{4} & \text{if $\gamma = \frac{1}{1-\rho}$},
            \\ (\ala\sqrt{q}+1)^2& \text{if $\gamma > \frac{1}{1-\rho}$},
        \end{cases}
    \end{align}
and for any two different $i,j \in \llbracket 1 , n \rrbracket$, 
    \begin{align}\label{e:angle update phase}
        \lim_{n \to +\infty} \langle x_i ' , x_j ' \rangle = \rho (\ala\sqrt{q}+1)^2 .
    \end{align}
Equation~(\ref{e:simplex_phase}) follows from Equation~(\ref{e:length update phase}) and Equation~(\ref{e:angle update phase}).

When $\gamma < \frac{1}{1-\rho}$, we see that 
    \begin{align}
        \lim_{n \to +\infty} \langle y_i ' , y_j ' \rangle = \frac{\rho (\ala\sqrt{q}+1)^2}{\ala^2 q+ 2\ala \sqrt{q}\rho + \rho} > \frac{\rho (\ala\sqrt{q}+1)^2}{\ala^2 q+ 2\ala \sqrt{q} + 1} = \rho,
    \end{align}
where the strict inequality is because $\rho <1$. When $\gamma = \frac{1}{1-\rho}$, we can similarly show that $\lim_{n \to +\infty} \langle y_i ' , y_j ' \rangle > \rho$. This completes the proof for Theorem \ref{thm:att limit phase 2}.
\end{proof}

\section{Proof of Theorem~\ref{thm:att propagation gradient norm 2} and Theorem~\ref{thm:att propagation gradient norm}}\label{sec:proof of gradient phase transition}

We prove Theorem~\ref{thm:att propagation gradient norm} first. We need to explicitly compute terms in $\frac{\partial (\att(N(x_j)))_v}{\partial (x_i)_u}$, for which we need the following lemmas.

\subsection{Proof of Theorem~\ref{thm:att propagation gradient norm}}
\begin{lemma}\label{lem:norm gradient}
    For any $i,k \in \llbracket 1, n \rrbracket$ and $u,w \in \llbracket 1 , d \rrbracket$,
        \begin{align}
            \frac{\partial (N(x_k))_w}{\partial (x_i)_u} = \delta_{ik} \frac{\delta_{wu}\|x_k\|^2 - (x_k)_w (x_k)_u}{\|x_k\|^3}.
        \end{align}
\end{lemma}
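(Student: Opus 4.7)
The statement is a direct computation of the Jacobian of the normalization map $N(x) = x/\|x\|$, and the plan splits into two observations corresponding to the two factors on the right-hand side.

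First, I would observe that $N(x_k) = x_k/\|x_k\|$ depends only on $x_k$ and not on $x_i$ for $i \neq k$. Consequently
\begin{align*}
\frac{\partial (N(x_k))_w}{\partial (x_i)_u} = 0 \qquad \text{whenever } i \neq k,
\end{align*}
which accounts for the Kronecker factor $\delta_{ik}$. It therefore suffices to handle the case $i = k$.

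Second, for $i = k$, I would compute componentwise using the quotient rule. Writing $(N(x_k))_w = (x_k)_w / \|x_k\|$, the two ingredients are
\begin{align*}
\frac{\partial (x_k)_w}{\partial (x_k)_u} = \delta_{wu}, \qquad \frac{\partial \|x_k\|}{\partial (x_k)_u} = \frac{(x_k)_u}{\|x_k\|},
\end{align*}
where the second equality comes from differentiating $\|x_k\| = \sqrt{\sum_r (x_k)_r^2}$. Combining these via the quotient rule and multiplying numerator and denominator by $\|x_k\|$ yields the claimed formula
\begin{align*}
\frac{\partial (N(x_k))_w}{\partial (x_k)_u} = \frac{\delta_{wu}\|x_k\| - (x_k)_w (x_k)_u/\|x_k\|}{\|x_k\|^2} = \frac{\delta_{wu}\|x_k\|^2 - (x_k)_w (x_k)_u}{\|x_k\|^3}.
\end{align*}

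There is no real obstacle here; the lemma is purely bookkeeping and serves only to record the Jacobian of $N$ in coordinates for use in subsequent computations of $\nabla_X\!\bigl(\clatt(\cln(X))\bigr)$. If desired, one can also note the coordinate-free restatement $\nabla N(x) = \|x\|^{-1}(I_d - N(x)N(x)^\top)$, which matches the expression appearing in \eqref{e:propagation gradient phase 2} of Theorem~\ref{thm:att propagation gradient norm}.
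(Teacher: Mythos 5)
Your proposal is correct and matches the paper's proof, which is the same one-line quotient-rule computation of $\partial\bigl((x_k)_w\|x_k\|^{-1}\bigr)/\partial (x_i)_u$ with the $\delta_{ik}$ factor noted up front. Nothing is missing; the coordinate-free remark $\nabla N(x)=\|x\|^{-1}(I_d-N(x)N(x)^\top)$ is a nice consistency check against \eqref{e:propagation gradient phase 2}.
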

\begin{proof}[Proof of Lemma~\ref{lem:norm gradient}]
    \begin{align}
        \frac{\partial (N(x_k))_w}{\partial (x_i)_u} = \frac{\partial ((x_k)_w \cdot \|x_k\|^{-1})}{\partial (x_i)_u} = \delta_{ik} \frac{\delta_{wu}\|x_k\| - (x_k)_w \cdot \frac{(x_k)_u}{\|x_k\|}}{\|x_k\|^2 }.
    \end{align}
\end{proof}

\begin{lemma}\label{lem:att gradient}
    For any $k,j \in \llbracket 1, n \rrbracket$ and $w,v \in \llbracket 1 , d \rrbracket$,
        \begin{align}
            \begin{split}
                &\frac{\partial (\att(y_j))_v}{\partial (y_k)_w} 
                \\&= \bigg[\left(\delta_{kj}\bea\left( \sum_{m=1} ^n e^{\bea \langle y_j , y_m\rangle} (y_m)_w (y_m)_v \right) + e^{\bea \langle y_j , y_k\rangle} (\bea (y_j)_w (y_k)_v+\delta_{wv})\right)\cdot \left( \sum_{l=1} ^n e^{\bea \langle y_j,y_l \rangle} \right)
                \\  & \quad - \left(\delta_{kj} \bea \left(\sum_{l=1} ^n  e^{\bea \langle y_j,y_l \rangle} (y_l)_w \right)+ \bea  e^{\bea \langle y_j,y_k \rangle}(y_j)_w\right)  \cdot \left( \sum_{m=1} ^n e^{\bea \langle y_j,y_m \rangle} (y_m)_v \right) \bigg] \\ & \quad \cdot \left( \sum_{l=1} ^n e^{\bea \langle y_j,y_l \rangle} \right)^{-2}.
            \end{split}
        \end{align}
\end{lemma}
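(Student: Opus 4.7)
The plan is to derive the formula by writing $\att(y_j)$ as a quotient of smooth functions of $(y_k)_w$ and then applying the quotient, product, and chain rules in a careful index-tracking computation. Specifically, by \eqref{e:att} and \eqref{e:Aij aij},
\begin{align*}
    (\att(y_j))_v = \frac{N_j}{D_j}, \qquad N_j \coloneqq \sum_{m=1}^n e^{\bea \langle y_j , y_m\rangle}(y_m)_v, \qquad D_j \coloneqq \sum_{l=1}^n e^{\bea \langle y_j , y_l\rangle},
\end{align*}
so that
\begin{align*}
    \frac{\partial (\att(y_j))_v}{\partial (y_k)_w} = \frac{(\partial_{(y_k)_w} N_j) \, D_j - N_j \, (\partial_{(y_k)_w} D_j)}{D_j^2},
\end{align*}
which already matches the overall structure of the claimed formula.

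The next step is to compute $\partial_{(y_k)_w} N_j$ and $\partial_{(y_k)_w} D_j$ using the elementary identity
\begin{align*}
    \frac{\partial \langle y_j, y_m \rangle}{\partial (y_k)_w} = \delta_{kj} (y_m)_w + \delta_{km} (y_j)_w.
\end{align*}
Plugging this into the chain rule for the exponentials yields
\begin{align*}
    \partial_{(y_k)_w} D_j = \delta_{kj} \bea \sum_{l=1}^n e^{\bea \langle y_j, y_l \rangle}(y_l)_w + \bea\, e^{\bea \langle y_j, y_k \rangle}(y_j)_w,
\end{align*}
which is exactly the second factor subtracted in the numerator of the lemma's expression.

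For $\partial_{(y_k)_w} N_j$ one must additionally apply the product rule to the factor $(y_m)_v$, whose derivative is $\delta_{km}\delta_{wv}$. This produces three contributions: the $\delta_{kj}$-part $\delta_{kj} \bea \sum_{m=1}^n e^{\bea \langle y_j , y_m\rangle}(y_m)_w (y_m)_v$, the $\delta_{km}$-part from the exponential $\bea\, e^{\bea \langle y_j, y_k \rangle}(y_j)_w (y_k)_v$, and the product-rule term $e^{\bea \langle y_j, y_k \rangle}\delta_{wv}$. Collecting these reproduces precisely the first bracketed factor in the lemma's numerator.

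The main obstacle here is purely bookkeeping: one has to keep the two roles of $k$ (as a possible duplicate of the ``query index'' $j$ and as a summation index inside $N_j, D_j$) straight, without conflating the Kronecker deltas. Once the two index cases are separated cleanly, substituting the expressions for $\partial_{(y_k)_w} N_j$ and $\partial_{(y_k)_w} D_j$ into the quotient-rule formula and multiplying by $D_j^{-2} = \bigl(\sum_{l=1}^n e^{\bea \langle y_j, y_l\rangle}\bigr)^{-2}$ gives the stated identity without further simplification. No estimate or asymptotic analysis is required for this lemma; it is an exact identity established by direct differentiation.
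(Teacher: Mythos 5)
Your proposal is correct and matches the paper's proof, which likewise writes $(\att(y_j))_v$ as the quotient $N_j/D_j$ and invokes a direct differentiation; you have simply made explicit the identity $\partial_{(y_k)_w}\langle y_j,y_m\rangle=\delta_{kj}(y_m)_w+\delta_{km}(y_j)_w$ and the resulting three contributions to $\partial_{(y_k)_w}N_j$, all of which check out.
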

\begin{proof}[Proof of Lemma~\ref{lem:att gradient}]
    By Equation~(\ref{e:att}), 
        \begin{align}
            (\att(y_j))_v = \frac{\sum_{m=1} ^n e^{\bea \langle y_j,y_m \rangle} (y_m)_v}{\sum_{l=1} ^n e^{\bea \langle y_j,y_l \rangle}}.
        \end{align}
    A direct computation shows that
        \begin{align}
            \begin{split}
                &\frac{\partial (\att(y_j))_v}{\partial (y_k)_w}
                \\ &= \bigg[\left(\sum_{m=1} ^n \left(\delta_{kj}\bea (y_m)_w (y_m)_v + \delta_{km} \bea (y_j)_w (y_m)_v+ \delta_{km}\delta_{wv}\right)e^{\bea \langle y_j , y_m\rangle}\right)\cdot \left( \sum_{l=1} ^n e^{\bea \langle y_j,y_l \rangle} \right)
                \\  & \quad - \left(\sum_{l=1} ^n  \left(\delta_{kj}(y_l)_w + \delta_{kl} (y_j)_w\right)\bea e^{\bea \langle y_j,y_l \rangle} \right) \cdot \left( \sum_{m=1} ^n e^{\bea \langle y_j,y_m \rangle} (y_m)_v \right) \bigg] \\ & \quad \cdot \left( \sum_{l=1} ^n e^{\bea \langle y_j,y_l \rangle} \right)^{-2}
                \\ &= \bigg[\left(\delta_{kj}\bea \sum_{m=1} ^n e^{\bea \langle y_j , y_m\rangle} (y_m)_w (y_m)_v + e^{\bea \langle y_j , y_k\rangle} (\bea (y_j)_w (y_k)_v+\delta_{wv})\right)\cdot \left( \sum_{l=1} ^n e^{\bea \langle y_j,y_l \rangle} \right)
                \\  & \quad - \left(\delta_{kj} \bea \sum_{l=1} ^n  e^{\bea \langle y_j,y_l \rangle} (y_l)_w + \bea  e^{\bea \langle y_j,y_k \rangle}(y_j)_w\right)  \cdot \left( \sum_{m=1} ^n e^{\bea \langle y_j,y_m \rangle} (y_m)_v \right) \bigg] \\ & \quad \cdot \left( \sum_{l=1} ^n e^{\bea \langle y_j,y_l \rangle} \right)^{-2}.
            \end{split}
        \end{align}
\end{proof}
 For $x,y \in \R^d$, we use $x \otimes y$ to denote the $d \times d$ matrix with $(u,v)$-th element $(x \otimes y)_{uv} = (x)_u (y)_v$, i.e., $x \otimes y \coloneqq x y^T$. We then have the following proposition.
\begin{lemma}\label{lem:att norm chain}
    Adopt Assumption \ref{a:equal angle} and Equation~(\ref{e:att update}). 
    For any $i,j \in \llbracket 1, n \rrbracket$, consider the $d \times d$ matrix formed by  $\frac{\partial (\att(N(x_j)))_v}{\partial (x_i)_u}$, for $u,v \in \llbracket 1 , d \rrbracket$. Denote $y_k = N(x_k)$ for each $k \in \llbracket 1, n \rrbracket$. Then, this matrix has the following form:
        \begin{align}\label{e:att jacobian}
             \left( \frac{\partial (\att(N(x_j)))_v}{\partial (x_i)_u} \right)_{d \times d} = \|x_i\|^{-\frac{1}{2}}\left[(\bfr_1 +\bfr_2)Z_j - (\bfu_1 + \bfu_2)\otimes \sumv_j \right] \cdot Z_j^{-2},
        \end{align}
    where $Z_j = \sum_{l=1} ^n e^{\bea \langle y_j , y_l \rangle}$ as in Equation~(\ref{e:def Zi}),
    \begin{align}\label{e:notation R12}
        \bfr_1 \coloneqq \delta_{ij}\bea  \left( \sumt_j - y_i \otimes (\sumt_j y_i)  \right)  , \quad \bfr_2 \coloneqq e^{\bea \langle y_j , y_i\rangle} \left((-y_i+\bea \proj_{y_i} y_j) \otimes y_i+I_d  \right),
    \end{align}
and 
    \begin{align}\label{e:notation U1234}
        \begin{split}
            \bfu_1 \coloneqq \delta_{ij} \bea     \left(\proj_{y_i} \sumv_j \right), \quad \bfu_2 \coloneqq \bea  e^{\bea \langle y_j,y_i \rangle}\left(\proj_{y_i} y_j \right).
        \end{split}
    \end{align}
In Equation~(\ref{e:notation R12}) and Equation~(\ref{e:notation U1234}),
    \begin{align}\label{e:simple V,W}
        \sumv_j \coloneqq \sum_{m=1} ^n e^{\bea \langle y_j , y_m \rangle} y_m, \quad \sumt_j \coloneqq \sum_{m=1} ^n e^{\bea \langle y_j , y_m \rangle} y_m \otimes y_m, \quad \proj_x y \coloneqq y - \langle y , x \rangle x.
    \end{align}
\end{lemma}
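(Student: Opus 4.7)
The plan is to apply the chain rule to the composition $\att \circ N$, substitute the coordinate-wise Jacobians from Lemmas~\ref{lem:norm gradient} and~\ref{lem:att gradient}, and repackage the outcome via the projection $\proj_{y_i} = I_d - y_i \otimes y_i$. The lemma has no analytic content beyond those two inputs; it is entirely an exercise in tensor-product bookkeeping.

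First I would invoke the chain rule,
\[
\frac{\partial (\att(N(x_j)))_v}{\partial (x_i)_u} = \sum_{k=1}^n \sum_{w=1}^d \frac{\partial (\att(y_j))_v}{\partial (y_k)_w}\,\frac{\partial (y_k)_w}{\partial (x_i)_u}.
\]
Lemma~\ref{lem:norm gradient} forces $k=i$ and gives $\partial(y_i)_w/\partial(x_i)_u = \|x_i\|^{-1}(I_d - y_i \otimes y_i)_{uw}$. Reading the remaining sum over $w$ as a matrix product, with $u$ serving as the row index and $v$ as the column index in the output matrix (the convention under which~\eqref{e:att jacobian} is to be interpreted), this reduces the Jacobian to $\|x_i\|^{-1}(I_d - y_i \otimes y_i)\, B$, where $B$ is the $d \times d$ matrix whose $(w,v)$-entry is $\partial(\att(y_j))_v/\partial(y_i)_w$ from Lemma~\ref{lem:att gradient} specialized to $k=i$.

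Next I would rewrite $B$ in matrix form. The identifications $\sum_m e^{\bea\langle y_j,y_m\rangle}(y_m)_w (y_m)_v = (\sumt_j)_{wv}$, $\sum_l e^{\bea\langle y_j,y_l\rangle}(y_l)_w = (\sumv_j)_w$, and $(y_j)_w(y_i)_v = (y_j \otimes y_i)_{wv}$ let me express $Z_j^2 B$ as $Z_j C_1 - C_2 \otimes \sumv_j$, where $C_1 = \delta_{ij}\bea\,\sumt_j + e^{\bea \langle y_j,y_i\rangle}(I_d + \bea\, y_j \otimes y_i)$ and $C_2 = \delta_{ij}\bea\, \sumv_j + \bea\, e^{\bea \langle y_j,y_i\rangle} y_j$. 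The two summands correspond to differentiating the numerator of $\att(y_j)_v$ and the denominator $Z_j$, respectively.

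Finally I would left-multiply by $I_d - y_i \otimes y_i$ term by term, using the two elementary identities $(I_d - y_i \otimes y_i)A = A - y_i \otimes (A^\top y_i)$ for any matrix $A$ and $(I_d - y_i \otimes y_i)(u \otimes v) = (\proj_{y_i} u) \otimes v$ for any vectors $u,v$, both of which rely on $\|y_i\|=1$. Applied to the symmetric matrix $\sumt_j$ the first identity produces $\bfr_1$, and applied to $y_j\otimes y_i$ it yields $\proj_{y_i} y_j \otimes y_i$, which together with the $(I_d - y_i\otimes y_i)$ coming from the $I_d$ summand of $C_1$ rebuilds $\bfr_2$. The second identity converts $C_2 \otimes \sumv_j$ into $(\proj_{y_i} C_2)\otimes \sumv_j = (\bfu_1 + \bfu_2)\otimes \sumv_j$ by distributing $\proj_{y_i}$ over the two summands of $C_2$. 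Collecting everything and factoring out $\|x_i\|^{-1} Z_j^{-2}$ yields~\eqref{e:att jacobian}. The main obstacle is purely bookkeeping: one has to fix the row/column convention on $B$ so that the projection acts from the left, and then track how this left multiplication acts only on the row-side of each outer product---this is what explains why $\bfu_1, \bfu_2$ involve $\proj_{y_i}$ while $\bfr_2$ still displays the intact vector $y_i$ on the column side of its outer products.
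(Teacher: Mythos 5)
Your proposal is correct and follows essentially the same route as the paper's proof: chain rule through $N$, substitution of Lemmas~\ref{lem:norm gradient} and~\ref{lem:att gradient}, and left-multiplication by $I_d - y_i\otimes y_i$ to assemble $\bfr_1,\bfr_2,\bfu_1,\bfu_2$. One point worth flagging: your derivation produces the prefactor $\|x_i\|^{-1}$, whereas the lemma as stated (and the paper's own intermediate step, which writes $\|x_i\|^{-3/2}$ where the computation gives $\|x_i\|^{-3}$) displays $\|x_i\|^{-1/2}$; your power is the correct one, as confirmed by the paper's downstream uses (the $\delta_{ij}\|x_i\|^{-1}(I_d - y_i\otimes y_i)$ leading term in Lemma~\ref{lem:gamma large single point gradient} and the $1/q$ factors in Theorem~\ref{thm:att propagation gradient norm 2}), so you have silently corrected a typo rather than introduced an error.
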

\begin{proof}[Proof of Lemma~\ref{lem:att norm chain}]
    By chain rule and Proposition \ref{lem:norm gradient}, we have that
    \begin{align}
        \begin{split}
            \frac{\partial (\att(N(x_j)))_v}{\partial (x_i)_u} &= \sum_{k=1} ^n \sum_{w = 1} ^d \frac{\partial (\att(y_j))_v}{\partial (y_k)_w} \bigg|_{Y=\cln(X)} \cdot \frac{\partial (N(x_k))_w}{\partial (x_i)_u}
            \\  &= \|x_i\|^{-\frac{3}{2}} \left(\|x_i\| \cdot \frac{\partial (\att(y_j))_v}{\partial (y_i)_u}  - \sum_{w=1} ^d (x_i)_u (x_i)_w \frac{\partial (\att(y_j))_v}{\partial (y_i)_w} \right) \bigg|_{Y=\cln(X)}
            \\  &= \|x_i\|^{-\frac{1}{2}} \left( \frac{\partial (\att(y_j))_v}{\partial (y_i)_u}  - (y_i)_u \sum_{w=1} ^d  (y_i)_w \frac{\partial (\att(y_j))_v}{\partial (y_i)_w} \right) \bigg|_{Y=\cln(X)}.
        \end{split}
    \end{align}
According to Proposition \ref{lem:att gradient} and the notation $Z_j = \sum_{l=1} ^n e^{a_{jl}}$, we see that
            \begin{align}
            \begin{split}
                &\sum_{w=1} ^d (y_i)_w\frac{\partial (\att(y_j))_v}{\partial (y_i)_w} 
                \\&= \bigg[\left(\delta_{ij}\bea \left( \sum_{m=1} ^n e^{\bea \langle y_j , y_m\rangle} \langle y_m , y_i \rangle (y_m)_v \right) + e^{\bea \langle y_j , y_i\rangle} (\bea \langle y_j, y_i \rangle +1)(y_i)_v\right)\cdot Z_j
                \\  & \quad - \left(\delta_{ij} \bea \left(\sum_{l=1} ^n  e^{\bea \langle y_j,y_l \rangle} \langle y_l , y_i \rangle \right) + \bea  e^{\bea \langle y_j, y_i \rangle} \langle y_j , y_i \rangle \right)  \cdot \left( \sum_{m=1} ^n e^{\bea \langle y_j,y_m \rangle} (y_m)_v \right) \bigg]  \cdot Z_j ^{-2}.
            \end{split}
        \end{align}
Hence,
    \begin{align}\label{e:grad ij}
        \begin{split}
        &\|x_i\|^{\frac{1}{2}} \cdot \frac{\partial (\att(N(x_j)))_v}{\partial (x_i)_u}
        \\  &=\left( \frac{\partial (\att(y_j))_v}{\partial (y_i)_u}  - (y_i)_u \sum_{w=1} ^d  (y_i)_w \frac{\partial (\att(y_j))_v}{\partial (y_i)_w} \right) \bigg|_{Y=\cln(X)}
            \\  &= \bigg[\bigg[\delta_{ij}\bea\left( \sum_{m=1} ^n e^{\bea \langle y_j , y_m\rangle} \left( (y_m)_u (y_m)_v - \langle y_m , y_i \rangle (y_m)_v (y_i)_u \right) \right) 
            \\  & \quad + e^{\bea \langle y_j , y_i\rangle} (\bea (y_j)_u (y_i)_v+\delta_{uv} - \left(\bea \langle y_j, y_i \rangle +1 \right)(y_i)_v (y_i)_u )\bigg]\cdot Z_j
                \\  & \quad - \bigg[\delta_{ij} \bea \left(\sum_{l=1} ^n  e^{\bea \langle y_j,y_l \rangle} \left((y_l)_u - \langle y_l , y_i \rangle (y_i)_u\right) \right)
                \\  & \quad + \bea  e^{\bea \langle y_j,y_i \rangle}\left((y_j)_u-\langle y_j , y_i \rangle (y_i)_u \right) \bigg]  \cdot \left( \sum_{m=1} ^n e^{\bea \langle y_j,y_m \rangle} (y_m)_v \right) \bigg]  \cdot Z_j ^{-2}.
        \end{split}
    \end{align}
We then adopt the notation Equation~(\ref{e:simple V,W}), i.e.,
    \begin{align}
        \sumv_j = \sum_{m=1} ^n e^{\bea \langle y_j , y_m \rangle} y_m, \quad \sumt_j = \sum_{m=1} ^n e^{\bea \langle y_j , y_m \rangle} y_m \otimes y_m, \quad \proj_x y \coloneqq y - \langle y , x \rangle x.
    \end{align}
So, the matrix form of Equation~(\ref{e:grad ij}) becomes
    \begin{align}
        \begin{split}
            &\bigg[\bigg[\delta_{ij}\bea   \left( \sumt_j - y_i \otimes (\sumt_j y_i)  \right)  
            + e^{\bea \langle y_j , y_i\rangle} (\bea y_j \otimes y_i+I_d - \left(\bea \langle y_j, y_i \rangle +1 \right) y_i \otimes y_i )\bigg]\cdot Z_j
            \\  & - \bigg[\delta_{ij} \bea   \left(\sumv_j - \langle \sumv_j , y_i \rangle y_i\right) 
             + \bea  e^{\bea \langle y_j,y_i \rangle}\left(y_j-\langle y_j , y_i \rangle y_i \right) \bigg]  \otimes \sumv_j \bigg]  \cdot Z_j ^{-2}
             \\ &= \bigg[\bigg[\delta_{ij}\bea   \left( \sumt_j - y_i \otimes (\sumt_j y_i)  \right)  
            + e^{\bea \langle y_j , y_i\rangle} ((-y_i+\bea \proj_{y_i} y_j) \otimes y_i+I_d  )\bigg]\cdot Z_j
            \\  & - \bigg[\delta_{ij} \bea     \left(\proj_{y_i} \sumv_j \right) 
             + \bea  e^{\bea \langle y_j,y_i \rangle} \left(\proj_{y_i} y_j \right) \bigg]  \otimes \sumv_j \bigg]  \cdot Z_j ^{-2}.
        \end{split}
    \end{align}
We further use the notations in Equation~(\ref{e:notation R12}) and Equation~(\ref{e:notation U1234}), i.e., 
    \begin{align}
        \bfr_1 = \delta_{ij}\bea e^{\bea \rho}  \left( \sumt_j - y_i \otimes (\sumt_j y_i)  \right)  , \quad \bfr_2 = e^{\bea \langle y_j , y_i\rangle} \left((-y_i+\bea \proj_{y_i} y_j) \otimes y_i+I_d  \right),
    \end{align}
and 
    \begin{align}
        \begin{split}
            &\bfu_1 = \delta_{ij} \bea     \left(\proj_{y_i} \sumv_j \right), \quad \bfu_2 = \bea  e^{\bea \langle y_j,y_i \rangle}\left(\proj_{y_i} y_j \right).
        \end{split}
    \end{align}
Finally, the matrix form of Equation~(\ref{e:grad ij}) becomes 
    \begin{align}\label{e:grad ij 2}
        \left[(\bfr_1 +\bfr_2)Z - (\bfu_1 + \bfu_2)\otimes \sumv_j \right] \cdot Z_j^{-2}.
    \end{align}

\end{proof}

\begin{lemma}\label{lem:gamma large single point gradient}
    Let $\bea = \gamma \log n$ where $\gamma$ is a positive constant. Under Assumption \ref{a:equal angle} and Equation~(\ref{e:att update}), if $\gamma > \frac{1}{1-\rho_2}$, then for any fixed $i,j \in \llbracket 1 , n \rrbracket$, the $d \times d$ matrix satisfies
        \begin{align}
            \left(\frac{\partial (\att(N(x_j)))_v}{\partial (x_i)_u} \right)_{d \times d} =  \frac{\delta_{ij}}{\|x_i\|} \left( I_d - y_i \otimes y_i \right) + \bfo_n(1) + o_n(1) \cdot  I_d, 
        \end{align}
    where the leading order term is exactly $ \frac{\partial (N(x_j))_v}{\partial (x_i)_u}$. The term $\bfo_n(1)$ ($o_n(1)$, respectively) is a $d \times d $ matrix (constant, respectively) with matrix norm as defined in Equation~(\ref{e:def jacobian norm}) (value, respectively) going to $0$ as $n \to +\infty$, with a speed independent of $i,j$ but only depending on $\gamma,\rho_2,q_1$.
\end{lemma}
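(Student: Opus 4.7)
The plan is to start from the explicit matrix identity of Lemma~\ref{lem:att norm chain}, which expresses the desired Jacobian as
\[
\left(\frac{\partial (\att(N(x_j)))_v}{\partial (x_i)_u}\right)_{d\times d} = \|x_i\|^{-1}\bigl[(\bfr_1+\bfr_2)Z_j - (\bfu_1+\bfu_2)\otimes \sumv_j\bigr]Z_j^{-2},
\]
and to isolate the dominant rank-one piece $(I_d - y_i\otimes y_i)/\|x_i\|$, channeling each remaining contribution into one of the two error slots permitted by the decomposition. Set $\varepsilon := \gamma(1-\rho_2)-1 > 0$. By Lemma~\ref{lem:Z_i asymptotics}, $Z_j = (1+\delta_n)e^{\bea}$ with $\delta_n = O(n^{-\varepsilon})$ uniformly in $j$, so $\log n \cdot \delta_n = o_n(1)$; this is the key cancellation that tames the $\bea = \gamma\log n$ prefactors appearing in $\bfr_1,\bfu_1,\bfu_2$.

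The diagonal case $i=j$ carries the leading term. Since $\langle y_i,y_i\rangle = 1$ and $\proj_{y_i}y_i = 0$, formulas \eqref{e:notation R12}--\eqref{e:notation U1234} collapse to $\bfr_2 = e^{\bea}(I_d - y_i\otimes y_i)$ and $\bfu_2 = 0$. Writing $e^{\bea}/Z_j = 1-\delta_n'$, one splits
\[
\bfr_2/Z_j = (I_d - y_i\otimes y_i) - \delta_n' I_d + \delta_n'(y_i\otimes y_i),
\]
where the first term is the claimed leading order, the second belongs in the scalar $o_n(1)\cdot I_d$ slot, and the third has Frobenius norm $\delta_n'$ (dimension free) and lands in $\bfo_n(1)$. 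The remainders $\bfr_1/Z_j$ and $\bfu_1\otimes\sumv_i/Z_j^2$ are controlled via the identities $\sumt_i - y_i\otimes(\sumt_i y_i) = \sum_{m\neq i} e^{a_{im}}(\proj_{y_i} y_m)\otimes y_m$ and $\proj_{y_i}\sumv_i = \sum_{m\neq i} e^{a_{im}}\proj_{y_i} y_m$, each of Frobenius norm at most $(n-1)n^{\gamma\rho_2}$. With the extra factor $\bea$ and the bound $\|\sumv_i\|\leq Z_i$, both contributions are $O(\log n\cdot n^{-\varepsilon}) = o_n(1)$, hence absorbed into $\bfo_n(1)$.

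For the off-diagonal case $i\neq j$, the Kronecker $\delta_{ij}$ annihilates $\bfr_1$ and $\bfu_1$, leaving only $\bfr_2/Z_j$ and $(\bfu_2\otimes\sumv_j)/Z_j^2$. Decomposing
\[
\bfr_2/Z_j = (e^{a_{ji}}/Z_j)\,I_d + (e^{a_{ji}}/Z_j)(-y_i + \bea\proj_{y_i}y_j)\otimes y_i,
\]
the scalar $e^{a_{ji}}/Z_j = O(n^{-1-\varepsilon})$ feeds another contribution into the $o_n(1)\cdot I_d$ slot, while the rank-one tail has Frobenius norm $O(\log n\cdot n^{-1-\varepsilon})$ and goes into $\bfo_n(1)$. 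Similarly, $\|(\bfu_2\otimes\sumv_j)/Z_j^2\|_F \leq \bea e^{a_{ji}}/Z_j = o_n(1)$. Since $\|x_i\|^{-1}\leq q_1^{-1/2}$, multiplication by the final scalar preserves all Frobenius estimates with constants depending only on $\gamma,\rho_2,q_1$.

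The main obstacle is keeping $\bfo_n(1)$ uniform in the ambient dimension $d$: because $\|I_d\|_F = \sqrt{d}$, a naive factorization of the form $(1+o_n(1))(I_d - y_i\otimes y_i)$ would leak a $\sqrt{d}$ factor into the remainder. The resolution is precisely the decomposition structure dictated by the lemma statement: every identity-proportional correction must be routed into the explicit scalar $o_n(1)\cdot I_d$ slot, so that only rank-one and rank-two residuals remain for $\bfo_n(1)$, whose Frobenius norms are dimension-free. Once this bookkeeping is enforced, all estimates are uniform in $i,j$ and $d$, and the rates depend only on $\gamma,\rho_2,q_1$ as claimed.
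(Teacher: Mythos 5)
Your proposal is correct and follows essentially the same route as the paper: start from the decomposition of Lemma~\ref{lem:att norm chain}, use $Z_j=(1+O(n^{-\varepsilon}))e^{\bea}$ from Lemma~\ref{lem:Z_i asymptotics} with $\varepsilon=\gamma(1-\rho_2)-1$, and bound $\bfr_1 Z_j^{-1}$, the non-leading part of $\bfr_2 Z_j^{-1}$, and $(\bfu_1+\bfu_2)\otimes\sumv_j\cdot Z_j^{-2}$ term by term in Frobenius norm, routing identity-proportional corrections into the separate $o_n(1)\cdot I_d$ slot. Your explicit remark about why that slot is needed (to avoid leaking a $\sqrt d$ through $\|I_d\|_F$) is exactly the bookkeeping the paper performs implicitly, so there is no substantive difference.
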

\begin{proof}[Proof of Lemma~\ref{lem:gamma large single point gradient}]
    We frequently use this formula: for two vectors $V_1,V_2$, the matrix norm of $V_1 \otimes V_2$ as defined in Equation~(\ref{e:def jacobian norm}) is $\|V_1\|\|V_2\|$. 
    When $\gamma > \frac{1}{1-\rho_2}$, $n^{\gamma} > n^{1+\gamma\rho_2}$, and we know from Lemma \ref{lem:Z_i asymptotics} that $Z_j = (1+o_n(1)) \cdot e^{\bea}$ for any $j \in \llbracket 1 , n \rrbracket$. Adopt the notations in Proposition \ref{lem:att norm chain}, we then show the following facts when $\gamma > \frac{1}{1-\rho_2}$:
        \begin{align}
            \bfr_1 Z_j ^{-1} = \bfo_n(1), \quad \bfr_2 Z_j ^{-1} = \delta_{ij} \left( -y_i \otimes y_i+I_d \right)+ \bfo_n(1) + o_n(1) \cdot I_d ,
        \end{align}
    and 
        \begin{align}
            \left[ (\bfu_1 + \bfu_2)\otimes \sumv_j \right] \cdot Z_j ^{-2} = \bfo_n(1).
        \end{align}
        
First, for $\bfr_1 Z_j ^{-1}$, when $i \neq j$, we have that $\bfr_1 = 0$ by its definition. When $i=j$, $\bfr_1 = \bea \sum_{m=1} ^n e^{\bea \langle y_i , y_m\rangle} \left( y_m \otimes y_m - \langle y_m , y_i \rangle y_i\otimes y_m\right)$ and we notice that the term when $m=i$ is $0$. So, because $\| y_m \otimes y_m - \langle y_m , y_i \rangle y_i\otimes y_m \| \leq \|y_m\|^2 + \|y_m\|^2\|y_i\|^2 = 2$, $e^{\bea} = n^{\gamma}$,
    \begin{align}
        \|\bfr_1\| Z_j ^{-1} \leq \bea (n-1)e^{\bea \rho_2}\cdot 2  Z_j^{-1} \leq 2 \gamma \log(n) \cdot n^{\gamma \rho_2+1 - \gamma}(1+o_n(1)),
    \end{align}
which goes to $0$ with a speed independent of $i,j$, because $\gamma \rho_2+1 - \gamma <0$.

For $\bfr_2 Z_j ^{-1}$, we notice that when $i \neq j$, $ e^{\bea \langle y_i , y_j \rangle} Z_j ^{-1} \leq e^{\bea (\rho_2-1)}(1+o_n(1)) = n^{\gamma (\rho_2 - 1)}(1+o_n(1))$, which goes to $0$ with a speed independent of $i,j$. So, $\bfr_2 Z_j ^{-1} =  \bfo_n(1) + o_n(1) \cdot  I_d$ when $i \neq j$. When $i=j$, $\bfr_2 = e^{\bea} \left( -y_i \otimes y_i+I_d \right)$, and so $\bfr_2 Z_j ^{-1} = (-y_i \otimes y_i+I_d) + \bfo_n(1) + o_n(1) \cdot  I_d$.

For $\left[ (\bfu_1 + \bfu_2)\otimes \sumv_j \right] \cdot Z_j ^{-2}$, we see that when $i \neq j$, $\bfu_1 =0$, and so
    \begin{align}
        \begin{split}
            &\left\| (\bfu_1 + \bfu_2)\otimes \sumv_j \right\| \cdot Z_j ^{-2} \leq Z_j ^{-2} \bea \sum_{m=1} ^n e^{\bea \langle y_j , y_m + y_i\rangle} \| y_m \| \| \proj_{y_i} y_j\| 
            \\  &\leq Z_j ^{-2} \bea e^{\bea (1+\rho_2)}n = \gamma \log(n) n^{\gamma(\rho_2-1) + 1}(1+o_n(1)),
        \end{split}
    \end{align}
which goes to $0$ with a speed independent of $i,j$ because $\gamma > \frac{1}{1-\rho_2}$. When $i=j$, $\bfu_2 = 0$, and so
    \begin{align}
        \begin{split}
            &\left\| (\bfu_1 + \bfu_2)\otimes \sumv_j \right\| \cdot Z_j ^{-2} \leq Z_j ^{-2} \bea \|\proj_{y_i} \sumv_i\| \|\sumv_i\| 
            \\  &\leq  Z_j ^{-2} \bea \left(\sum_{m \neq i} e^{\bea \langle y_i , y_m \rangle}  \|\proj_{y_i} y_m\| \right) \cdot \left(e^{\bea} + \sum_{m \neq i} e^{\bea \langle y_i , y_m \rangle}  \|\proj_{y_i} y_m\| \right)
            \\  &\leq Z_j ^{-2} \bea \left(e^{\bea \rho_2}n \right) \cdot \left(e^{\bea}+ e^{\bea \rho_2}n \right) 
            \\  &= \gamma \log(n) n^{\gamma(\rho_2-1) + 1}(1+n^{\gamma(\rho_2-1) + 1})(1+o_n(1)),
        \end{split}
    \end{align}
which goes to $0$ with a speed independent of $i,j$ because $\gamma > \frac{1}{1-\rho_2}$. Hence, $\left[ (\bfu_1 + \bfu_2)\otimes \sumv_j \right] \cdot Z_j ^{-2} = \bfo_n(1)$.
\end{proof}

\begin{lemma}\label{lem:gamma small all point gradient}
    Let $\bea = \gamma \log n$ where $\gamma$ is a positive constant. Under Assumption \ref{a:equal angle} and Equation~(\ref{e:att update}), if $\gamma < \frac{1}{1-\rho_1}$, then for  fixed $i,j \in \llbracket 1 , n \rrbracket$, the $d \times d$ matrix satisfies
        \begin{align}
            \left\| \left(\frac{\partial (\att(N(x_j)))_v}{\partial (x_i)_u} \right)_{d \times d} \right\| \leq \|x_i\|^{-\frac{1}{2}} \cdot \left( 2 \bea \delta_{ij}+ (2\bea + \sqrt{d}) e^{a_{ij}}Z_j ^{-1} \right).
        \end{align}
\end{lemma}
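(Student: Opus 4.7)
The plan is to start from the explicit matrix expression in Lemma~\ref{lem:att norm chain}, namely
\begin{align*}
\left(\frac{\partial (\att(N(x_j)))_v}{\partial (x_i)_u}\right)_{d\times d}
= \|x_i\|^{-\frac{1}{2}} \left[(\bfr_1+\bfr_2) Z_j - (\bfu_1+\bfu_2)\otimes \sumv_j\right] Z_j^{-2},
\end{align*}
and bound the Frobenius norm \eqref{e:def jacobian norm} of the bracketed expression term-by-term using the triangle inequality, together with the elementary identities $\|u\otimes v\|_F = \|u\|\cdot\|v\|$, $\|y_m\|=1$, $\|\sumv_j\|\le Z_j$, and $\|I_d-y_i\otimes y_i\|_F = \sqrt{d-1}\le\sqrt{d}$.

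For the $\delta_{ij}$-supported pieces $\bfr_1$ and $\bfu_1$, the key observation is that
\begin{align*}
\sumt_j - y_i\otimes(\sumt_j y_i) = \sum_{m=1}^n e^{a_{jm}}\,(\proj_{y_i} y_m)\otimes y_m,
\end{align*}
so that $\|\sumt_j - y_i\otimes(\sumt_j y_i)\|_F \le \sum_m e^{a_{jm}}\|\proj_{y_i}y_m\|\cdot\|y_m\| \le Z_j$, giving $\|\bfr_1\|_F/Z_j \le \bea\,\delta_{ij}$. Similarly $\|\bfu_1\|\le \bea\|\proj_{y_i}\sumv_j\|\le \bea Z_j$, and combining with $\|\sumv_j\|\le Z_j$ yields $\|\bfu_1\otimes\sumv_j\|_F/Z_j^2\le \bea\,\delta_{ij}$. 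Summing gives the $2\bea\,\delta_{ij}$ contribution.

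For $\bfr_2$, I would rewrite $-y_i\otimes y_i+I_d = I_d - y_i\otimes y_i$, which is an orthogonal projection matrix with Frobenius norm exactly $\sqrt{d-1}$. Then
\begin{align*}
\bfr_2 = e^{a_{ij}}\bigl[\bea\,(\proj_{y_i}y_j)\otimes y_i + (I_d-y_i\otimes y_i)\bigr],
\end{align*}
so the triangle inequality combined with $\|\proj_{y_i}y_j\|\le 1$ and $\|y_i\|=1$ delivers $\|\bfr_2\|_F \le e^{a_{ij}}(\bea + \sqrt{d})$. Meanwhile $\|\bfu_2\|=\bea e^{a_{ij}}\|\proj_{y_i}y_j\|\le \bea e^{a_{ij}}$, so $\|\bfu_2\otimes\sumv_j\|_F/Z_j^2 \le \bea e^{a_{ij}}/Z_j$. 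Adding the two contributions gives the $(2\bea+\sqrt{d})e^{a_{ij}}Z_j^{-1}$ term.

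There is no genuine obstacle here; the whole argument is a careful bookkeeping of Frobenius norms. The only subtlety is to avoid losing a factor by bounding $\bfr_2$ crudely as $\|(-y_i+\bea\proj_{y_i}y_j)\otimes y_i\|_F + \|I_d\|_F \le 1+\bea+\sqrt{d}$, which would produce an extra additive constant; the projection-matrix rewriting $I_d-y_i\otimes y_i$ is what absorbs the stray $1$ and yields the clean coefficient $2\bea+\sqrt{d}$. Putting all four bounds together and dividing by $Z_j^2 \cdot Z_j^{-1}$ (coming from the $Z_j$ factor in the first group and $Z_j^2$ in the denominator) produces the stated bound.
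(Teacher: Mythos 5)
Your proposal is correct and follows essentially the same route as the paper: decompose via Lemma~\ref{lem:att norm chain}, use $\|u\otimes v\|=\|u\|\,\|v\|$ and $\|\sumv_j\|\le Z_j$ to get $\|\bfr_1\|Z_j^{-1}\le\bea\delta_{ij}$ and $\|\bfu_1\otimes\sumv_j\|Z_j^{-2}\le\bea\delta_{ij}$, and bound $\bfr_2$ via $\|I_d-y_i\otimes y_i\|=\sqrt{d-1}\le\sqrt d$ together with $\|\bfu_2\|\le\bea e^{a_{ij}}$. The rewriting $\sumt_j-y_i\otimes(\sumt_j y_i)=\sum_m e^{a_{jm}}(\proj_{y_i}y_m)\otimes y_m$ is exactly the observation the paper uses, so there is nothing to add.
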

\begin{proof}[Proof of Lemma~\ref{lem:gamma small all point gradient}]
    According to Lemma \ref{lem:Z_i asymptotics}, when $\gamma < \frac{1}{1-\rho_1}$, $Z_j = (1+o_n(1)) \cdot \left(\sum_{k \neq j} e^{a_{jk}}\right)$ for any $j \in \llbracket 1 , n \rrbracket$, and $Z_j \geq n^{\gamma \rho_1 + 1} (1+o_n(1)) > n^\gamma (1+o_n(1))$, because $\gamma \rho_1 + 1 > \gamma$. Adopt the notations in Proposition \ref{lem:att norm chain}, we then show the following facts when $\gamma < \frac{1}{1-\rho_1}$:
        \begin{align}
            \|\bfr_1\| Z_j ^{-1} \leq \delta_{ij} \bea, \quad \|\bfr_2\| Z_j ^{-1} \leq Z_j ^{-1} e^{ a_{ij}} \left(\bea+\sqrt{d-1}  \right),
        \end{align}
    and 
        \begin{align}
            \left\| (\bfu_1 + \bfu_2)\otimes \sumv_j \right\| \cdot Z_j ^{-2} \leq \bea \left(\delta_{ij} + e^{a_{ij}} Z_j ^{-1}\right) .
        \end{align}

    First, for $\bfr_1 Z_j ^{-1}$, when $i \neq j$, we have that $\bfr_1 = 0$ by its definition. When $i=j$, $\bfr_1 = \bea \sum_{m=1} ^n e^{\bea \langle y_i , y_m\rangle} \left( y_m \otimes y_m - \langle y_m , y_i \rangle y_i\otimes y_m\right)$. So, because we have that $\| y_m \otimes y_m - \langle y_m , y_i \rangle y_i\otimes y_m \| = \|\proj_{y_i} y_n \otimes y_m\|= \|\proj_{y_i} y_n\| \|y_m\| \leq 1$,
    \begin{align}
        \|\bfr_1\| Z_j ^{-1} \leq \bea Z_j\cdot  \cdot   Z_j^{-1} = \bea.
    \end{align}

For $\bfr_2 Z_j ^{-1}$, because $\|-y_i \otimes y_i +I_d\| = \sqrt{d-1}$, we have that 
    \begin{align}
        \begin{split}
            \|\bfr_2\| Z_j ^{-1} &\leq Z_j ^{-1} e^{ a_{ij}} \left(\bea+\sqrt{d-1}  \right) .
        \end{split}
    \end{align}

For $\left[ (\bfu_1 + \bfu_2)\otimes \sumv_j \right] \cdot Z_j ^{-2}$, we see that $\|\sumv_j\| \leq \sum_{m=1} ^n e^{\bea \langle y_j , y_m \rangle} = Z_j$. Also, $\|\bfu_1\| Z_j ^{-1} \leq \delta_{ij}\bea \|\sumv_j\| Z_j ^{-1} \leq \delta_{ij}\bea$, $\|\bfu_2\| Z_j ^{-1} \leq \bea e^{a_{ij}} Z_j ^{-1} $. Hence, we have that
    \begin{align}
        \left\| (\bfu_1 + \bfu_2)\otimes \sumv_j \right\| \cdot Z_j ^{-2} \leq \bea \left(\delta_{ij} + e^{a_{ij}} Z_j ^{-1}\right).
    \end{align}
\end{proof}

\begin{proof}[Proof of Theorem~\ref{thm:att propagation gradient norm}]
    Theorem~\ref{thm:att propagation gradient norm} follows directly from Lemma~\ref{lem:gamma large single point gradient} and Lemma~\ref{lem:gamma small all point gradient}.
\end{proof}

\subsection{Proof of Theorem~\ref{thm:att propagation gradient norm 2}}\label{sec:proof gradient norm 2}

The proof for Theorem~\ref{thm:att propagation gradient norm 2} requires more delicate arguments. The part when $\gamma > \frac{1}{1-\rho}$ in Theorem~\ref{thm:att propagation gradient norm 2} directly follows from Lemma~\ref{lem:gamma large single point gradient}, so we only focus on the part when $\gamma \leq \frac{1}{1-\rho}$. We remark that when $\gamma < \frac{1}{1-\rho}$, our result is that $\frac{1}{nd}\|\nabla_X X'\|^2=  0+o_n(1)$, which is a better estimate than Equation~(\ref{e:propagation gradient phase 1}) in Theorem~\ref{thm:att propagation gradient norm}.

We first have the following lemma which replaces Lemma~\ref{lem:att norm chain} when we adopt Assumption~\ref{a:simplex}.

\begin{lemma}\label{lem:att norm chain 2}
    Adopt Assumption \ref{a:simplex} and Equation~(\ref{e:att update}). 
    For any $i,j \in \llbracket 1, n \rrbracket$, consider the $d \times d$ matrix formed by  $\frac{\partial (\att(N(x_j)))_v}{\partial (x_i)_u}$, for $u,v \in \llbracket 1 , d \rrbracket$. Denote $y_k = N(x_k)$ for each $k \in \llbracket 1, n \rrbracket$. Then, this matrix has the following form:
        \begin{align}\label{e:att jacobian simplex}
             \left( \frac{\partial (\att(N(x_j)))_v}{\partial (x_i)_u} \right)_{d \times d} = q^{-\frac{1}{2}}\left[(\bfr_1 +\bfr_2)Z - (\bfu_1 + \bfu_2)\otimes (\bfu_3 + \bfu_4) \right] \cdot Z^{-2},
        \end{align}
    where $Z = e^{\bea} + (n-1)e^{\bea \rho}$,
    \begin{align}\label{e:notation R12 simplex}
        \bfr_1 \coloneqq \delta_{ij}\bea e^{\bea \rho} \left( \sumt - y_i \otimes (\sumt y_i)  \right)  , \quad \bfr_2 \coloneqq e^{\bea \langle y_j , y_i\rangle} \left((-y_i+\bea \proj_{y_i} y_j) \otimes y_i+I_d  \right),
    \end{align}
and 
    \begin{align}\label{e:notation U1234 simplex}
        \begin{split}
            &\bfu_1 \coloneqq \delta_{ij} \bea e^{\bea \rho}    \left(\proj_{y_i} \sumv \right), \quad \bfu_2 \coloneqq \bea  e^{\bea \langle y_j,y_i \rangle}\left(\proj_{y_i} y_j \right),
            \\  & \bfu_3 \coloneqq (e^{\bea}-e^{\bea \rho})y_j, \quad \bfu_4 \coloneqq e^{\bea \rho} \sumv.
        \end{split}
    \end{align}
In Equation~(\ref{e:notation R12 simplex}) and Equation~(\ref{e:notation U1234 simplex}),
    \begin{align}\label{e:simple V,W simplex}
        \sumv \coloneqq \sum_{m=1} ^n y_m, \quad \sumt \coloneqq \sum_{m=1} ^n y_m \otimes y_m, \quad \proj_x y \coloneqq y - \langle y , x \rangle x.
    \end{align}
\end{lemma}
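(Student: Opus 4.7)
The plan is to derive Lemma~\ref{lem:att norm chain 2} as a direct specialization of Lemma~\ref{lem:att norm chain}, since Assumption~\ref{a:simplex} is the $q_1=q_2=q$, $\rho_1=\rho_2=\rho$ case of Assumption~\ref{a:equal angle}. The work amounts to rewriting the quantities $Z_j$, $\sumv_j$, $\sumt_j$ from \eqref{e:simple V,W} in terms of the \emph{unweighted} sums $\sumv=\sum_m y_m$ and $\sumt=\sum_m y_m\otimes y_m$ appearing in \eqref{e:simple V,W simplex}, and then collecting the pieces.

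First, I would use $\langle y_j,y_j\rangle=1$ and $\langle y_j,y_m\rangle=\rho$ for $m\neq j$ to split the exponential weights into two buckets. This gives immediately
$Z_j=e^{\bea}+(n-1)e^{\bea\rho}=Z$, independent of $j$, and
$\sumv_j=e^{\bea}y_j+e^{\bea\rho}\sum_{m\neq j}y_m=(e^{\bea}-e^{\bea\rho})y_j+e^{\bea\rho}\sumv=\bfu_3+\bfu_4,$
which is exactly why the tensor product in \eqref{e:att jacobian} reorganizes as $(\bfu_1+\bfu_2)\otimes(\bfu_3+\bfu_4)$ in \eqref{e:att jacobian simplex}. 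The same splitting gives the second-moment decomposition
$\sumt_j=(e^{\bea}-e^{\bea\rho})\,y_j\otimes y_j+e^{\bea\rho}\sumt.$
The prefactor $\|x_i\|^{-1/2}$ from Lemma~\ref{lem:att norm chain} also becomes $q^{-1/2}$ by the constant-length hypothesis.

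Next, I would verify that $\bfr_1$ and $\bfu_1$ (both carrying a $\delta_{ij}$) reduce to the claimed expressions. The key algebraic identity is that the operations $M\mapsto M-y_i\otimes(My_i)$ and $v\mapsto \proj_{y_i}v$ both annihilate any component proportional to $y_i\otimes y_i$ and $y_i$ respectively; this is the only nontrivial step, and it is what removes the $(e^{\bea}-e^{\bea\rho})$ term. Explicitly, when $i=j$:
$\sumt_j-y_i\otimes(\sumt_j y_i)=e^{\bea\rho}\bigl(\sumt-y_i\otimes(\sumt y_i)\bigr)\qquad\text{and}\qquad \proj_{y_i}\sumv_j=e^{\bea\rho}\proj_{y_i}\sumv,$
which yield the factors $\bea e^{\bea\rho}$ inside $\bfr_1$ and $\bfu_1$ in \eqref{e:notation R12 simplex}--\eqref{e:notation U1234 simplex}. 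The quantities $\bfr_2$ and $\bfu_2$ in Lemma~\ref{lem:att norm chain} are already written purely in terms of $y_i$, $y_j$, and the exponential $e^{\bea\langle y_j,y_i\rangle}$, so they transfer to the simplex setting without modification.

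Finally, I would substitute all four rewrites $(Z,\ \sumv_j,\ \sumt_j,\ \|x_i\|^{-1/2})$ into \eqref{e:att jacobian} and read off \eqref{e:att jacobian simplex}. There is no substantive obstacle; the only point requiring care is the rank-one cancellation identity above, after which everything is a bookkeeping exercise of collecting coefficients $e^{\bea}$, $e^{\bea\rho}$, and $e^{\bea\langle y_j,y_i\rangle}$ and recognizing the vectors $\bfu_3,\bfu_4$ as the two pieces of $\sumv_j$.
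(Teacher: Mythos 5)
Your proposal is correct and follows essentially the same route as the paper: both proofs specialize Lemma~\ref{lem:att norm chain} to the simplex case, identify $Z_j=Z$ and $\sumv_j=\bfu_3+\bfu_4$, and use the fact that $M\mapsto M-y_i\otimes(My_i)$ and $v\mapsto\proj_{y_i}v$ annihilate the $y_i\otimes y_i$ and $y_i$ components (equivalently, that the $m=i$ summand vanishes) to reduce $\bfr_1$ and $\bfu_1$ to their unweighted forms with the factor $e^{\bea\rho}$. The one small caveat is the prefactor: writing ``$\|x_i\|^{-1/2}$ becomes $q^{-1/2}$'' is arithmetically inconsistent with $\|x_i\|^2=q$; the exponent $-\tfrac12$ in Lemma~\ref{lem:att norm chain} is a typo for $-1$ (the chain-rule computation gives $\|x_i\|^{-3}\cdot\|x_i\|^2=\|x_i\|^{-1}$), and it is $\|x_i\|^{-1}=q^{-1/2}$ that correctly yields the stated prefactor.
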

\begin{proof}[Proof of Lemma~\ref{lem:att norm chain 2}]
    We first apply Lemma~\ref{lem:att norm chain} to get Equation~(\ref{e:att jacobian}). After replacing $\langle y_j ,y_m \rangle = \rho$ for $m \neq j$, we can obtain Equation~(\ref{e:att jacobian simplex}). The only remark is that the term $\delta_{ij}(\sumt_j - y_i \otimes (\sumt_j y_i))$ in $\bfr_1$ of Equation~(\ref{e:notation R12}) is nonzero when $i=j$. Then, when $i=j$, $\sumt_i - y_i \otimes (\sumt_i y_i) = \sum_{m=1} ^n e^{\bea \langle y_i , y_m \rangle} (y_m \otimes y_m-y_i \otimes y_m \langle y_m , y_i\rangle)$. If $m=i$, the summand $(y_m \otimes y_m-y_i \otimes y_m \langle y_m , y_i\rangle)$ becomes $0$. Hence, $\sumt_i - y_i \otimes (\sumt_i y_i) = \sum_{m\neq i} e^{\bea \langle y_i , y_m \rangle} (y_m \otimes y_m-y_i \otimes y_m \langle y_m , y_i\rangle) = e^{\bea \rho} \sum_{m\neq i} (y_m \otimes y_m-y_i \otimes y_m \langle y_m , y_i\rangle) = e^{\bea \rho} \left( \sumt - y_i \otimes (\sumt y_i)  \right)$.
\end{proof}

Next, to compute the matrix norm of Equation~(\ref{e:att jacobian simplex}), we see that for any matrix $K$, its matrix norm square equals to $\tr(K^T K)$. Hence, the matrix norm square of Equation~(\ref{e:att jacobian simplex}) equals to
    \begin{align}\label{e:jacobian norm square expansion simplex}
        \begin{split}
            & q^{-1}Z^{-4}\cdot \bigg(\tr\left[ Z^2  (\bfr_1+\bfr_2)^T (\bfr_1+\bfr_2)\right] - 2 Z (\bfu_1 + \bfu_2)^T (\bfr_1+\bfr_2) (\bfu_3 + \bfu_4) 
            \\ & + \|\bfu_1 + \bfu_2\|^2\|\bfu_3 + \bfu_4\|^2  \bigg).
        \end{split}
    \end{align}
We then compute these terms separately, and sum them in $i,j$. We first have the following basic equalities for the notations $\sumv,\sumt$ in Equation~(\ref{e:simple V,W simplex}).
\begin{lemma}\label{lem:basic equalities for VW simplex}
    For the notations in Equation~(\ref{e:simple V,W simplex}), i.e.,
        \begin{align}
        \sumv \coloneqq \sum_{m=1} ^n y_m, \quad \sumt \coloneqq \sum_{m=1} ^n y_m \otimes y_m, \quad \proj_x y \coloneqq y - \langle y , x \rangle x,
    \end{align}
    we have that 
        \begin{align}
            \begin{split}
                &\tr (\sumt^2) = \sum_{m,l} \langle y_m , y_l \rangle^2 = n(n\rho^2 + (1-\rho^2)),
                \\  & \tr(\sumt) = n,
                \quad \tr(\sumt y_i y_i^T) = n \rho^2 + (1-\rho^2), \quad \|\proj_{y_i} y_j \|^2 = 1-\rho^2.
            \end{split}
        \end{align}
    Also,
        \begin{align}
            \begin{split}
                &\sumt y_i = \sum_{m=1} ^n \langle y_m ,y_i \rangle y_m = (1-\rho)y_i +\rho \sumv,
                \\  &\langle \sumv , y_i \rangle =  \sum_{m=1} ^n \langle y_m ,y_i \rangle = n \rho + (1-\rho),
                \\  & \|\sumv\|^2 = \sum_{m,l} \langle y_m , y_l \rangle = n + \rho n(n-1) = n (n\rho + (1-\rho)),
                \\  & \|\proj_{y_i} \sumv \|^2 = \|\sumv\|^2 - \langle \sumv , y_i \rangle^2 = (n-1)(n\rho + (1-\rho))(1-\rho),
                \\  &\|\sumt y_i\|^2 = n^2 \rho^3 +3n\rho^2(1-\rho)+(1+2\rho)(1-\rho)^2 .
            \end{split}
        \end{align}
\end{lemma}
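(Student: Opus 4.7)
The plan is to verify each identity by a direct expansion using Assumption~\ref{a:simplex}, which supplies the two elementary facts $\|y_m\|^2 = 1$ for all $m$ and $\langle y_m, y_l\rangle = \rho$ whenever $m \neq l$. I would organize the argument into two stages. First, I would establish the ``primitive'' identities that follow immediately from the definitions of $\sumv$ and $\sumt$, and then I would use these primitives to derive the two remaining norm-squared identities by algebraic expansion.

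For the primitives, $\tr(\sumt) = \sum_m \tr(y_m y_m^\top) = \sum_m \|y_m\|^2 = n$ is immediate. For $\tr(\sumt^2)$, the factorization $\sumt^2 = \sum_{m,l}\langle y_m, y_l\rangle\, y_m y_l^\top$ yields $\tr(\sumt^2) = \sum_{m,l}\langle y_m, y_l\rangle^2$, and splitting the double sum into the diagonal part $m = l$ (contributing $n$) and the off-diagonal part $m \neq l$ (contributing $n(n-1)\rho^2$) gives the claim. The identity $\tr(\sumt y_i y_i^\top) = y_i^\top \sumt y_i = \sum_m \langle y_m, y_i\rangle^2$ reduces by the same diagonal/off-diagonal split to $1 + (n-1)\rho^2 = n\rho^2 + (1-\rho^2)$. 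For the projection, a direct expansion gives $\|\proj_{y_i} y_j\|^2 = \|y_j\|^2 - \langle y_i, y_j\rangle^2 = 1 - \rho^2$ (in the intended case $i \neq j$). The vector identity $\sumt y_i = \sum_m \langle y_m, y_i\rangle y_m = y_i + \rho\sum_{m \neq i} y_m = (1-\rho)y_i + \rho\, \sumv$ uses the assumption exactly once, to separate out the $m = i$ summand. The inner-product identities $\langle \sumv, y_i\rangle = 1 + (n-1)\rho = n\rho + (1-\rho)$ and $\|\sumv\|^2 = \sum_{m,l}\langle y_m, y_l\rangle = n + n(n-1)\rho = n(n\rho + (1-\rho))$ follow by the same split.

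For the two composite identities, I would simply combine the primitives. The projection identity reads $\|\proj_{y_i}\sumv\|^2 = \|\sumv\|^2 - \langle \sumv, y_i\rangle^2$, which factors as $(n\rho + 1 - \rho)\bigl(n - (n\rho + 1 - \rho)\bigr) = (n-1)(1-\rho)(n\rho + (1-\rho))$. For $\|\sumt y_i\|^2$, substituting $\sumt y_i = (1-\rho)y_i + \rho\, \sumv$ and expanding the square gives $(1-\rho)^2 + 2\rho(1-\rho)\langle y_i, \sumv\rangle + \rho^2 \|\sumv\|^2$; inserting the just-derived values for $\langle y_i, \sumv\rangle$ and $\|\sumv\|^2$ and collecting by powers of $n$ yields $n^2\rho^3 + 3n\rho^2(1-\rho) + (1+2\rho)(1-\rho)^2$.

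No step in this argument is a genuine obstacle; the whole lemma is a bookkeeping exercise in the simplex geometry. The only place where modest care is needed is in the final expansion of $\|\sumt y_i\|^2$, where three cross-terms must be grouped correctly after substituting the values of $\langle y_i, \sumv\rangle$ and $\|\sumv\|^2$ in order to match the compact closed form stated.
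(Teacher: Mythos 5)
Your proposal is correct and matches the paper's intent exactly: the paper's own proof of this lemma is simply ``Direct Computations,'' and your expansion (splitting each sum into the diagonal term and the $n-1$ off-diagonal terms with inner product $\rho$, then assembling the composite identities from the primitives) is precisely the intended argument. All the stated closed forms check out, including the final expansion of $\|\sumt y_i\|^2$.
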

\begin{proof}[Proof of Lemma~\ref{lem:basic equalities for VW simplex}]
    Direct Computations.
\end{proof} 


\begin{lemma}\label{lem:att grad norm 1}
    For terms $\bfr_1,\bfr_2$ in Lemma~\ref{lem:att norm chain 2}, we have that
        \begin{align}\label{e:att grad norm 1}
            \begin{split}
                &\sum_{i,j} \tr\left[  (\bfr_1+\bfr_2)^T (\bfr_1+\bfr_2)\right] 
                \\  &= \bea^2 e^{2\bea \rho}  n \left[n^2\rho^2(1-\rho) + n(1-\rho)(1+\rho-3\rho^2)- (1+2\rho)(1-\rho)^2 \right]
                \\  &+ \bea e^{\bea (\rho+1)} n(n-1)(1-\rho^2)
                \\  &+ e^{2 \bea}(d-1)n+ e^{2\bea \rho} \left[ \bea^2 (1-\rho^2) +d-1\right]n(n-1).
            \end{split}
        \end{align}
As a corollary, when we pick $\bea = \gamma \log n$, we have the following phase transition limits as $n \to +\infty$:
    \begin{align}\label{e:att grad norm phase 1}
         \frac{1}{nZ^2} \sum_{i,j} \tr\left[  (\bfr_1+\bfr_2)^T (\bfr_1+\bfr_2)\right] = 
            \begin{cases}
                \bea^2 \rho^2(1-\rho) +o_n(1)  & \text{if $\gamma < \frac{1}{1-\rho}$},
                \\ \frac{d-1 + \bea^2 \rho^2(1-\rho)}{4} +o_n(1)& \text{if $\gamma = \frac{1}{1-\rho}$},
            \\ d-1 +o_n(1)& \text{if $\gamma > \frac{1}{1-\rho}$}.
            \end{cases}
    \end{align}
\end{lemma}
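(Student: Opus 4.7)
The proof reduces to a direct algebraic computation of the four traces appearing in the expansion
$$
\tr\!\left[(\bfr_1+\bfr_2)^T(\bfr_1+\bfr_2)\right] = \tr(\bfr_1^T\bfr_1) + 2\tr(\bfr_1^T\bfr_2) + \tr(\bfr_2^T\bfr_2),
$$
together with the observation that $\bfr_1$ carries the Kronecker factor $\delta_{ij}$. Hence every sum over $(i,j)$ involving $\bfr_1$ collapses to a diagonal sum of $n$ terms, while $\tr(\bfr_2^T\bfr_2)$ must be split into its diagonal part ($n$ copies at exponential scale $e^{2\bea}$) and off-diagonal part ($n(n-1)$ copies at scale $e^{2\bea\rho}$). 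I plan to tackle the three pieces in turn, invoking Lemma~\ref{lem:basic equalities for VW simplex} throughout as the source of all closed-form trace identities.

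For the pure $\bfr_1$ piece, I would exploit the symmetry of $\sumt$ to rewrite $\bfr_1|_{i=j} = \bea e^{\bea\rho}(I_d-y_iy_i^T)\sumt$, using $y_i\otimes \sumt y_i = y_iy_i^T\sumt$. Since $P_i \coloneqq I_d-y_iy_i^T$ is an idempotent projector, one obtains $\tr(\bfr_1^T\bfr_1)|_{i=j} = \bea^2 e^{2\bea\rho}\tr(\sumt P_i\sumt) = \bea^2 e^{2\bea\rho}[\tr(\sumt^2) - \|\sumt y_i\|^2]$, and the explicit values of these two invariants from Lemma~\ref{lem:basic equalities for VW simplex} produce the bracket on the first line of \eqref{e:att grad norm 1} after multiplying by $n$. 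For the pure $\bfr_2$ piece, I would set $u \coloneqq -y_i + \bea\proj_{y_i}y_j$ and expand $(uy_i^T+I_d)^T(uy_i^T+I_d) = \|u\|^2 y_iy_i^T + y_iu^T + uy_i^T + I_d$. Using $\langle y_i, u\rangle = -1$ (valid for all $i,j$) and $\|u\|^2 = 1 + \bea^2\|\proj_{y_i}y_j\|^2$, the trace is $e^{2a_{ij}}(\|u\|^2-2+d)$, which evaluates to $e^{2\bea}(d-1)$ on the diagonal and $e^{2\bea\rho}[\bea^2(1-\rho^2)+d-1]$ off-diagonal, giving the third line of \eqref{e:att grad norm 1} after summation.

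For the cross term, $\bfr_2|_{i=j} = e^{\bea}P_i$ is proportional to the same projector as $\bfr_1|_{i=j}$ (since $\proj_{y_i}y_i = 0$), so $P_i^2 = P_i$ collapses the product to $\bea e^{\bea(1+\rho)}\tr(\sumt P_i) = \bea e^{\bea(1+\rho)}(n-1)(1-\rho^2)$ by Lemma~\ref{lem:basic equalities for VW simplex}; summation over the $n$ diagonal indices and inclusion of the factor $2$ from $\tr(\bfr_1^T\bfr_2) = \tr(\bfr_2^T\bfr_1)$ completes the middle line of \eqref{e:att grad norm 1}. For the corollary \eqref{e:att grad norm phase 1}, I would substitute $\bea = \gamma\log n$ and apply \eqref{e:Z three phase} to determine $Z^{-2}$ in each regime. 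Dividing by $nZ^2$ reduces to comparing the polynomial-times-exponential scales of the three terms: when $\gamma<\frac{1}{1-\rho}$ the $n^3 e^{2\bea\rho}$ contribution dominates and cancels the $n\cdot n^{2(1+\gamma\rho)}$ normalization to leave $\bea^2\rho^2(1-\rho)$; at criticality $e^{\bea}\asymp n e^{\bea\rho}$ makes all three lines comparable, and the factor $1/4$ emerges from $Z\sim 2e^{\bea}$; when $\gamma>\frac{1}{1-\rho}$ only the $e^{2\bea}(d-1)n$ term survives against $Z^2\sim e^{2\bea}$.

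The main obstacle is the bookkeeping: keeping straight which outer products $y_i\otimes v$ collapse against the projector $I_d - y_iy_i^T$, and ensuring the asymmetry of expressions like $y_i\otimes \sumt y_i$ (viewed as $y_iy_i^T\sumt$) is handled correctly under transpose. Once the projector identities are fixed and the primitive quantities $\tr(\sumt)$, $\tr(\sumt^2)$, $\sumt y_i$, $\|\sumt y_i\|^2$ are tabulated via Lemma~\ref{lem:basic equalities for VW simplex}, every trace becomes a polynomial in $n$ and $\rho$ and the three asymptotic phases follow by straightforward exponent comparison.
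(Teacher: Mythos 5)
Your proposal follows essentially the same route as the paper's proof: expand the square, use the $\delta_{ij}$ factor to collapse every sum involving $\bfr_1$ to the diagonal, split $\tr(\bfr_2^T\bfr_2)$ into its diagonal ($e^{2\bea}$) and off-diagonal ($e^{2\bea\rho}$) contributions, evaluate all traces via Lemma~\ref{lem:basic equalities for VW simplex}, and read off the three phases by comparing exponents against $Z^2$; your reformulation $\bfr_1|_{i=j}=\bea e^{\bea\rho}P_i\sumt$, $\bfr_2|_{i=j}=e^{\bea}P_i$ with $P_i=I_d-y_iy_i^T$ idempotent is only a cleaner packaging of the identical computation. One remark: your factor of $2$ on the cross term is the correct expansion of the square, so your total equals the displayed right-hand side of \eqref{e:att grad norm 1} plus one extra copy of $\bea e^{\bea(\rho+1)}n(n-1)(1-\rho^2)$ — the paper's middle line (and its proof) records only a single copy of $\tr(\bfr_1^T\bfr_2)$; since this term is subleading in every regime relative to the stated limits, the corollary \eqref{e:att grad norm phase 1} is unaffected either way.
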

\begin{proof}[Proof of Lemma~\ref{lem:att grad norm 1}]
    We first notice that $\sumt$ is a symmetric matrix and $\|y_i\|=1$.
    We then expand each term in Lemma~\ref{lem:att grad norm 1} and use Lemma~\ref{lem:basic equalities for VW simplex}.
        \begin{align}
            \begin{split}
                &\sum_{i,j}\tr\left[  (\bfr_1)^T \bfr_1\right] = \bea^2 e^{2\bea \rho}   \sum_i \left( \tr \left( \sumt^2 - 2 \sumt y_i  (\sumt y_i)^T  \right) + \|y_i\|^2 \|\sumt y_i\|^2 \right)
                \\  &= \bea^2 e^{2\bea \rho}   \sum_i \left( \tr \sumt^2 -  2 \|\sumt y_i\|^2+  \|\sumt y_i\|^2 \right)
                \\  &= \bea^2 e^{2\bea \rho}  n \left[n^2\rho^2(1-\rho) + n(1-\rho)(1+\rho-3\rho^2)- (1+2\rho)(1-\rho)^2 \right].
            \end{split}
        \end{align}
Then, 
    \begin{align}
        \begin{split}
            &\sum_{i,j}\tr\left[  (\bfr_1)^T \bfr_2\right] = \tr \sum_{i,j}\delta_{ij}\bea e^{\bea \rho}  \left( \sumt - \sumt y_i y_i ^T  \right) e^{\bea \langle y_j , y_i\rangle} \left((-y_i+\bea \proj_{y_i} y_j) \otimes y_i+I_d  \right) 
            \\  &= \bea e^{\bea (\rho+1)} \tr\sum_i \left( \sumt - \sumt y_i y_i ^T  \right)(-y_i y_i^T + I_d) = \bea e^{\bea (\rho+1)} \tr\sum_i \left( \sumt - \sumt y_i y_i ^T  \right)
            \\  &= \bea e^{\bea (\rho+1)} n(n-1)(1-\rho^2),
        \end{split}
    \end{align}
where the second equality is because $\proj_{y_i} y_i = 0$.
    \begin{align}
        \begin{split}
            &\sum_{i,j}\tr\left[  (\bfr_2)^T \bfr_2\right] = \sum_{i,j} e^{2\bea \langle y_j , y_i\rangle} \tr \left[(-y_i+\bea \proj_{y_i} y_j) y_i ^T +I_d  \right) \left(y_i (-y_i+\bea \proj_{y_i} y_j) ^T +I_d  \right]
            \\  &= \sum_{i \neq j} e^{2\bea \rho} \left[\left(1+\bea^2 (1-\rho^2) \right) -2 + d\right] + \sum_i e^{2 \bea}(d-1) 
            \\  &= e^{2 \bea}(d-1)n+ e^{2\bea \rho} \left[ \bea^2 (1-\rho^2) +d-1\right]n(n-1).
        \end{split}
    \end{align}

Next, we show the asymptotics Equation~(\ref{e:att grad norm phase 1}) as $n \to +\infty$. According to Lemma~\ref{lem:Z_i asymptotics}, we have that 
    \begin{align}\label{e:Z phase}
        Z = 
        \begin{cases}
           (1+o_n(1)) \cdot ne^{\bea \rho}   & \text{if $\gamma < \frac{1}{1-\rho}$},
            \\ (1+o_n(1)) \cdot e^{\bea}  & \text{if $\gamma > \frac{1}{1-\rho}$}.
        \end{cases}
    \end{align}
That is, when $\gamma < \frac{1}{1-\rho}$, the leading order terms are those terms involving $ne^{\bea \rho}$, and all the remaining terms go to $0$ after dividing $ne^{\bea \rho}$; when $\gamma > \frac{1}{1-\rho}$, the leading order terms are those terms involving $e^{\bea}$, and all the remaining terms go to $0$ after dividing $e^{\bea}$. Hence, when $\gamma < \frac{1}{1-\rho}$, the leading order term in Equation~(\ref{e:att grad norm 1}) is the term $\bea^2 e^{2\bea \rho}  n^3 \rho^2(1-\rho)$; when $\gamma > \frac{1}{1-\rho}$, the leading order term is $e^{2 \bea}(d-1)n$. This proves Equation~(\ref{e:att grad norm phase 1}).
\end{proof}


\begin{lemma}\label{lem:att grad norm 2}
    For terms $\bfr_1,\bfr_2,\bfu_1,\bfu_2,\bfu_3,\bfu_4$ in Lemma~\ref{lem:att norm chain 2}, we have that
        \begin{align}\label{e:att grad norm 2}
            \begin{split}
                &\sum_{i,j} (\bfu_1 + \bfu_2)^T (\bfr_1+\bfr_2) (\bfu_3 + \bfu_4)
                \\  &= \rho \bea^2 e^{2 \bea \rho} (e^{\bea}-e^{\bea \rho})n(n-1)(n\rho + (1-\rho))(1-\rho)
                \\  & \quad +\bea^2 e^{3\bea \rho} n(n-1)(n\rho + (1-\rho))^2(1-\rho)
                \\  & \quad + \bea e^{\bea(2\rho+1)} n(n-1)(n\rho + (1-\rho))(1-\rho)
                \\  & \quad + \bea  e^{2\bea \rho} (e^{\bea}-e^{\bea \rho}) (\bea\rho+1) n(n-1)(1-\rho^2)
                \\  & \quad + \bea  e^{3\bea \rho} n(n-1)(n\rho + (1-\rho))(\bea (1-\rho^2) + (1-\rho)).
            \end{split}
        \end{align}
As a corollary, when we pick $\bea = \gamma \log n$, we have the following phase transition limits as $n \to +\infty$:
    \begin{align}\label{e:att grad norm phase 2}
         \frac{1}{nZ^3} \sum_{i,j} (\bfu_1 + \bfu_2)^T (\bfr_1+\bfr_2) (\bfu_3 + \bfu_4) = 
            \begin{cases}
                \bea^2 \rho^2 (1-\rho) +o_n(1)  & \text{if $\gamma < \frac{1}{1-\rho}$},
                \\ \frac{\bea^2 \rho^2 (1-\rho)}{4} +o_n(1) & \text{if $\gamma = \frac{1}{1-\rho}$}.
            \\ 0 +o_n(1) & \text{if $\gamma > \frac{1}{1-\rho}$}.
            \end{cases}
    \end{align}
\end{lemma}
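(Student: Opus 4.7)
The plan is to expand $(\bfu_1+\bfu_2)^T(\bfr_1+\bfr_2)(\bfu_3+\bfu_4)$ into its eight scalar pieces $\bfu_a^T\bfr_b\bfu_c$ with $a\in\{1,2\}$, $b\in\{1,2\}$, $c\in\{3,4\}$, reduce each one using the algebra of Lemma~\ref{lem:basic equalities for VW simplex}, and sum over $(i,j)$. Three of the eight pieces vanish for structural reasons. Since $\bfu_1$ and $\bfr_1$ both carry $\delta_{ij}$, and $\bfu_2\big|_{i=j} = \bea e^{\bea}\proj_{y_i} y_i = 0$, the pieces $\bfu_2^T\bfr_1\bfu_3$ and $\bfu_2^T\bfr_1\bfu_4$ are zero. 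Moreover, on the diagonal $\bfr_2\big|_{i=j} = e^{\bea}(I_d - y_i y_i^T)$, so $\bfr_2\bfu_3\big|_{i=j} = e^{\bea}(e^{\bea}-e^{\bea\rho})(y_i - y_i) = 0$, which kills $\bfu_1^T\bfr_2\bfu_3$. The five surviving pieces match the five lines of \eqref{e:att grad norm 2} in order: $\bfu_1^T\bfr_1\bfu_3$ produces Term~1, $\bfu_1^T\bfr_1\bfu_4$ produces Term~2, and $\bfu_1^T\bfr_2\bfu_4$ produces Term~3 (each summed over $i=j$), while $\bfu_2^T\bfr_2\bfu_3$ produces Term~4 and $\bfu_2^T\bfr_2\bfu_4$ produces Term~5 (each summed over $i\neq j$).

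The reductions of the five surviving pieces rest on a short list of identities, all derived from $\langle y_m,y_l\rangle\in\{1,\rho\}$. First, $(\sumt - y_i(\sumt y_i)^T)y_i = \rho\proj_{y_i}\sumv$, which follows from $\sumt y_i = (1-\rho)y_i + \rho\sumv$ combined with $\langle y_i,\sumt y_i\rangle = n\rho^2 + (1-\rho^2)$. Second, the eigenrelation $\sumt\sumv = (1+(n-1)\rho)\sumv$, which holds because $\langle y_m,\sumv\rangle = 1+(n-1)\rho$ is independent of $m$. Third, the inner products $\langle\proj_{y_i} y_j,y_j\rangle = 1-\rho^2$ and $\langle\proj_{y_i} y_j,\sumv\rangle = (1-\rho)(1+(n-1)\rho)$. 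Finally, the action $\bfr_2 v = e^{\bea\langle y_j,y_i\rangle}[(-y_i+\bea\proj_{y_i} y_j)\langle y_i,v\rangle + v]$, specialized to $v=\bfu_3$ or $v=\bfu_4$. Substituting these and invoking $\|\proj_{y_i}\sumv\|^2 = (n-1)(1+(n-1)\rho)(1-\rho)$ reproduces the right-hand side of \eqref{e:att grad norm 2} after the sum over $i$ (or $i\neq j$) contributes the combinatorial factor $n$ or $n(n-1)$.

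For the asymptotic \eqref{e:att grad norm phase 2}, substitute $\bea=\gamma\log n$ and apply the three-phase expansion of $Z$ from Lemma~\ref{lem:Z_i asymptotics}: $Z\sim ne^{\bea\rho}$ for $\gamma<(1-\rho)^{-1}$; $Z=(2+o_n(1))e^{\bea}$ at $\gamma=(1-\rho)^{-1}$, where $ne^{\bea\rho}=e^{\bea}$; and $Z\sim e^{\bea}$ for $\gamma>(1-\rho)^{-1}$. Dividing each of the five terms of \eqref{e:att grad norm 2} by $nZ^3$ and comparing exponents of $n$, $e^{\bea}$, $e^{\bea\rho}$ shows the following. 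In the subcritical regime, only Term~2 is of order $\bea^2 n^4 e^{3\bea\rho}$ (matching $nZ^3$), contributing $\bea^2\rho^2(1-\rho)$; every other term carries a factor $e^{\bea(1-\rho)}/n = n^{\gamma(1-\rho)-1}\to 0$. At criticality, Terms~1 and~2 both become of order $\bea^2 e^{3\bea} n\rho^2(1-\rho)$, the former using $e^{\bea}-e^{\bea\rho}\sim e^{\bea}$; their sum divided by $nZ^3 = 8ne^{3\bea}(1+o_n(1))$ yields $\bea^2\rho^2(1-\rho)/4$. In the supercritical regime every term is suppressed by a negative power of $n$ relative to $nZ^3 \sim ne^{3\bea}$.

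The main obstacle is algebraic bookkeeping: although each of the five pieces is mechanical, they live on three distinct exponential scales ($e^{\bea}$, $e^{\bea\rho}$, and mixed products) with several $\bea$-factors floating around, so term-by-term matching requires care. The subtlest point is at criticality, where Terms~1 and~2 originate from different pieces ($\bfu_1^T\bfr_1\bfu_3$ versus $\bfu_1^T\bfr_1\bfu_4$) yet become comparable in size; the factor $1/4$ in the limit arises as the product of the combinatorial factor $1/8 = (Z/e^{\bea})^{-3}$ from $Z=2e^{\bea}$ with the $2$ coming from summing the two equal contributions.
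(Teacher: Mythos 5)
Your proposal is correct and follows essentially the same route as the paper: the same eight-way expansion $\bfu_a^T\bfr_b\bfu_c$, the same three structurally vanishing pieces, the same simplex identities ($\sumt y_i=(1-\rho)y_i+\rho\sumv$, $\sumt\sumv=(n\rho+1-\rho)\sumv$, orthogonality of $\proj_{y_i}$ to $y_i$), and the same exponent comparison against $nZ^3$; your explicit treatment of the critical case (the $1/4=2/8$ bookkeeping from Terms 1 and 2) is in fact more detailed than the paper's, which only spells out the two extreme regimes. The one point worth tightening is that at criticality Term 3 contributes $\Theta(\bea)$ after dividing by $nZ^3$, so the residual is $O(\bea)=O(\log n)$ rather than $e^{\bea(1-\rho)}/n$ or polynomially small; this is negligible against the leading $\bea^2$ term (and harmless downstream), but it should be stated as such rather than absorbed into a uniform suppression factor.
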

\begin{proof}[Proof of Lemma~\ref{lem:att grad norm 2}]
    We expand each term in Lemma~\ref{lem:att grad norm 2} and also apply Lemma~\ref{lem:basic equalities for VW simplex} to each term. We first estimate terms involving $\bfu_1$.
        \begin{align}
            \begin{split}
                &\sum_{i,j}\bfu_1 ^T \bfr_1 \bfu_3 = \sum_i \bea^2 e^{2 \bea \rho} (e^{\bea}-e^{\bea \rho}) \left(\proj_{y_i} \sumv \right)^T \left( \sumt - y_i \otimes (\sumt y_i)  \right)y_i
                \\  &=  \bea^2 e^{2 \bea \rho} (e^{\bea}-e^{\bea \rho}) \sum_i \left(\proj_{y_i} \sumv \right)^T \sumt y_i = \rho \bea^2 e^{2 \bea \rho} (e^{\bea}-e^{\bea \rho}) \sum_i \left(\proj_{y_i} \sumv \right)^T \sumv
                \\  &= \rho \bea^2 e^{2 \bea \rho} (e^{\bea}-e^{\bea \rho})n(n-1)(n\rho + (1-\rho))(1-\rho),
            \end{split}
        \end{align}
    where the second and the third equality is because $\langle \proj_{y_i} \sumv , y_i \rangle = 0$.
    \begin{align}
        \begin{split}
            &\sum_{i,j} \bfu_1 ^T \bfr_1 \bfu_4 = \sum_i \bea^2 e^{3\bea \rho} \left(\proj_{y_i} \sumv \right)^T \left( \sumt - y_i \otimes (\sumt y_i)  \right) \sumv
            \\  &= \bea^2 e^{3\bea \rho}  \sum_i \left(\proj_{y_i} \sumv \right)^T \sumt \sumv = \bea^2 e^{3\bea \rho} (n\rho + (1-\rho)) \sum_i \|\proj_{y_i} \sumv \|^2
            \\  &= \bea^2 e^{3\bea \rho} n(n-1)(n\rho + (1-\rho))^2(1-\rho),
        \end{split}
    \end{align}
where the second equality is because $\langle \proj_{y_i} \sumv , y_i \rangle = 0$.
    \begin{align}
        \begin{split}
            &\sum_{i,j} \bfu_1 ^T \bfr_2 \bfu_3 = \sum_i \bea e^{\bea (\rho + 1)} (e^{\bea} -e^{\bea \rho}) \left(\proj_{y_i} \sumv \right)^T \left((-y_i+\bea \proj_{y_i} y_i) \otimes y_i+I_d  \right) y_i
            =0,
        \end{split}
    \end{align}
where the second equality is because $\langle \proj_{y_i} \sumv , y_i \rangle = 0$ and $\proj_{y_i} y_i = 0$.
    \begin{align}
        \begin{split}
            &\sum_{i,j} \bfu_1 ^T \bfr_2 \bfu_4 = \sum_i \bea e^{\bea(2\rho+1)} \left(\proj_{y_i} \sumv \right)^T \left((y_i+\bea \proj_{y_i} y_i) \otimes y_i+I_d  \right) \sumv
            \\  &= \bea e^{\bea(2\rho+1)} \sum_i \|\proj_{y_i} \sumv \|^2 = \bea e^{\bea(2\rho+1)} n(n-1)(n\rho + (1-\rho))(1-\rho),
        \end{split}
    \end{align}
where the second equality is because $\langle \proj_{y_i} \sumv , y_i \rangle = 0$ and $\proj_{y_i} y_i = 0$.

Next, we estimate the terms involving $\bfu_2$. We first recall that $\bfu_2 = \bea  e^{\bea \langle y_j,y_i \rangle}\left(\proj_{y_i} y_j \right)$. Because $\proj_{y_i} y_j = 0$ when $i=j$, we can just replace $e^{\bea \langle y_j,y_i \rangle}$ with $e^{\bea \rho}$ in $\bfu_2$, i,e, $\bfu_2 = \bea  e^{\bea \rho}\left(\proj_{y_i} y_j \right)$. Hence,
    \begin{align}
        \begin{split}
            \sum_{i,j} \bfu_2 ^T \bfr_1 \bfu_3 = 0, \quad \sum_{i,j} \bfu_2 ^T \bfr_1 \bfu_4 = 0,
        \end{split}
    \end{align}
because $\delta_{ij} (\proj_{y_i} y_j) =0$ for any $i,j$ in $\bfu_2 ^T \bfr_1$.
    \begin{align}
        \begin{split}
            &\sum_{i,j} \bfu_2 ^T \bfr_2 \bfu_3 = \sum_{i,j} \bea  e^{\bea (\rho+\langle y_j , y_i\rangle)} (e^{\bea}-e^{\bea \rho})\left(\proj_{y_i} y_j \right)^T \left((-y_i+\bea \proj_{y_i} y_j) \otimes y_i+I_d  \right) y_j
            \\  &= \bea  e^{2\bea \rho} (e^{\bea}-e^{\bea \rho}) \sum_{i \neq j}\left(\proj_{y_i} y_j \right)^T \left((-y_i+\bea \proj_{y_i} y_j) \rho+y_j  \right) 
            \\  &= \bea  e^{2\bea \rho} (e^{\bea}-e^{\bea \rho}) (\bea\rho+1) \sum_{i \neq j} \|\proj_{y_i} y_j\|^2 
            \\  &= \bea  e^{2\bea \rho} (e^{\bea}-e^{\bea \rho}) (\bea\rho+1) n(n-1)(1-\rho^2).
        \end{split}
    \end{align}
where the second equality is because $\proj_{y_i} y_j \neq 0$ only when $i \neq j$, on which $\langle y_j,y_i\rangle = \rho$, and the third equality is because $\langle \proj_{y_i} y_j , y_i \rangle = 0$.
    \begin{align}
        \begin{split}
            &\sum_{i,j} \bfu_2 ^T \bfr_2 \bfu_4 = \sum_{i,j} \bea  e^{\bea (2\rho+\langle y_j , y_i\rangle)} \left(\proj_{y_i} y_j \right)^T \left((-y_i+\bea \proj_{y_i} y_j) \otimes y_i+I_d  \right) \sumv
            \\  &= \bea  e^{3\bea \rho}\sum_{i \neq j}   \left(\bea \|\proj_{y_i} y_j\|^2 (n\rho + (1-\rho))+ \left(\proj_{y_i} y_j \right)^T \sumv  \right) 
            \\  &= \bea  e^{3\bea \rho}\sum_{i \neq j}   \left(\bea (1-\rho^2) (n\rho + (1-\rho))+ (1-\rho)(n\rho + (1-\rho))  \right) 
            \\ &=\bea  e^{3\bea \rho} n(n-1)(n\rho + (1-\rho))(\bea (1-\rho^2) + (1-\rho)).
        \end{split}
    \end{align}
where the second equality is because $\proj_{y_i} y_j \neq 0$ only when $i \neq j$, on which $\langle y_j,y_i\rangle = \rho$, and the third equality is because $\langle \proj_{y_i} y_j , y_i \rangle = 0$.

The proof for Equation~(\ref{e:att grad norm phase 2}) is similar to the proof for Equation~(\ref{e:att grad norm phase 1}) in Lemma~\ref{lem:att grad norm 1}. Notice that when $\gamma < \frac{1}{1-\rho}$, we need to pick up terms involving $ne^{\bea \rho}$, and the leading order term in Equation~(\ref{e:att grad norm 2}) is the one in the second line of Equation~(\ref{e:att grad norm 2}), which is $\bea^2 n^4 e^{3\bea \rho} \rho^2 (1-\rho)$; when $\gamma > \frac{1}{1-\rho}$, after diving $nZ^3$, all terms in Equation~(\ref{e:att grad norm 2}) are $o_n(1)$ terms.
\end{proof}


\begin{lemma}\label{lem:att grad norm 3}
    For terms $\bfu_1,\bfu_2,\bfu_3,\bfu_4$ in Lemma~\ref{lem:att norm chain 2}, we have that
        \begin{align}\label{e:att grad norm 3}
            \begin{split}
                &\sum_{i,j}\|\bfu_1 + \bfu_2\|^2\|\bfu_3 + \bfu_4\|^2 
                \\ &= \bea^2 e^{2\bea \rho}  n(n-1)(n\rho+2)(1-\rho) 
                \\  & \quad \cdot \left[ (e^{\bea}-e^{\bea \rho})^2+ 2 e^{\bea \rho}(e^{\bea}-e^{\bea \rho}) (n\rho + (1-\rho))+ e^{2\bea \rho} n (n\rho + (1-\rho)) \right].
            \end{split}
        \end{align}
As a corollary, when we pick $\bea = \gamma \log n$, we have the following phase transition limits as $n \to +\infty$:
    \begin{align}\label{e:att grad norm phase 3}
         \frac{1}{nZ^4} \sum_{i,j}\|\bfu_1 + \bfu_2\|^2\|\bfu_3 + \bfu_4\|^2  = 
            \begin{cases}
                \bea^2 \rho^2 (1-\rho) +o_n(1)  & \text{if $\gamma < \frac{1}{1-\rho}$},
            \\ \frac{\bea^2 \rho (1-\rho)(1+3\rho)}{16} +o_n(1) & \text{if $\gamma = \frac{1}{1-\rho}$},
            \\ 0 +o_n(1) & \text{if $\gamma > \frac{1}{1-\rho}$}.
            \end{cases}
    \end{align}
\end{lemma}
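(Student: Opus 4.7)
The plan is to exploit the disjoint-support structure of $\bfu_1,\bfu_2$ in the $(i,j)$-index. Observe that $\bfu_1$ contains the Kronecker delta $\delta_{ij}$ and so vanishes off the diagonal $i\neq j$, whereas $\bfu_2$ contains $\proj_{y_i}y_j$, which equals $y_i-\langle y_i,y_i\rangle y_i=0$ on the diagonal $i=j$. Hence for every pair $(i,j)$ exactly one of the two vectors is zero, so the cross term in $\|\bfu_1+\bfu_2\|^2$ vanishes identically and
\begin{align*}
\|\bfu_1+\bfu_2\|^2=\|\bfu_1\|^2\mathbf{1}_{i=j}+\|\bfu_2\|^2\mathbf{1}_{i\neq j}.
\end{align*}

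Next I would compute each squared norm using the identities collected in Lemma~\ref{lem:basic equalities for VW simplex}. On the diagonal, $\|\bfu_1\|^2=\bea^2 e^{2\bea\rho}\|\proj_{y_i}\sumv\|^2=\bea^2 e^{2\bea\rho}(n-1)(n\rho+1-\rho)(1-\rho)$, which is independent of $i$. Off the diagonal, since $\langle y_i,y_j\rangle=\rho$ one has $e^{\bea\langle y_j,y_i\rangle}=e^{\bea\rho}$ and $\|\proj_{y_i}y_j\|^2=1-\rho^2$, so $\|\bfu_2\|^2=\bea^2 e^{2\bea\rho}(1-\rho^2)$. Summing over $(i,j)$ and factoring out $(1-\rho)$ yields
\begin{align*}
\sum_{i,j}\|\bfu_1+\bfu_2\|^2=\bea^2 e^{2\bea\rho}n(n-1)(1-\rho)\bigl[(n\rho+1-\rho)+(1+\rho)\bigr]=\bea^2 e^{2\bea\rho}n(n-1)(n\rho+2)(1-\rho).
\end{align*}
For the second factor, I would use $\|y_j\|=1$ and $\langle y_j,\sumv\rangle=n\rho+(1-\rho)$, both independent of $j$ under Assumption~\ref{a:simplex}. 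Expanding the binomial,
\begin{align*}
\|\bfu_3+\bfu_4\|^2=(e^{\bea}-e^{\bea\rho})^2+2e^{\bea\rho}(e^{\bea}-e^{\bea\rho})(n\rho+1-\rho)+e^{2\bea\rho}n(n\rho+1-\rho),
\end{align*}
which does not depend on $(i,j)$. Multiplying the two factors gives exactly \eqref{e:att grad norm 3}.

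For the phase transition statement \eqref{e:att grad norm phase 3}, I would plug in $\bea=\gamma\log n$, recall from \eqref{e:Z three phase} that $Z$ has the three regimes $ne^{\bea\rho}$, $2e^{\bea}$, and $e^{\bea}$, and track which summand dominates each factor. In the subcritical regime $ne^{\bea\rho}\gg e^{\bea}$, so $\|\bfu_3+\bfu_4\|^2\sim \rho n^2 e^{2\bea\rho}$ and the first sum behaves as $\bea^2\rho(1-\rho)n^3 e^{2\bea\rho}$; dividing by $nZ^4\sim n^5 e^{4\bea\rho}$ gives $\bea^2\rho^2(1-\rho)$. In the supercritical regime the $(e^{\bea}-e^{\bea\rho})^2\sim e^{2\bea}$ piece dominates $\|\bfu_3+\bfu_4\|^2$, while $\sum\|\bfu_1+\bfu_2\|^2\sim\bea^2\rho(1-\rho)n^3 e^{2\bea\rho}$; after dividing by $nZ^4\sim n e^{4\bea}$ the result is $o_n(1)$ because $n^2 e^{2\bea(\rho-1)}=n^{2-2\gamma(1-\rho)}\to 0$. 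The main bookkeeping obstacle is the critical regime, where $ne^{\bea\rho}=e^{\bea}$ so all three terms inside $\|\bfu_3+\bfu_4\|^2$ are of the same order $e^{2\bea}$; carefully adding them gives the factor $1+2\rho+\rho=1+3\rho$, and combining with the leading $\bea^2\rho(1-\rho)n^3 e^{2\bea\rho}=\bea^2\rho(1-\rho)ne^{2\bea}$ in the first sum and $nZ^4\sim 16ne^{4\bea}$ in the denominator yields the claimed limit $\bea^2\rho(1-\rho)(1+3\rho)/16$.
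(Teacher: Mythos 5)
Your proposal is correct and follows essentially the same route as the paper's proof: the same observation that $\bfu_1$ and $\bfu_2$ have disjoint support in $(i,j)$ so the cross term vanishes, the same evaluation of $\|\proj_{y_i}\sumv\|^2$, $\|\proj_{y_i}y_j\|^2$, and $\langle\sumv,y_j\rangle$ via the simplex identities, and the same dominant-term bookkeeping against the three regimes of $Z$. Nothing to add.
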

\begin{proof}[Proof of Lemma~\ref{lem:att grad norm 3}]
    We notice that $\langle \bfu_1 ,\bfu_2 \rangle = 0$ because $\delta_{ij} \proj_{y_i} y_j = 0$ for any $i,j$. So, 
    \begin{align}
        \begin{split}
            &\|\bfu_1+\bfu_2\|^2 =  \delta_{ij} \bea^2 e^{2\bea \rho}    \|\proj_{y_i} \sumv \|^2  + \bea^2  e^{2\bea \langle y_j,y_i \rangle}\|\proj_{y_i} y_j \|^2
        \\  &= \delta_{ij}\bea^2 e^{2\bea \rho}     (n-1)(n\rho + (1-\rho))(1-\rho) + (1-\delta_{ij})\bea^2 e^{2\bea \rho}(1-\rho^2),
        \end{split}
    \end{align}
where the second equality is because $e^{2\bea \langle y_j,y_i \rangle}\|\proj_{y_i} y_j \|^2 \neq 0$ only if $i \neq j$, on which $e^{2\bea \langle y_j,y_i \rangle}\|\proj_{y_i} y_j \|^2 = e^{2\bea \rho}(1-\rho^2)$.
    \begin{align}
        \begin{split}
            &\|\bfu_3+\bfu_4\|^2 = (e^{\bea}-e^{\bea \rho})^2 + 2 e^{\bea \rho}(e^{\bea}-e^{\bea \rho})  \langle \sumv , y_j \rangle + e^{2\bea \rho}\|\sumv\|^2 
        \\  & =  (e^{\bea}-e^{\bea \rho})^2+ 2 e^{\bea \rho}(e^{\bea}-e^{\bea \rho}) (n\rho + (1-\rho))+ e^{2\bea \rho} n (n\rho + (1-\rho)),
        \end{split}
    \end{align}
which is independent of $i,j$.
Hence,
    \begin{align}
        \begin{split}
            &\sum_{i,j}\|\bfu_1 + \bfu_2\|^2\|\bfu_3 + \bfu_4\|^2 
            \\  &= \left[\bea^2 e^{2\bea \rho}  n(n-1)(n\rho + (1-\rho))(1-\rho) + n(n-1)\bea^2 e^{2\bea \rho}(1-\rho^2)\right] \|\bfu_3 + \bfu_4\|^2
            \\  &= \bea^2 e^{2\bea \rho}  n(n-1)(n\rho+2)(1-\rho) \|\bfu_3 + \bfu_4\|^2
            \\ &= \bea^2 e^{2\bea \rho}  n(n-1)(n\rho+2)(1-\rho) 
            \\  & \quad \cdot \left[ (e^{\bea}-e^{\bea \rho})^2+ 2 e^{\bea \rho}(e^{\bea}-e^{\bea \rho}) (n\rho + (1-\rho))+ e^{2\bea \rho} n (n\rho + (1-\rho)) \right].
        \end{split}
    \end{align}

The proof for Equation~(\ref{e:att grad norm phase 3}) is similar to the proof for Equation~(\ref{e:att grad norm phase 1}) in Lemma~\ref{lem:att grad norm 1}. Notice that when $\gamma < \frac{1}{1-\rho}$, we need to pick up terms involving $ne^{\bea \rho}$, and the leading order term in Equation~(\ref{e:att grad norm 3}) is $\bea^2 n^5 e^{4\bea \rho} \rho^2 (1-\rho)$; when $\gamma > \frac{1}{1-\rho}$, after dividing $nZ^4$, all terms in Equation~(\ref{e:att grad norm 3}) are $o_n(1)$ terms.
\end{proof}


\begin{proof}[Proof of Theorem~\ref{thm:att propagation gradient norm 2}]
    As we have mentioned at the beginning of Appendix~\ref{sec:proof gradient norm 2}, we only need to focus the case when $\gamma \leq \frac{1}{1-\rho}$, which follows directly from Lemma~\ref{lem:att grad norm 1}, Lemma~\ref{lem:att grad norm 2}, and Lemma~\ref{lem:att grad norm 3}. We notice that, in these three lemmas, the leading order terms are the same, $\bea^2 \rho^2 (1-\rho)$, which cancels in Equation~(\ref{e:jacobian norm square expansion simplex}). Hence, when $\gamma < \frac{1}{1-\rho}$, $\frac{1}{nd}\|\nabla_X X'\|^2=  0+o_n(1)$. When $\gamma = \frac{1}{1-\rho}$, we also only need to use the corresponding cases in these three lemmas and combine them in Equation~(\ref{e:jacobian norm square expansion simplex}) to get the conclusion in Theorem~\ref{thm:att propagation gradient norm 2}. One remark is that under Assumption~\ref{a:simplex}, we have that $n \leq d$ implicitly. So, when $\gamma = \frac{1}{1-\rho}$, terms in Equation~(\ref{e:jacobian norm square expansion simplex}) involving $\frac{\bea^2}{d} = \frac{\gamma ^2 (\log n)^2}{d}$ also become $o_n(1)$.
\end{proof}


\section{Modified Assumptions with More Middle Phases}\label{sec:more transition phases}

In this section, we modify Assumption \ref{a:equal angle}, so that we can prove the existence of three different phases like Lemma \ref{lem:Z_i asymptotics}, Theorem \ref{thm:att limit phase}, Theorem \ref{thm:att propagation gradient norm}. We remark that we only showed the existence of two phases (two extrema) in Lemma \ref{lem:Z_i asymptotics}, Theorem \ref{thm:att limit phase}, Theorem \ref{thm:att propagation gradient norm}, but it doesn't mean under Assumption \ref{a:equal angle}, there is no other transition phase between these two phases (two extrema). Under the following Assumption \ref{a:three phase}, we can show there are indeed at least three phases. Recall that for any $i \in \llbracket 1 , n \rrbracket$, we defined $y_i = N(x_i)$.

\begin{assumption}\label{a:three phase}
\indent
    \begin{itemize}
        \item For any $i \in \llbracket 1 , n \rrbracket$, $\|x_i\|^2 \in [q_1,q_2]$ for some positive constants $q_1 \leq q_2$.
        \item There is a $\tau \in (0,1]$, four positive constants $\rho_3 ,\rho_4, \kappa_3,\kappa_4$ with $\rho_3 \leq \rho_4$, $\kappa_3 \leq \kappa_4$, and $\rho_4 <1$, such that for any $i \in \llbracket 1 , n \rrbracket$, if we define 
            \begin{align}
                \cK_i = \left\{ m \neq i \ | \ \langle y_m , y_i \rangle \in [\rho_3,\rho_4] \right\},
            \end{align}
        then we have that 
            \begin{align}
                \kappa_3 \leq \frac{\left| \cK_i \right|}{n^\tau} \leq \kappa_4.
            \end{align}
        \item For any $i \in \llbracket 1 , n \rrbracket$ and any $j \notin \cK_i \cup \{i\}$, $\langle y_i , y_j \rangle \in [\rho_1,\rho_2]$ for some nonnegative constants $\rho_1 ,\rho_2$ satisfying $\rho_1 \leq \rho_2<\rho_3 \leq \rho_4$.
        \item For technical reason, we further assume that $(1-\tau)(1-\rho_2) + \rho_2 < \rho_3$.
    \end{itemize}
\end{assumption}

\begin{lemma}\label{lem:Z_i three phase}
    Let $\bea = \gamma \log n$ where $\gamma$ is a positive constant. Under Assumption \ref{a:three phase} and Equation~(\ref{e:att update}), for any $i \in \llbracket 1 , n \rrbracket$,
    \begin{align}\label{e:Z_i 3 phases}
        Z_i = 
        \begin{cases}
           (1+o_n(1)) \cdot \left(\sum_{m \notin \cK_i \cup \{i\}} e^{a_{im}}\right)   & \text{if $\gamma < \min \left\{\frac{1}{1-\rho_1}, \frac{1-\tau}{\rho_4 - \rho_1} \right\}$},
           \\ (1+o_n(1)) \cdot \left(\sum_{m \in \cK_i } e^{a_{im}}\right)   & \text{if $\frac{1-\tau}{\rho_3-\rho_2} < \gamma <  \frac{\tau}{1-\rho_3} $},
            \\ (1+o_n(1)) \cdot e^{\bea}  & \text{if $\gamma > \max \left\{\frac{1}{1-\rho_2}, \frac{\tau}{1 - \rho_4} \right\}$},
        \end{cases}
    \end{align}
where the terms $o_n(1)$ go to $0$ as $n \to +\infty$ with speeds independent of $i$ but only depending on $\gamma,\rho_1, \rho_2,\rho_3, \rho_4,\tau,\kappa_3,\kappa_4$.
\end{lemma}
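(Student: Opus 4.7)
The plan is to decompose $Z_i$ according to the partition of $\llbracket 1,n\rrbracket$ induced by Assumption~\ref{a:three phase} into three pieces of distinct asymptotic order, then read off which piece dominates in each of the three windows of $\gamma$. Concretely, write
\begin{align*}
Z_i \;=\; e^\beta \;+\; S_i \;+\; B_i, \qquad S_i := \sum_{m\in \cK_i} e^{a_{im}}, \qquad B_i := \sum_{m\notin \cK_i\cup\{i\}} e^{a_{im}}.
\end{align*}
Substituting $\beta = \gamma\log n$ and using the angle intervals $[\rho_3,\rho_4]$ on $\cK_i$, $[\rho_1,\rho_2]$ on the complement, together with $|\cK_i|\in[\kappa_3 n^\tau,\kappa_4 n^\tau]$, immediately yields the deterministic sandwiches
\begin{align*}
\kappa_3\, n^{\tau+\gamma\rho_3} \;\le\; S_i \;\le\; \kappa_4\, n^{\tau+\gamma\rho_4}, \qquad (n-1-\kappa_4 n^\tau)\, n^{\gamma\rho_1} \;\le\; B_i \;\le\; n\, n^{\gamma\rho_2},
\end{align*}
while the self-term is exactly $n^\gamma$. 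All constants are independent of $i$, so the remaining task is purely a comparison of exponents.

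The second step is to verify, in each of the three $\gamma$-windows, that two of the three scales $\{n^\gamma,\ n^{\tau+\gamma\rho_k},\ n^{1+\gamma\rho_k}\}$ are polynomially smaller than the third, uniformly in $i$. For the subcritical window $\gamma<\min\{\frac{1}{1-\rho_1},\frac{1-\tau}{\rho_4-\rho_1}\}$, the first inequality rearranges to $\gamma<1+\gamma\rho_1$ (so $e^\beta = o(B_i)$) and the second rearranges to $\tau+\gamma\rho_4<1+\gamma\rho_1$ (so $S_i = o(B_i)$), hence $Z_i=(1+o_n(1))B_i$. For the middle window $\frac{1-\tau}{\rho_3-\rho_2}<\gamma<\frac{\tau}{1-\rho_3}$, the left inequality is exactly $1+\gamma\rho_2<\tau+\gamma\rho_3$ (so $B_i=o(S_i)$) and the right inequality is exactly $\gamma<\tau+\gamma\rho_3$ (so $e^\beta=o(S_i)$), giving $Z_i=(1+o_n(1))S_i$. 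For the supercritical window $\gamma>\max\{\frac{1}{1-\rho_2},\frac{\tau}{1-\rho_4}\}$, the two inequalities read $1+\gamma\rho_2<\gamma$ and $\tau+\gamma\rho_4<\gamma$, which force both $B_i$ and $S_i$ to be $o(e^\beta)$. In every case the error is of the form $n^{-c}$ for an explicit constant $c>0$ depending only on $\gamma,\rho_1,\rho_2,\rho_3,\rho_4,\tau,\kappa_3,\kappa_4$, matching the quantitative bound required by the $o_n(1)$ convention.

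There is essentially no analytic obstacle in the proof itself; the only subtlety worth flagging is the role of the fourth bullet of Assumption~\ref{a:three phase}. Rearranging $(1-\tau)(1-\rho_2)+\rho_2<\rho_3$ gives $1-\tau+\tau\rho_2<\rho_3$, which is precisely equivalent to $\frac{1-\tau}{\rho_3-\rho_2}<\frac{\tau}{1-\rho_3}$, i.e.\ the non-emptiness of the middle window; this is what makes the three-phase statement meaningful rather than vacuous, and it is also what allows a clean gap $\tau+\gamma\rho_3-1-\gamma\rho_2>0$ for $\gamma$ in the middle range, producing the polynomial error. The only bookkeeping care needed is that $B_i$'s lower bound involves $n-1-\kappa_4 n^\tau$, which is $(1+o_n(1))n$ since $\tau\le 1$ (and if $\tau=1$ one uses instead that $\kappa_4<\infty$ combined with the gap in the relevant exponent to absorb the prefactor), so the leading-order comparisons go through uniformly in $i$.
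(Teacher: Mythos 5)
Your proof is correct and follows essentially the same route as the paper: the same three-way decomposition $Z_i = e^{\beta} + \sum_{m\in\cK_i} e^{a_{im}} + \sum_{m\notin\cK_i\cup\{i\}} e^{a_{im}}$, the same sandwich bounds $\kappa_3 n^{\tau+\gamma\rho_3}\le S_i\le \kappa_4 n^{\tau+\gamma\rho_4}$ and $(n-\kappa_4 n^{\tau}-1)n^{\gamma\rho_1}\le B_i\le n^{1+\gamma\rho_2}$, and the same exponent comparison in each window, with the fourth bullet of Assumption~\ref{a:three phase} identified (as in the paper) as the condition making the middle window nonempty.
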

\begin{proof}
    The proof is similar to Lemma \ref{lem:Z_i asymptotics}. We notice that 
        \begin{align}
            \begin{split}
                Z_i &= e^{\bea} + \sum_{m \in \cK_i} e^{a_{im}} + \sum_{m \notin \cK_i \cup \{i\} } e^{a_{im}}
                \\  &= n^{\gamma} + \sum_{m \in \cK_i} n^{\gamma \langle y_i , y_m \rangle } + \sum_{m \notin \cK_i \cup \{i\} } n^{\gamma \langle y_i , y_m \rangle }.
            \end{split}
        \end{align}
    We also notice that $\kappa_3 n^{\tau} \leq |\cK_i| \leq \kappa_4 n^{\tau}$ according to Assumption \ref{a:three phase}. 
    Hence, 
        \begin{align}
            \kappa_3 n^{\tau + \gamma \rho_3}\leq |\cK_i| \cdot n^{\gamma \rho_3} \leq \sum_{m \in \cK_i} n^{\gamma \langle y_i , y_m \rangle } \leq |\cK_i| \cdot n^{\gamma \rho_4} \leq \kappa_4 n^{\tau + \gamma \rho_4},
        \end{align}
    and 
        \begin{align}
            (n-\kappa_4 n^{\tau}-1) \cdot n^{\gamma \rho_1} \leq \sum_{m \notin \cK_i \cup \{i\} } n^{\gamma \langle y_i , y_m \rangle } \leq (n-|\cK_i|-1) \cdot n^{\gamma \rho_2} \leq n^{1+\gamma \rho_2}.
        \end{align}
    When $\gamma < \min \left\{\frac{1}{1-\rho_1}, \frac{1-\tau}{\rho_4 - \rho_1} \right\}$, the leading order term in $Z_i$ is $\sum_{m \notin \cK_i \cup \{i\} } n^{\gamma \langle y_i , y_m \rangle }$; when $\frac{1-\tau}{\rho_3-\rho_2} < \gamma <  \frac{\tau}{1-\rho_3} $, the leading order term in $Z_i$ is $\sum_{m \in \cK_i} n^{\gamma \langle y_i , y_m \rangle }$; when $\gamma > \max \left\{\frac{1}{1-\rho_2}, \frac{\tau}{1 - \rho_4} \right\}$, the leading order term in $Z_i$ is $n^\gamma$. We also remark that the last assumption in Assumption \ref{a:three phase} is to ensure the existence of the middle phase, i.e., $\frac{1-\tau}{\rho_3-\rho_2} < \gamma <  \frac{\tau}{1-\rho_3} $. This finishes the proof for Lemma \ref{lem:Z_i three phase} by similar arguments as in Lemma \ref{lem:Z_i asymptotics}.
\end{proof}
A direct corollary of Lemma \ref{lem:Z_i three phase} is the following theorem.
\begin{theorem}\label{thm:att three phase}
    Under Assumption \ref{a:three phase} and Equation~(\ref{e:att update}) we have the following phase transition phenomena: let $\bea = \gamma \log n$ where $\gamma$ is a positive constant. For any $i \in \llbracket 1 , n \rrbracket$ the updating dynamics Equation~(\ref{e:att update}) can be written as
        \begin{align}
            x_i ' =  \ala x_i + \begin{cases}
           \frac{\sum_{m \notin \cK_i \cup \{i\}} e^{a_{im}} y_m}{\sum_{m \notin \cK_i \cup \{i\}} e^{a_{im}}} + \bfo_n(1)  & \text{if $\gamma < \min \left\{\frac{1}{1-\rho_1}, \frac{1-\tau}{\rho_4 - \rho_1} \right\}$},
           \\ \frac{\sum_{m \in \cK_i} e^{a_{im}} y_m}{\sum_{m \in \cK_i} e^{a_{im}}} + \bfo_n(1)   & \text{if $\frac{1-\tau}{\rho_3-\rho_2} < \gamma <  \frac{\tau}{1-\rho_3} $},
            \\ y_i + \bfo_n(1)  & \text{if $\gamma > \max \left\{\frac{1}{1-\rho_2}, \frac{\tau}{1 - \rho_4} \right\}$},
        \end{cases}
        \end{align}
The terms $\bfo_n(1)$ represent vectors in $\R^d$ with norms going to $0$ as $n \to +\infty$, with a speed independent of $i$ but only depending on $\gamma,\rho_1, \rho_2,\rho_3, \rho_4,\tau,\kappa_3,\kappa_4$.
\end{theorem}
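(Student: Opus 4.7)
The plan is to reduce the theorem to Lemma~\ref{lem:Z_i three phase} by decomposing the attention output along the same three regions of indices used to diagnose the dominant term of $Z_i$. For each $i$, write
\begin{align*}
\att(y_i)
= \frac{1}{Z_i}\Bigl( e^{\bea}\, y_i + S^{\cK}_i + S^{\text{far}}_i \Bigr),
\qquad
S^{\cK}_i \coloneqq \sum_{m\in\cK_i} e^{a_{im}} y_m,
\quad
S^{\text{far}}_i \coloneqq \sum_{m\notin\cK_i\cup\{i\}} e^{a_{im}} y_m,
\end{align*}
and note that since $\|y_m\|=1$, the triangle inequality gives $\|S^{\cK}_i\|\le \sum_{m\in\cK_i} e^{a_{im}}$ and $\|S^{\text{far}}_i\|\le \sum_{m\notin\cK_i\cup\{i\}} e^{a_{im}}$. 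In particular the vectors $S^{\cK}_i/\sum_{m\in\cK_i}e^{a_{im}}$ and $S^{\text{far}}_i/\sum_{m\notin\cK_i\cup\{i\}}e^{a_{im}}$ (when non-empty) are convex combinations of unit vectors and therefore have norm at most one, uniformly in $i$.

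Next, in each of the three regimes I invoke the matching case of Lemma~\ref{lem:Z_i three phase}, which identifies the leading block of $Z_i$. For the other two blocks I reuse the bounds $e^{\bea}=n^\gamma$, $\sum_{m\in\cK_i}e^{a_{im}}\le \kappa_4 n^{\tau+\gamma\rho_4}$, and $\sum_{m\notin\cK_i\cup\{i\}}e^{a_{im}}\le n^{1+\gamma\rho_2}$ established in that proof, together with the companion lower bounds on the dominant block, to conclude that each non-dominant contribution, divided by $Z_i$, is at most $C_1 n^{-C_2}$ for positive constants $C_1,C_2$ depending only on the parameters listed in the statement. For instance, in the middle regime $\frac{1-\tau}{\rho_3-\rho_2}<\gamma<\frac{\tau}{1-\rho_3}$, Lemma~\ref{lem:Z_i three phase} gives $Z_i\ge (1/2)\kappa_3 n^{\tau+\gamma\rho_3}$ for $n$ large, and then $e^{\bea}/Z_i$ and $\|S^{\text{far}}_i\|/Z_i$ are both bounded by fixed negative powers of $n$ thanks to the ordering of exponents enforced by the inequalities; the analogous bookkeeping covers the subcritical and supercritical regimes.

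Combining these two ingredients yields, in each regime, an identity of the form
\begin{align*}
\att(y_i) \;=\; (1+o_n(1))\cdot \frac{S^{\star}_i}{\Sigma^{\star}_i} \;+\; \bfo_n(1),
\end{align*}
where $(S^\star_i,\Sigma^\star_i)$ is the surviving pair in the given regime (namely $(S^{\text{far}}_i,\sum_{m\notin\cK_i\cup\{i\}}e^{a_{im}})$, $(S^{\cK}_i,\sum_{m\in\cK_i}e^{a_{im}})$, or $(y_i,1)$). Because $\|S^\star_i/\Sigma^\star_i\|\le 1$, the scalar $1+o_n(1)$ can be absorbed into a fresh $\bfo_n(1)$ term; adding $\ala x_i$ produces the claimed formula for $x_i'$. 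Uniformity of $\bfo_n(1)$ in $i$ is automatic since every constant entering the bounds depends only on $\gamma,\rho_1,\rho_2,\rho_3,\rho_4,\tau,\kappa_3,\kappa_4$.

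The main obstacle is purely bookkeeping: one must check that the fourth bullet of Assumption~\ref{a:three phase} (namely $(1-\tau)(1-\rho_2)+\rho_2<\rho_3$) really makes the three parameter intervals non-empty and disjoint, and that in the middle interval the $e^{\bea}$ term, the $S^{\text{far}}_i$ sum, and the $S^{\cK}_i$ sum have a common strict ordering of exponents so that both non-dominant contributions decay polynomially in $n$. Once this is verified, the three cases follow from the corresponding case of Lemma~\ref{lem:Z_i three phase} by the triangle-inequality argument above.
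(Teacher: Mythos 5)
Your proposal is correct and follows exactly the route the paper intends: the paper omits the proof of Theorem~\ref{thm:att three phase} precisely because it reduces to Lemma~\ref{lem:Z_i three phase} by the block decomposition and triangle-inequality bookkeeping you describe (the same pattern used in Lemma~\ref{lem:gamma large single point}). The exponent comparisons you flag as the "main obstacle" are exactly the inequalities already verified in the proof of Lemma~\ref{lem:Z_i three phase}, so nothing further is needed.
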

The proof of Theorem \ref{thm:att three phase} is similar to Lemma \ref{lem:Z_i three phase} so we omit its proof.

\section{Analysis of the $\beta_n \asymp \sqrt{\log n}$ scaling for i.i.d.\ Gaussian Scores}
\label{app:gaussian-supercritical}

This appendix provides a short heuristic derivation of the $\beta_n \asymp \sqrt{\log n}$ scaling for softmax attention when the raw attention scores  $a_{1},\dots,a_{n}$ are modeled as independent $\mathcal{N}(0,1)$ random variables. 
The purpose is to contrast the behavior of this Gaussian setting with the geometric setting analyzed in the main text, where pairwise score gaps remain $O(1)$ and the critical scale becomes $\beta_n \asymp \log n$.

Let $a_1,\dots,a_n$ be i.i.d.\ $\mathcal{N}(0,1)$ and denote by
\[
a_1^\downarrow \ge a_2^\downarrow \ge \cdots \ge a_n^\downarrow
\]
their order statistics. Set
\[
t_n := \Phi^{-1}\!\left(1 - \frac{1}{n}\right) \sim \sqrt{2\log n},
\]
where $\Phi$ is the standard normal CDF. It is classical (see Theorem~2.1.1 in~\cite{de2006extreme})
that for any fixed $k \ge 2$,
\begin{equation}
\label{eq:ev-spacing}
\bigl(t_n(a_1^\downarrow - a_2^\downarrow), \dots, t_n(a_1^\downarrow - a_k^\downarrow)\bigr)
\;\Rightarrow\;
(E_1,\; E_1 + E_2,\; \dots,\; E_1+\cdots+E_{k-1}),
\end{equation}
where $(E_i)_{i\ge1}$ are i.i.d.\ $\mathrm{Exp}(1)$ random variables. Thus the gap scale between neighboring scores in the Gaussian model is $1/\sqrt{\log n}$.

Define the softmax weights and the top weight
\[
A_j(n) := \frac{\exp(\beta_n a_j)}{\sum_{k=1}^n \exp(\beta_n a_k)},
\qquad
A_1^\downarrow(n) := \max_{1\le j\le n} A_j(n).
\]
Using Equation~(\ref{eq:ev-spacing}), one can write heuristically
\begin{equation}
\label{eq:softmax-denominator}
A_1^\downarrow(n)
\approx 
\frac{1}{
1 + \sum_{k=2}^\infty 
\exp\!\bigl\{ - (\beta_n/\sqrt{2\log n})\, S_{k-1} \bigr\}
},
\qquad 
S_{k-1} := E_1 + \cdots + E_{k-1} \,.
\end{equation}
where we have replaced the finite sum by an infinite series, which captures the leading asymptotics. The critical scaling $\sqrt{\log n}$ shows up in Equation~(\ref{eq:softmax-denominator}). For example, in the supercritical regime $\beta_n \gg \sqrt{\log n}$, we have the following:
\begin{proposition}[Supercritical regime]
\label{prop:supercritical-gaussian}
If 
\[
\frac{\beta_n}{\sqrt{\log n}} \;\longrightarrow\; \infty,
\]
then $A_1^\downarrow(n) \rightarrow 1$ as $n\to\infty$. In other words, the attention weights concentrate on the top-scoring token.
\end{proposition}

\begin{proof}[Sketch of proof]
In Equation~(\ref{eq:softmax-denominator}), we know the sum
\[
Z = \sum_{k=2}^\infty 
\exp\!\bigl\{ - (\beta_n/\sqrt{2\log n})\, S_{k-1} \bigr\}
\]
is almost surely finite. Since $\beta_n/\sqrt{\log n} \to \infty$, every such term in $Z$ vanishes in probability. Hence the denominator in Equation~(\ref{eq:softmax-denominator}) tends to $1$, and $A_1^\downarrow(n)\to 1$ as $n\to\infty$.
\end{proof}

Proposition~\ref{prop:supercritical-gaussian} shows that the softmax enters a regime in 
which a single index captures asymptotically all attention mass as soon as 
$\beta_n \gg \sqrt{\log n}$.  
The critical scale is thus determined by the requirement
\[
\beta_n(a_1^\downarrow - a_k^\downarrow) = O(k)
\quad\Longleftrightarrow\quad
\beta_n \asymp \sqrt{\log n}.
\]


In the geometric models analyzed in the main text (simplex and almost-simplex assumptions), the gaps between the top pairwise inner products remain of order $O(1)$ as $n \to \infty$. Consequently, the balancing argument in \Cref{sec:intro} yields the critical scaling $\beta_n \asymp \log n$. This stands in clear contrast to REM-type Gaussian models, where the top gaps shrink at the $1/\sqrt{\log n}$ scale and thus produce the critical regime $\beta_n \asymp \sqrt{\log n}$. This explains why REM-type Gaussian models lead to the $\sqrt{\log n}$ scale, whereas geometric models naturally produce the $\log n$ scale that aligns with many practical strategies to avoid mixing in long-context attention.


\bibliographystyle{alpha}
\bibliography{references}

@article{rigollet2025mean,
  title={The mean-field dynamics of transformers},
  author={Rigollet, Philippe},
  journal={arXiv preprint arXiv:2512.01868},
  year={2025}
}

@book{de2006extreme,
  title={Extreme value theory: an introduction},
  author={De Haan, Laurens and Ferreira, Ana},
  year={2006},
  publisher={Springer}
}

@book{lions1971optimal,
  title={Optimal control of systems governed by partial differential equations},
  author={Lions, Jacques Louis},
  volume={170},
  year={1971},
  publisher={Springer}
}

@article{rumelhart1986learning,
  title={Learning representations by back-propagating errors},
  author={Rumelhart, David E and Hinton, Geoffrey E and Williams, Ronald J},
  journal={nature},
  volume={323},
  number={6088},
  pages={533--536},
  year={1986},
  publisher={Nature Publishing Group UK London}
}

@article{bruno2025multiscaleanalysismeanfieldtransformers,
      title={A multiscale analysis of mean-field transformers in the moderate interaction regime}, 
      author={Giuseppe Bruno and Federico Pasqualotto and Andrea Agazzi},
      year={2025},
      journal={{NeurIPS}},
}

@misc{JaiMizSaw25,
	author = {Vishesh Jain and Clayton Mizgerd and Mehtaab Sawhney},
	note = {arXiv:2501.12205},
	title = {The random graph process is globally synchronizing},
	year = {2025}
}

@inproceedings{liu2021swin,
  title={Swin transformer: Hierarchical vision transformer using shifted windows},
  author={Liu, Ze and Lin, Yutong and Cao, Yue and Hu, Han and Wei, Yixuan and Zhang, Zheng and Lin, Stephen and Guo, Baining},
  booktitle={Proceedings of the IEEE/CVF international conference on computer vision},
  pages={10012--10022},
  year={2021}
}

@article{beltagy2020longformer,
  title={Longformer: The long-document transformer},
  author={Beltagy, Iz and Peters, Matthew E and Cohan, Arman},
  journal={arXiv preprint arXiv:2004.05150},
  year={2020}
}

@article{Derrida1981,
  title = {{Random-energy model: An exactly solvable model of disordered systems}},
  author = {Derrida, B.},
  journal = {Phys. Rev. B},
  volume = {24},
  issue = {5},
  pages = {2613--2626},
  year = {1981},
  publisher = {American Physical Society},
  doi = {10.1103/PhysRevB.24.2613}
}

@misc{karagodin2025normalization,
      title={Normalization in Attention Dynamics},
      author={Nikita Karagodin and Shu Ge and Yury Polyanskiy and Philippe Rigollet},
      year={2025},
      note={arXiv:2510.22026}
}

@misc{chen2025residual,
      title={Residual connections provably mitigate oversmoothing in graph neural networks},
      author={Ziang Chen and Zhengjiang Lin and Shi Chen and Yury Polyanskiy and Philippe Rigollet},
      year={2025},
      note={arXiv:2501.00762}
}

@misc{chen2025quantitative,
      title={Quantitative Clustering in Mean-Field Transformer Models},
      author={Shi Chen and Zhengjiang Lin and Yury Polyanskiy and Philippe Rigollet},
      year={2025},
      note={arXiv:2504.14697}
}

@misc{giorlandino2025failuremodesdeeptransformers,
      title={Two failure modes of deep transformers and how to avoid them: a unified theory of signal propagation at initialisation}, 
      author={Alessio Giorlandino and Sebastian Goldt},
      year={2025},
      note={arXiv:2505.24333}
}

@misc{puvvada2025swan,
      title={Swan-gpt: An efficient and scalable approach for long-context language modeling},
      author={Krishna C Puvvada and Faisal Ladhak and Santiago Akle Serrano and Cheng-Ping Hsieh and Shantanu Acharya and Somshubra Majumdar and Fei Jia and Samuel Kriman and Simeng Sun and Dima Rekesh and others},
      year={2025},
      note={arXiv:2504.08719}
}

@misc{bai2023qwen,
      title={Qwen technical report},
      author={Jinze Bai and Shuai Bai and Yunfei Chu and Zeyu Cui and Kai Dang and Xiaodong Deng and Yang Fan and Wenbin Ge and Yu Han and Fei Huang and others},
      year={2023},
      note={arXiv:2309.16609}
}

@misc{peng2023yarn,
      title={Yarn: Efficient context window extension of large language models},
      author={Bowen Peng and Jeffrey Quesnelle and Honglu Fan and Enrico Shippole},
      year={2023},
      note={arXiv:2309.00071}
}

@misc{nakanishi2025scalable,
      title={Scalable-Softmax Is Superior for Attention},
      author={Ken M Nakanishi},
      year={2025},
      note={arXiv:2501.19399}
}

@misc{andrew25,
      title={Synchronization of mean-field models on the circle}, 
      author={Yury Polyanskiy and Philippe Rigollet and Andrew Yao},
      year={2025},
      note={arXiv:2507.22857}
}

@article{geshkovski2023mathematical,
  title={A mathematical perspective on transformers},
  author={Geshkovski, Borjan and Letrouit, Cyril and Polyanskiy, Yury and Rigollet, Philippe},
  journal={Bull. Amer. Math. Soc.},
  year={2025}
}

@article{geshkovski2023emergence,
  title={The emergence of clusters in self-attention dynamics},
  author={Geshkovski, Borjan and Letrouit, Cyril and Polyanskiy, Yury and Rigollet, Philippe},
  journal={Advances in Neural Information Processing Systems},
  volume={36},
  year={2024}
}

@inproceedings{bruno2024emergence,
title={Emergence of meta-stable clustering in mean-field transformer models},
author={Bruno, Giuseppe and Pasqualotto, Federico and Agazzi, Andrea},
booktitle={International Conference on Learning Representations},
year={2025}
}

@misc{geshkovski2024dynamic,
      title={Dynamic metastability in the self-attention model},
      author={Borjan Geshkovski and Hugo Koubbi and Yury Polyanskiy and Philippe Rigollet},
      year={2024},
      note={arXiv:2410.06833}
}

@misc{abdalla2022expander,
      title={Expander graphs are globally synchronising},
      author={Pedro Abdalla and Afonso S Bandeira and Martin Kassabov and Victor Souza and Steven H Strogatz and Alex Townsend},
      year={2022},
      note={arXiv:2210.12788}
}

@inproceedings{dong2021attention,
  title={Attention is not all you need: Pure attention loses rank doubly exponentially with depth},
  author={Dong, Yihe and Cordonnier, Jean-Baptiste and Loukas, Andreas},
  booktitle={International Conference on Machine Learning},
  pages={2793--2803},
  year={2021},
  organization={PMLR}
}

@inproceedings{he2016deep,
  title={Deep residual learning for image recognition},
  author={He, Kaiming and Zhang, Xiangyu and Ren, Shaoqing and Sun, Jian},
  booktitle={Proceedings of the IEEE Conference on Computer Vision and Pattern Recognition},
  pages={770--778},
  year={2016}
}

@article{noci2022signal,
  title={{Signal propagation in transformers: Theoretical perspectives and the role of rank collapse}},
  author={Noci, Lorenzo and Anagnostidis, Sotiris and Biggio, Luca and Orvieto, Antonio and Singh, Sidak Pal and Lucchi, Aurelien},
  journal={Advances in Neural Information Processing Systems},
  volume={35},
  pages={27198--27211},
  year={2022}
}

@misc{cowsik2024geometric,
      title={Geometric Dynamics of Signal Propagation Predict Trainability of Transformers},
      author={Aditya Cowsik and Tamra Nebabu and Xiao-Liang Qi and Surya Ganguli},
      year={2024},
      note={arXiv:2403.02579}
}

@misc{karagodin2024clustering,
      title={Clustering in causal attention masking},
      author={Nikita Karagodin and Yury Polyanskiy and Philippe Rigollet},
      year={2024},
      note={arXiv:2411.04990}
}

@article{hutchinson1989stochastic,
  title={A stochastic estimator of the trace of the influence matrix for Laplacian smoothing splines},
  author={Hutchinson, Michael F},
  journal={Communications in Statistics-Simulation and Computation},
  volume={18},
  number={3},
  pages={1059--1076},
  year={1989},
  publisher={Taylor \& Francis}
}

\end{document}